\newcommand\ddfrac[2]{{\displaystyle\frac{\displaystyle #1}{\displaystyle #2}}}
\newcommand{\Ebb}{\mathbb{E}}
\newcommand{\Ibf}{\mathbf{I}}
\newcommand{\diff}{\text{d}}
\newcommand{\Ucal}{\mathcal{U}}
\newcommand{\ttrain}{\text{train}}
\newcommand{\ttest}{\text{test}}
\newtheorem{theorem}{Theorem}
\newtheorem{lemma}{Lemma}
\newtheorem{definition}{Definition}
\title{Towards Robust Off-policy Learning for Runtime Uncertainty}
\author {
    Da Xu,\textsuperscript{\rm 1}
    Yuting Ye, \textsuperscript{\rm 2}
    Chuanwei Ruan \textsuperscript{\rm 3}
    Bo Yang \textsuperscript{\rm 4}
}
\begin{document}

%

%

\maketitle

\begin{abstract}
Off-policy learning plays a pivotal role in optimizing and evaluating policies prior to the online deployment. However, during the real-time serving, we observe varieties of interventions and constraints that cause inconsistency between the online and offline settings, which we summarize and term as runtime uncertainty. Such uncertainty cannot be learned from the logged data due to its abnormality and rareness nature. To assert a certain level of robustness, we perturb the off-policy estimators along an adversarial direction in view of the runtime uncertainty. It allows the resulting estimators to be robust not only to observed but also unexpected runtime uncertainties. Leveraging this idea, we bring runtime-uncertainty robustness to three major off-policy learning methods: the inverse propensity score method, reward-model method, and doubly robust method. We theoretically justify the robustness of our methods to runtime uncertainty, and demonstrate their effectiveness using both the simulation and the real-world online experiments.
\end{abstract}

\section{Introduction} \label{sec:intro}
The offline optimization and evaluation have been studied intensively in the past few years, as deploying a sub-optimal policy for real-time experiments can be costly and even risky \cite{dudik2011doubly,bottou2013counterfactual,swaminathan2015counterfactual,athey2017efficient}. Nevertheless, it is arguably true that off-policy learning can barely represent real-world scenarios except for a few ideal settings. In particular, real-time policy deployments are inevitably subject to various interventions and constraints that can \emph{not} be unaccounted for in standard off-policy learning. We categorize these unexpected events as \emph{runtime uncertainty} since they are brought about by some anomalous events of the online execution mechanism. Examples include:
\begin{enumerate}
\item in e-commerce recommendation, the product displacement is sometimes subject to real-time business factors such as trending popularity;
\item in personalized online advertisement, during peak hours, the policy execution may be replaced by the backup plan if the response-time agreement can not be satisfied;
\item in autonomous driving, the self-driving vehicle can be hit by accidents, harsh weather, or intrusions that limit the ability to carry out the designed policy.
\end{enumerate}
We conclude that the impacts of runtime uncertainty are often reflected in the changes to the policy's execution. In the above examples, one may expect the following consequences: 1). the original exposure probabilities are adjusted by the instant trends of the items; 2). the exceeding web traffic will be directed to the rollback setting, e.g., non-personalized policy, rather than calling the online inference service of the designed policy; 3). the emergency mechanism, e.g., emergent brake, is triggered and takes over the running policy.


In stark contrast to the issue of a non-stationary environment where the underlying reward mechanism changes over time \cite{bubeck2012best}, runtime uncertainty may not alter the reward mechanism. Additionally, runtime uncertainty does not cause distribution change or missing observation of the contextual features, so our problem clearly distinguishes from the setting of distribution-robust and confounding-robust policy optimization (see Section \ref{sec:related} for detail). The major challenge here is that the designed policy cannot be executed as-is due to unexpected events on the executors, which leads to discrepancies between the online and offline settings. It is implausible to remove or characterize these exceptional uncertainties by learning from the logged data, since they are abnormal and intractable by nature. As a consequence, the runtime uncertainty raises a novel problem for off-policy learning, which to the best of our knowledge, has not been studied. 

Our strategy is to incorporate a certain level of robustness against potential runtime uncertainty via an offline max-min learning.
We search for an adversarial scenario where runtime uncertainties induce the worst impact on a value function of interest.
Since runtime uncertainties act on policy's execution, we adversarially perturb the logging policy in search for the worst case as if additional runtime uncertainties have played a role. Then we identify the candidate policy with the best offline performance under this worst case. Specifically, for any candidate policy $\pi$, let $\hat{V}(\pi, \pi_u)$ be the estimated reward, e.g., via inverse-propensity score weighting, assuming that the logged feedback was generated under $\pi_u$ ($u$ represents some perturbation mechanism). The key is to introduce an uncertainty set $\Ucal_{\alpha}(\pi_0)$ that surrounds the logging policy $\pi_0$, constructed by such as the $\ell_2$, $\ell_{\infty}$ or Wasserstein balls with radius $\alpha$. To find the optimal policy that is robust to any possible data-generating mechanism in $\Ucal_{\alpha}(\pi_0)$, we formulate adversarial learning objective as:
\[
\underset{\pi}{\text{maximize}}\min_{\pi_u \in \Ucal_{\alpha}(\pi_0)} \hat{V}(\pi, \pi_u),
\]
where $\pi_u$ plays the role of the uncertainty-perturbed policy as if it generated the logged feedback. We can also evaluate the learned policy in such an adversarial scenario. When $\alpha=0$, we revert to the standard off-policy learning where the solution is only optimal under $\pi_0$ and can be sensitive to future online uncertainties. We will further elaborate on the objective and the uncertainty set in Section \ref{sec:bound}.

Based on this framework, we enhance the robustness of the most common offline estimation methods, i.e., the reward-model (RM) method (Q-learning) \citep{jin2018q}, the inverse propensity weighting (IPS) method \citep{horvitz1952generalization} and the doubly robust (DR) method \citep{robins1994estimation}. We provide a practical characterization of the impact of runtime uncertainty. We find it gives the most interpretable and tractable solutions to measure the point-wise deviation from the original design to the final execution, which corresponds to constructing $\Ucal$ using the $\ell_{\infty}$ distance. Then, we study the bounds that reveal how the estimators may fluctuate if the same uncertainty were applied to the logging policy, i.e. the range of $\big[\min_{\pi_u \in \Ucal(\pi_0)} \hat{V}(\pi, \pi_u), \max_{\pi_u \in \Ucal(\pi_0)} \hat{V}(\pi, \pi_u)\big]$ for any given $\pi$. 
The bounds provide a scope of the possible performances, which in turn enables developing an efficient algorithm for the max-min training objective. Furthermore, we study the generalization behavior of the proposed algorithm and rigorously analyze how it leads to robust estimators.

We conduct comprehensive simulation studies to examine the effectiveness of the proposed approach. We also conduct real-world online testings on an e-commerce platform, where our approach compares favorably to standard offline learning. We conclude our contributions as follows.
\begin{itemize}
    \item We study the novel problem of obtaining a runtime-robust policy in the context of off-policy learning.
    \item We design a max-min framework to devise a runtime-robust policy and propose the optimization procedures to bound the estimators in the adversarial setting.
    \item We propose an off-policy algorithm for robust optimization, and theoretically show the tradeoffs and guarantees.
    \item We rigorously examine the effectiveness of our approach via both simulation and real-world experiments.
\end{itemize}

\section{Background and Related work}\label{sec:related}
\begin{table*}[ht]
\centering
\caption{A brief comparison of the runtime uncertainty to other four types of robustness learning.}\label{tbl:uncertainty_comparison}
\resizebox{0.95\linewidth}{!}{
\begin{tabular}{|c|c|p{3cm}|p{6cm}|}
\hline
  & Data distribution &~~~~~~~~~~~~~~Goal& ~~~~~~~~~~~~~~~~~Source of uncertainty \\ \hline
Non-stationary environment                                                         & \begin{tabular}[c]{@{}c@{}}$f_{\ttrain}(Y | X) = f_0(Y | X)$\\ $f_{\ttest}(Y| X) \neq f_0(Y | X)$\end{tabular}                                                                & Adapt $\hat{f}_{\ttrain}$ to $f_{\ttest}$                                                     & $f_{\ttrain}$ and $f_{\ttest}$ only overlap to some extent with perhaps shared common structures.                                                     \\ \hline
Unobserved Confounder                                                      & \begin{tabular}[c]{@{}c@{}}$f_{\ttrain}(X, Y) \neq f_0(X, U, Y)$\\ $f_{\ttest}(X, U, Y) = f_0(X, U, Y)$\end{tabular}                                              & Develop a policy for $f_0(X, U, Y)$                     & Unable to observe some confounders $U$.                                                                                                                 \\ \hline
\begin{tabular}[c]{@{}c@{}}Noisy Label \&\\ Measurement Error\end{tabular} & $f_{\ttrain}(X, Y) \neq f_0(X, Y)$                                                                                                                                                    & Recover the true $f_0$                                                                                             & $X$ or $Y$ might be contaminated.                                                                                                                                                                   \\ \hline
Adversarial Learning                                                       & \begin{tabular}[c]{@{}c@{}}$f_{\ttrain}(X, Y) = f_0(X, Y)$\\ $f_{\ttest}(X, Y) \approx f_0(X, Y)$\end{tabular}                            & Make $\hat{f}_{\ttrain}$ robust to adversarial examples. & $f_{\ttest}$ and $f_{\ttrain}$ are similar, but there are adversarially designed cases that drive the former away from the latter.           \\ \hline
Runtime Uncertainty                                                        & \begin{tabular}[c]{@{}c@{}}$f_{\ttrain}(X, Y) \neq f_0(X, Y)$\\ $f_{\ttest}(X, Y) \neq f_0(X, Y)$\\ $f_{\ttrain}(X, Y) \neq f_{\ttest}(X, Y)$\end{tabular} & Make $\hat{f}_{\ttrain}$ robust to runtime uncertainties & $f_{\ttrain}$ and $f_{\ttest}$ deviate from $f_0$ irregularly, so we aim to develop a policy that is  robust to future uncertainty in $f_{\ttest}$. \\ \hline
\end{tabular}
}
\vspace{-0.3cm}
\end{table*}

Different types of robust machine learning and optimization have been intensively studied in the literature. They provide powerful tools to tackle various problems affected by uncontrolled factors \citep{ben2009robust,bertsimas2011theory}. To better understand the runtime uncertainty, which has not been investigated before, we compare it with other four kinds of robustness, i.e., non-stationary environment, unobserved confounder, noisy label \& measurement error, as well as adversarial learning.

We use $f_0$ to denote a reference distribution of the covariates (or treatment) $X$, outcome (or reward) $Y$, and unknown factors $U$, whose interpretations will be made case-specific. We use $f_{\ttrain}$, $f_{\ttest}$ and $\hat{f}_{\ttrain}$ to denote the training distribution, the testing distribution, and the learned distribution based on the training data, respectively. See Table \ref{tbl:uncertainty_comparison} for the summarized comparisons.

\textbf{Non-stationary environment (mechanism change)}. In the context of off-policy learning, the uncertainties (or known as mechanism change) caused by non-stationary environment have been studied in the previous literature \citep{dudik2014doubly,kallus2018confounding,jagerman2019people,si2020distributional,bareinboim2015bandits}. This field focuses on a problem where the conditional distribution for $Y | X$ under $f_{\ttest}$ differs from that of the training distribution $f_{\ttrain} :=f_0$. It is caused by changes in environments, e.g., customers' interests change over time. 
To obtain a policy that can work for $f_{\ttest}$, the precondition is that there exists shared structures between $f_{\ttrain}$ and $f_{\ttest}$.  

\textbf{Unobserved Confounder}. When there exist unobserved confounders $U$ that affect both the response $Y$ and the treatment $X$, the true underlying distribution $f_0$ is a function of $X$, $U$ and $Y$, but only $f_{\ttrain}(X, Y)$ is observed. The policy developed ignoring these confounders can be sub-optimal and risky to deploy. Research on this topic attempts to develop a robust policy that does not malfunction in the worst case of $f_{\ttest} = f_0$. In particular, the sensitivity analysis is often employed to characterize the robustness of inference outcome when the non-confounding causal assumption is violated to various extents, and confounding-robust techniques are proposed accordingly \citep{rosenbaum1983assessing,rosenbaum2009amplification,ding2016sensitivity,zhao2019sensitivity,kallus2018confounding}. 

\textbf{Noisy Label \& Measurement Error}. The observed response or reward $Y$ may well be contaminated during the collection of large datasets, which is referred to as noisy label learning \citep{natarajan2013learning, northcutt2021confident, zheng2020error, ghosh2017robust}. In addition, the field of measurement error concerns the case where there exist some errors in measuring the covariates $X$, or $X$ is blurred by some systematic noise \citep{neumayer2017robustness, blackwell2017unified, ye2021binomial}. Most studies in this venue focus on recovering the true underlying distribution $f_0(X, Y)$ given the noisy observed training distribution $f_{\ttrain} (X, Y)$. 

\textbf{Adversarial Learning}. Recent years have seen a flurry of studies on the development of algorithms for generating and learning from adversarial examples \citep{goodfellow2014explaining, kurakin2016adversarial,carlini2016towards,madry2017towards}. Such maliciously perturbed examples show little difference from original samples in human perception but can mislead machine learning models to incorrect decisions. In this setting, we still expect $f_{\ttrain}(X, Y) = f_0(X, Y)$, while $f_{\ttest}(X, Y) \approx f_0(X, Y)$ since for the designed $X$ that may fool the algorithm, $f_{\ttest}(X, Y) \neq f_0(X, Y)$ . 

\textbf{Runtime Uncertainty}.
The runtime uncertainty is caused by unexpected events during execution. It drives $f_{\ttrain}$ and $f_{\ttest}$ away from the expected $f_0$ generated by the designed policy. The changes in distributions are irregular and unstructured, which suggests that: 1). $f_{\ttrain} \neq f_0$ and $f_{\ttest} \neq f_0$; 2). the deviations are unsystematic and cannot be learned from data. 
In spite of certain similarities, the properties of the runtime perturbations fundamentally differ from the above scenarios:
\begin{itemize}
    \item the distribution-robust methods only assume the perturbations of the contextual features $X$ \cite{si2020distributional}; the perturbations in the noisy label \& measurement error occur marginally to the response $Y$ or the covariates $X$; the adversarial learning increases the likelihood of seeing the problematic covariates $(X, Y)$ that the machine can make mistakes on. On the other hand, the runtime perturbations allow the entire policy to be perturbed;
    \item by the definition of confounders, most confounding-robust methods assumes perturbations to be independent of the actions that were taken, e.g., $\pi(a|x,u) \propto \pi_0(a|x) + \lambda u$ \cite{kallus2018confounding}; however, the nature of runtime uncertainty is much more complex since its effect can be dependent on the actions, so the perturbation should be able to adapt to any given policy.
\end{itemize}

Another line of research studies the estimator instability caused by the variance issues of specific off-policy estimators \citep{swaminathan2015self,ma2019imitation,farajtabar2018more,xie2019towards,thomas2016data,vlassis2019design}. Their methods for improving the estimators' stability can be adapted to our solutions.

\section{Preliminary} \label{sec:prelim}
In this section, we introduce the notations, basic concepts, problem setup as well as the off-policy estimators of interest. 

\textbf{Notation}. Let $\mathcal{X}$ be the context (feature) space, $\mathcal{A}=\{1,\ldots,k\}$ be the action space, and $r(a,x)$ be the fixed \emph{value} (reward or regret) that is revealed under action $a \in \mathcal{A}$ and context $x\in\mathcal{X}$. The challenge of offline learning and evaluation is largely due to the partial-observation of the complete reward --- we do not know the rewards of untaken actions. 
For an individual with context $x^{(i)}$ and provided with action $a^{(i)} \in \mathcal{A}$ at the $i^{\text{th}}$ round, the feedback data collected after $T$ rounds is given by the set of triplets $h_T=\big\{(x^{(i)}, a^{(i)}, r(a^{(i)},x^{(i)}))\big\}_{i=1}^T$. 
The \emph{logging policy}, which gives the conditional probability of an action under particular context and history, is denoted by: $\pi_0\big(a | x^{(T+1)}, h_T\big)$.
Since our primary focus is runtime uncertainty, we assume the \emph{logging} policy is \emph{stationary} and the contexts are static such that $\pi_0\big(a | x^{(T+1)}, h_T\big) = \pi_0\big(a|x^{(T+1)}\big)$. 

\textbf{Value function}. For off-policy learning, given the context $x$ and the complete reward $r(a,x)$ for all actions, the \emph{value} of a policy is: $V(\pi) = \mathbb{E}_{\pi(a|x)}[r(a,x)]$. Policy evaluation estimates the value $\hat{V}(\pi)$ using the collected feedback data $\{(x_i, a_i, r_i)\}_{i=1}^n$ and the logging policy $\pi_0(a_i|x_i)$ if available. We aim at \emph{reward-maximization} unless otherwise specified. Policy optimization address the problem of finding the optimal policy from a parametric family of candidate policies $\mathcal{F}$, for instance, $\pi^* = \arg \max _{\pi \in \mathcal{F}} \hat{V}(\pi)$.

Estimating the value of an alternative policy $\pi$ is challenging because we do not observe the \emph{potential value} (defined as below) for the actions that are not selected.

\begin{definition}
\emph{
A \emph{potential value} $R(a,x)$ is the \emph{value} that would have been observed if the individual with context $x$ received action $a$. 
}
\end{definition}

By definition, the reward $r(a,x)$ is an averaged version of $R(a,x)$ that integrates out all the possible actions under $\pi_0$:
\begin{equation} 
\label{eqn:potential-value}
\begin{split}
 r(a, x) &:= \Ebb_{\pi_0} \big[R(a,x) \big| X = x\big] \\ 
 & = \sum_{\tilde{a} \in \mathcal{A}} \Ebb\big[R(a,x) \,\big|\, A = \tilde{a}, X = x\big] \pi_0(\tilde{a}|x),
\end{split}
\end{equation}
which takes the form of the \emph{averaged potential value (APV)}. In view of Pearl’s framework \cite{pearl2018book}, $R(a, x)$ is a rung-3 quantity and $r(a,x)$ is a rung-2 quantity that averages out the counterfactual effect. This viewpoint has a profound implication for offline evaluation. For example, an item is tagged with a price $a$ while the true price suggested by the market is $\tilde{a}$. When $\tilde{a}$ is larger than $a$, the quantity $\Ebb \big [ R(a, x) \, \big | \, A = \tilde{a}, X = x\big ]$ reveals how the customer may react to the lowered price. In theory, averaging all possible actions $\tilde{a}$ forms a comprehensive and systematic evaluation for any specified action $a$ and context $x$. 

\textbf{Offline estimators}. It is unrealistic to observe $(r(a), \tilde{a}, x_i)$ for $a \neq \tilde{a}$ due to the partial-observation nature. A huge body of literature has been devoted to estimating the averaged potential value $\Ebb_{\pi_0} \big[R(a,x) \big| X = x\big]$, such as the potential outcome models \citep{robins1994estimation}, structural models \citep{robins2000marginal,pearl2009causality}, two-stage regression models \citep{angrist1995two}. Broadly speaking, they attempt to model the reward mechanism, and is also referred to as the Q-learning in reinforcement learning \citep{sutton2018reinforcement}. Without loss of generality, we refer to them as the \emph{reward-model (RM)} method denoted by $\hat{V}_{\text{RM}}$.
\begin{equation}
\label{eqn:reg}
 \hat{V}_{\text{RM}}(\pi;\pi_0) = \frac{1}{n} \sum_{i=1}^n \sum_{a \in \mathcal{A}} \pi(a|x_i) \hat{r}(a , x_i),
\end{equation}
where $hat{r}$ is an estimate of Eq. \eqref{eqn:potential-value}. RM is often subject to model misspecifications that lead to biased estimation. Using the idea of importance sampling, the \emph{inverse-propensity score method} (IPS) described below can estimate the policy value unbiasedly \citep{horvitz1952generalization}. By correcting for the shift in action probability between $\pi_0$ and $\pi$, IPS is less prone to the bias issues.
\begin{equation}
\label{eqn:ips}
 \hat{V}_{\text{IPS}}(\pi;\pi_0) = \frac{1}{n} \sum_{i=1}^n  \frac{\pi(a_i|x_i)}{\pi_0(a_i|x_i)} r_i.
\end{equation}

In practice, the variance-stabilized versions of IPS, e.g., the normalized and truncated IPS, are considered more often \citep{vlassis2019design,gilotte2018offline}. We defer the discussion to the appendix to avoid unnecessary repetitions.

The \emph{doubly robust estimator} (DR) is a popular \emph{control variate} method that effectively leverages both RM and IPS \citep{robins1994estimation}. The notion comes from the fact that DR gives consistent estimation when either RM or IPS is consistent. Here, $\hat{V}_{\text{DR}}$ is given by:
\begin{equation*}
  \label{eqn:dr}
\begin{split}
    \hat{V}_{\text{DR}}(\pi;\pi_0) = & \frac{1}{n}\sum_{i=1}^n \big \{ \hat{r}(x_i) +  \frac{\pi(a_i|x_i)}{\pi_0(a_i|x_i)} \big(r_i - \hat{r}(a_i,x_i)\big) \big \},
\end{split}
\end{equation*}
where $\hat{r}(x_i)$ is the RM estimation: $\sum_{a\in \mathcal{A}} \pi(a|x_i) \hat{r}(a,x_i)$.

\section{Offline Estimators under $\ell_{\infty}$ Uncertainty} \label{sec:bound}

Unlike other domains where $\Ucal_{\alpha}$ can have many options, finding the right uncertainty set is critical to the interpretation and effectiveness of our approach. The reasons are:
\begin{enumerate}
    \item The formulation of $\Ucal_{\alpha}$ should conform to how online uncertainty arises in practice, which in turn explains the type of robustness we are asserting. For instance, the $\ell_2$ ball would suffice if we believe the impact of online uncertainty is close to average on each action.
    \item The constraint optimization of $\min_{\pi_u \in \Ucal_{\alpha}(\pi_0)} \hat{V}(\pi, \pi_u)$ can be intractable or extremely challenging to solve; for instance, when $\Ucal_{\alpha}(\pi_0)$ is given by the Wasserstein ball\footnote{ERM under Wasserstein's constraint are often converted to another min-max optimization \cite{lee2018minimax}; which means our objective will become a max-min-max problem.}.  
\end{enumerate}

In our study, we find the $\ell_{\infty}$ distance gives a nice tradeoff in terms of the \emph{interpretability}, \emph{practicality}, and the \emph{robustness guarantee}. We first discuss the interpretation of using $\ell_{\infty}$ distance. Recall from Section \ref{sec:intro} that runtime uncertainty causes unexpected interventions and constraints to the policy's execution, which can impact $\pi(a|x)$ for \emph{any} $a$ and $x$. It means runtime uncertainty can cause deviation that is best measured in a point-wise fashion, e.g. $\max_{a} \big|\pi_u(a|x) - \pi_0(a|x)\big|$ for a given $x$, rather than by some average, e.g. $\frac{1}{k}\sum_{a} \big|\pi_u(a|x) - \pi_0(a|x)\big|$. To better leverage the fraction format in IPS and DR, we 
replace the absolute difference by the ratio of: $\max\{\frac{\pi_{u}(a | x)}{\pi_{0}(a | x)}, \frac{\pi_{0}(a | x)}{\pi_{u}(a | x)} \}$. If we treat the policies $\pi_0$ and $\pi_u$ as vectors, by the same essence, their deviation is exactly measured by the $\ell_{\infty}$ norm. Therefore, the uncertainty set $\Ucal_{\alpha}(\pi_0)$ follows:
\begin{equation}
\label{eqn:perturb-def}
    \Big\{\pi_u: \max_{a\in\mathcal{A},x\in\mathcal{X}} \max \Big\{\frac{\pi_{u}(a | x)}{\pi_{0}(a | x)}, \frac{\pi_{0}(a | x)}{\pi_{u}(a | x)} \Big\}\leq e^{\alpha} \Big\}.
\end{equation}
Here, the adversarialness endowed in $\pi_u$ can depend on the actions and contexts given the logged data, e.g. $\pi_u(a|x) \propto \pi_0(a|x) + u(a,x)$ where $u$ is some random function. It means the perturbation can be made \emph{policy-specific}, as opposed to policy-agnostic case where the perturbation is uniform for all the actions (such as in confounding-robust optimization). It more closely resembles the mechanism of real-world online uncertainty.
For the sake of notation, we also denote the constraints by the short hand: $e^{-\alpha}\leq \pi_u / \pi_0 \leq e^{\alpha}$. 
We now establish the minimax objective for robust off-policy learning under the $\ell_{\infty}$ uncertainty set:
\begin{equation}
\label{eqn:robust-def1}
    \underset{\pi}{\text{maximize}} \min_{\pi_u:\, e^{-\alpha} \leq \pi_0/\pi_{u} \leq e^{\alpha}} \hat{V}(\pi; \pi_u).
\end{equation}

Note that the candidate policy $\pi$ is not involved in the constraints of the minimization step. Therefore, if we have a subroutine that efficiently computes for any candidate $\pi_{\theta}$:
\begin{equation}\label{eqn:min_val}
\tilde{V}(\pi_{\theta}) := \min_{} \hat{V}(\pi_{\theta}; \pi_u) \text{ s.t. } e^{-\alpha} \leq \pi_0/\pi_{u} \leq e^{\alpha}, 
\end{equation}
or provides the lower bound, i.e. $\underline{\hat{V}(\pi_{\theta})} \leq \tilde{V}(\pi_{\theta})$, we can divide and conquer the minimax objective. This technique resembles the well-known \emph{expectation-maximization} and \emph{minorize-maximization} algorithms \citep{lange2016mm,dempster1977maximum}, whose implications are discussed in the appendix. In the sequel, we focus on deriving the bounds $\underline{\hat{V}(\pi_{\theta})}$ for the three off-policy estimators of interest.

\textbf{The Reward-model Estimator}.
To derive the constrained lower bound of $\hat{V}_{RM}$ defined in \eqref{eqn:reg}, it amounts to bounding the APV of \eqref{eqn:potential-value} such that $\pi_u$ satisfies the constraint \eqref{eqn:perturb-def}. We 
convert the constraint optimization into a subproblem of the standard ERM, as we show in Lemma \ref{lemma:apv-eqv}. The technical details and proof are deferred to the appendix.

\begin{lemma}
\label{lemma:apv-eqv}
When $\pi_u$ satisfies \eqref{eqn:perturb-def}, then
\begin{eqnarray*}
  &&\Ebb_{\pi_u} \big [R(a, x)|A = a', X = x\big ]\nonumber\\
  &\geq& \min_{f_{a, a'}(\cdot)} \Ebb \big[\ell _{\alpha} \big(R(a,x) \, , \, f_{a, a'}(x)\big) \,\big|\, A=a, X=x \big],\nonumber \label{eqn:ERM}
\end{eqnarray*}
where the loss function $\ell_{\alpha}$ is specified by:
\begin{equation}
\label{eqn:auxiliary-loss}
\begin{split}
    & \ell _{\alpha} \big(R(a,x) \, , \, f_{a, a'}(x))\big) = \big\{ R(a,x)  - f_{a, a'}(x) \big\}^2_{+} \\
    & \qquad + e^{2\alpha}  \big\{ R(a,x) - f_{a, a'}(x) \big\}^2_{-}.
\end{split}
\end{equation}
Here, $\{\cdot\}_{+}$ and $\{\cdot\}_{-}$ are respectively the positive and negative parts. Since the potential value $R(a,x)$ is observed given $A=a$, the above setting describes a subproblem of the ERM.
\end{lemma}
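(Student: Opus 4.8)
The plan is to bound the perturbed averaged potential value of \eqref{eqn:potential-value} from below by a worst-case expectation over a family of reweightings of the \emph{observed} conditional law of $R(a,x)$, and then to recognize that worst case as the minimizer of the asymmetric least-squares loss $\ell_\alpha$ in \eqref{eqn:auxiliary-loss}. Concretely, I would first write $\Ebb_{\pi_u}\big[R(a,x)\mid A=a', X=x\big]$ as the integral of $R(a,x)$ against a perturbed conditional measure $Q$, and argue by a change of measure that $Q\ll P$, where $P=P(\cdot\mid A=a, X=x)$ is the conditional law we actually observe, with a Radon--Nikodym derivative $dQ/dP$ driven entirely by the policy ratio $\pi_u/\pi_0$. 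The pointwise constraint \eqref{eqn:perturb-def}, i.e. $e^{-\alpha}\le \pi_u/\pi_0\le e^{\alpha}$, then confines $dQ/dP$ to a multiplicative band of width $e^{2\alpha}$.

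Granting this, for every admissible $\pi_u$ the quantity in question dominates the value of the constrained program
\[
\min_{Q\ll P}\ \Ebb_{Q}\big[R(a,x)\big]\quad\text{s.t.}\quad \sup\tfrac{dQ}{dP}\le e^{2\alpha}\inf\tfrac{dQ}{dP},\ \ \Ebb_{P}\big[\tfrac{dQ}{dP}\big]=1,
\]
so it suffices to evaluate this worst case. To minimize the mean under a density ratio of bounded multiplicative width, the adversary assigns the heavy weight to the downside event $\{R(a,x)<c\}$ and the light weight to $\{R(a,x)>c\}$ in the ratio $e^{2\alpha}:1$. I would then pin down the threshold $c$ self-consistently by requiring that the resulting worst-case mean equal $c$; this reduces to $\Ebb_{P}\big[\{R(a,x)-c\}_{+}\big]=e^{2\alpha}\,\Ebb_{P}\big[\{R(a,x)-c\}_{-}\big]$, which is exactly the first-order stationarity condition for minimizing $\Ebb_{P}\big[\ell_\alpha(R(a,x),c)\big]$. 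Because $e^{2\alpha}\ge 1$ penalizes the negative part more heavily, the minimizer lies below the conditional mean, matching the intuition that runtime uncertainty can only depress the value. Since $R(a,x)$ is observed whenever $A=a$, minimizing $\Ebb_{P}[\ell_\alpha]$ over $f_{a,a'}(\cdot)$ is a bona fide empirical-risk-minimization subproblem solvable from the logged data, which is the claimed characterization.

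The hard part is the change of measure in the first step: expressing the counterfactual, $\pi_u$-reweighted conditional $\Ebb_{\pi_u}[R(a,x)\mid A=a', X=x]$ through the observed conditional $P(\cdot\mid A=a, X=x)$, and checking that the induced $dQ/dP$ inherits precisely the width-$e^{2\alpha}$ band rather than a looser bound. This is where the rung-3 structure of $R(a,x)$ interacts with the propensities and where \eqref{eqn:perturb-def} is genuinely consumed; I expect this to rest on a structural/consistency assumption relating the counterfactual conditionals to the observed ones, with the details deferred to the appendix. A secondary check is the optimality of the two-level worst-case measure, namely that the threshold reweighting truly attains the constrained minimum; this follows from a routine rearrangement argument, since moving any mass against the heavy-on-the-downside pattern can only raise $\Ebb_{Q}[R(a,x)]$. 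The remaining expectile-to-ERM identification is then a standard convex-duality and first-order-condition calculation.
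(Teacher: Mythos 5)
Your overall skeleton is the same as the paper's: a change of measure from the counterfactual conditional law to the observed one, a worst-case minimization over density ratios confined to a multiplicative band of width $e^{2\alpha}$, and the identification of that worst case with the asymmetric-least-squares (expectile) problem defined by $\ell_\alpha$. Your second half is correct and matches the paper's appendix argument: its strong-duality derivation produces exactly your two-level worst-case density $D^* \propto \mathbf{1}[R \geq \lambda] + e^{2\alpha}\,\mathbf{1}[R < \lambda]$, with the threshold pinned down by the stationarity condition $\Ebb\big[\{R-c\}_{+} \mid A=a, X=x\big] = e^{2\alpha}\,\Ebb\big[\{R-c\}_{-} \mid A=a, X=x\big]$, i.e.\ the first-order condition of $\min_f \Ebb[\ell_\alpha(R,f)]$; your statement of this condition is in fact cleaner than the paper's, which contains sign typos, and you correctly state the band constraint in relative form ($\sup \tfrac{dQ}{dP} \le e^{2\alpha}\inf\tfrac{dQ}{dP}$) rather than as absolute bounds on $\tfrac{dQ}{dP}$.

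The genuine gap is your first step, which you yourself flag and defer: you assert that the policy-level constraint $e^{-\alpha} \le \pi_u/\pi_0 \le e^{\alpha}$ confines $dQ/dP$ to a band of width $e^{2\alpha}$, but you supply no mechanism by which a constraint on action probabilities transfers to the conditional laws of $R(a,x)$ given different actions, nor any explanation of why the exponent doubles from $\alpha$ to $2\alpha$. This is the mathematical heart of the lemma and is where the paper does its real work (its appendix lemma on the Radon--Nikodym bound). The mechanism is: (i) model the runtime perturbation as a latent variable $U$ such that the data are generated under $\pi_U$ and $R(A,X) \perp A \mid X, U$; (ii) apply Bayes' rule to convert the policy-ratio constraint into the posterior-ratio bound $e^{-2\alpha} \le p(U=u \mid A=a', X=x)\big/p(U=u \mid A=a, X=x) \le e^{2\alpha}$ --- the factor $2$ arises precisely here, because the posterior ratio is a product of two policy ratios, one at action $a$ and one at $a'$; (iii) use the tower property together with the conditional independence to show $D(r,x,a,a') = \Ebb\big[L(U,x,a,a') \mid R(a,x)=r, A=a, X=x\big]$, where $L$ is that posterior likelihood ratio, which also establishes the mutual absolute continuity you assumed; and (iv) a continuity/limiting argument in $u$ to conclude $D(r_1,\cdot)/D(r_2,\cdot) \le e^{2\alpha}$ for almost all $r_1, r_2$. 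Without the latent-variable formulation and the conditional independence $R \perp A \mid X, U$, there is no route from \eqref{eqn:perturb-def} to any statement about conditional distributions of $R$ at all, so the step you defer is not a routine consistency assumption but the core of the proof.
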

The result in Lemma \ref{lemma:apv-eqv} holds for all the alternative $a' \in \mathcal{A}$ with $a'\neq a$, so  we only need to compute $f_a := f_{a,a'}$ under any $a'$. The lower bound $\underline{\hat{V}_{\text{RM}}(\pi_{\theta})}$ 
is easily obtained with:
\begin{equation*}
\label{eqn:bound-reg}
\begin{split}
 & \sum_{a' \in \mathcal{A}} \Ebb\big[R(a,x) \,\big|\, A = a', X = x\big] \pi_u(a'|x) \\
 & \geq \min_{\pi_u: e^{-\alpha} \leq \pi_0/\pi_{u} \leq e^{\alpha}} \big\{\big(1- \pi_u(a|x)\big) \hat{f}_{a}(x) + \\
 & \qquad \qquad \qquad \qquad \pi_u(a|x) \Ebb \big[R(a,x) \,\big|\, A=a,x\big] \big\}
\end{split}
\end{equation*}
where $\hat{f}_a$ is the solution to the auxiliary ERM problem in (\ref{eqn:ERM}), and $\Ebb \big[R(a,x) \,\big|\, A=a,x\big]$ can be estimated as in the standard off-policy setting. Therefore, computing the lower bound for RM involves solving a standard and a subproblem of the ERM, which is computationally efficient.
\textbf{Upper bound}: we point out that lower-bounding the DR estimator also requires the upper bound of RM. Using the same arguments, the upper bound can be obtained using a similar auxiliary ERM approach, by replacing the $e^{2\alpha}$ in (\ref{eqn:auxiliary-loss}) with $e^{-2\alpha}$. We denote the upper bounds by $\big\{\hat{g}_a\big\}_{a \in \mathcal{A}}$. 

\textbf{The IPS Estimator}.
Lower-bounding the IPS estimator is more straightforward according to our design. The lower-bounding objective in (\ref{eqn:min_val}) directly becomes:
\begin{equation*}
\label{eqn:optimization-IPS}
\begin{array}{rl}
    \text{minimize} & \frac{1}{n} \sum_{i=1}^n  \ddfrac{\pi(a_i|x_i)}{p(a_i | x_i)} r_i \\[10pt]
    \text{s.t.} & e^{-\alpha} \pi_{0}(a_i | x_i) \leq p(a_i | x_i) \leq e^{\alpha} \pi_{0}(a_i | x_i), \\[5pt]
    & \sum_{a \in \mathcal{A}} p(a|x_i) = 1, \text{ for } i\in[n],
\end{array}
\end{equation*}
where we use $p(\cdot|\cdot)$ to denote the optimization variables.
The second set of constraints is necessary because it makes sure that the solution constitutes a valid policy. 
The optimization problem for IPS can be solved explicitly using a change of variable: $w(a_i,x_i) := 1/p(a_i|x_i)$. It follows that both the objective and constraints are convex in $w$. In practice, the IPS estimator can suffer from the variance issues, so the variants such as the normalized and truncated IPS are often used instead in practice \citep{vlassis2019design,gilotte2018offline}. 
We defer the discussions for the variant methods to the appendix to avoid unnecessary repetitions. 


\textbf{The DR Estimator}. 
Lower-bounding DR is a straightforward combination of what we have shown for RM and IPS:
\begin{equation*}
\label{eqn:optimization-DR}
\begin{split}
& \text{minimize}\quad  \hat{V}_{\text{DR}}(\pi_{\theta};p,r) :=    \frac{1}{n}\sum_{i=1}^n \Big \{ \sum_{a\in \mathcal{A}} \pi(a|x_i) r(a_i,x_i) \\ 
    & \qquad \qquad \qquad \qquad + \frac{\pi(a_i|x_i)}{p(a_i|x_i)} \big(r_i - r(a_i,x_i)\Big \} \\
& \qquad \text{s.t. } r(a_i,x_i) \in \mathcal{R}(\pi_0, \alpha) \text{ and } p(a_i,x_i) \in \Pi(\pi_0, \alpha),
\end{split}
\end{equation*}
with the constraint sets given by:
\begin{equation*}
\begin{split}
  & \mathcal{R}(\pi_0, \alpha) := \big\{r(a_i, x_i): \hat{f}_{a_i}(x_i) \leq r(a_i,x_i) \leq \hat{g}_{a_i}(x_i); i \in [n]\big\}, \\
  & \Pi(\pi_0, \alpha) := \big\{p(a_i| x_i): e^{-\alpha}  \leq \frac{p(a_i | x_i)}{\pi_{0}(a_i | x_i)} \leq e^{\alpha}; \\
  &\qquad \qquad \qquad \sum_{a \in \mathcal{A}} p(a|x_i) = 1; i\in [n] \big\}.
\end{split}
\end{equation*}
The first set of constraints use the RM bounds to characterize the uncertainty of the reward models, and both the $\hat{f}_a$ and $\hat{g}_a$ can be computed beforehand. Although the objective for DR is not jointly convex in $r(a,x)$ and $w(a,x) := 1/p(a|x)$, it is coordinate-wise convex (affine). Therefore, we can employ any off-the-shelf solver. Also, the objectives for the IPS and DR are separable for each $x_i$, so we can efficiently parallelize the computations for each observation.



\section{Learning Algorithm and Guarantee}

The learning objective can be nonconcave-nonconvex in general, so we cannot switch the $\min$ and $\max$. Therefore, the lower-bounding methods from the previous section, which holds for \emph{any} given $\pi_{\theta}$, play an essential role in finding the approximate solution.
We illustrate how to plug in the lower bounds for DR as an example since it includes both the IPS and RM estimator as special cases (see Algorithm \ref{algo:algo}). 

From the learning-theoretical perspective, it is important to understand how the proposed approach affects the generalization performance while asserting robustness against runtime uncertainty.
In the following theorem, we characterize the generalization of policy improvement of the max-min solution for DR, given any $\alpha>0$.

\begin{theorem}
\label{thm:main}
Suppose that for all $\alpha>0$, there exists a constant $M_{\alpha}$ such that $\max_{a,x} |\hat{f}_a(x)| \leq M_{\alpha}$ and $\max_{a,x} |\hat{g}_a(x)| \leq M_{\alpha}$. Also, we assume $\pi_0(a|x_i) \in (q, 1-q)$ for some $q>0$. Let $\bar{r} := \max_{i} |r_i|$, then for all $\pi_{\theta} \in \mathcal{F}$ and $\forall \delta > 0$, with probability as least $1-\delta$:
\begin{equation*}
\label{eqn:main}
\begin{split}
     & V(\pi_{\theta}) - V(\pi_0) \geq  \underbrace{\min_{p\in \Pi, r \in \mathcal{R}} \hat{V}_{\text{DR}}(\pi_{\theta};p, r) - \hat{V}(\pi_0)}_{\text{I}}  \\
     & \underbrace{- 6\big( \frac{q+1}{q}M_{\alpha} + \bar{r} \big)\sqrt{\frac{2\log \frac{3}{\delta}}{n}}   - 2 \max \big\{M_{\alpha}, \frac{\bar{r}}{q} \big\}\mathcal{R}_n(\mathcal{F})}_{\text{II}},
\end{split}
\end{equation*}
where $\hat{V}(\pi_0)$ is the logging policy's value on the training data, and $V(\pi)=\Ebb \hat{V}(\pi)$ is defined as in Section \ref{sec:prelim}. 
\end{theorem}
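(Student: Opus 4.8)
The plan is to control the true improvement $V(\pi_\theta)-V(\pi_0)$ by a telescoping decomposition whose middle piece is exactly term I. Writing $\tilde V_{\mathrm{DR}}(\pi_\theta):=\min_{p\in\Pi,\,r\in\mathcal{R}}\hat V_{\mathrm{DR}}(\pi_\theta;p,r)$ so that $\mathrm{I}=\tilde V_{\mathrm{DR}}(\pi_\theta)-\hat V(\pi_0)$, I would split
\[
V(\pi_\theta)-V(\pi_0)=\big[V(\pi_\theta)-\tilde V_{\mathrm{DR}}(\pi_\theta)\big]+\mathrm{I}+\big[\hat V(\pi_0)-V(\pi_0)\big].
\]
It then suffices to lower bound the first and third brackets by the negatives of the two pieces of term II. The third bracket is easiest: $\hat V(\pi_0)=\frac1n\sum_i r_i$ has mean $V(\pi_0)$ and summands bounded by $\bar r$, so Hoeffding gives $\hat V(\pi_0)-V(\pi_0)\ge -c\,\bar r\sqrt{\log(1/\delta)/n}$ with high probability, supplying the $\bar r$-scale contribution to term II.

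The first bracket is where the max--min structure and double robustness enter. Since $\pi_0\in\Pi$ holds trivially ($e^{-\alpha}\le 1\le e^{\alpha}$) and the lower endpoints $\hat f=\{\hat f_a\}$ lie in $\mathcal{R}$ by construction, the pair $(\pi_0,\hat f)$ is feasible for the inner minimization, giving the deterministic inequality $\tilde V_{\mathrm{DR}}(\pi_\theta)\le \hat V_{\mathrm{DR}}(\pi_\theta;\pi_0,\hat f)$. The crucial gain is that with the correct propensity $p=\pi_0$ the DR estimator is unbiased for \emph{any} reward model: computing $\Ebb[\hat V_{\mathrm{DR}}(\pi_\theta;\pi_0,\hat f)]$, the importance-weighted residual has conditional mean zero and the plug-in $\hat f$ cancels, leaving $\Ebb[\hat V_{\mathrm{DR}}(\pi_\theta;\pi_0,\hat f)]=V(\pi_\theta)$. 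Therefore $V(\pi_\theta)-\tilde V_{\mathrm{DR}}(\pi_\theta)\ge \Ebb[\hat V_{\mathrm{DR}}(\pi_\theta;\pi_0,\hat f)]-\hat V_{\mathrm{DR}}(\pi_\theta;\pi_0,\hat f)$, and it remains to bound this empirical deviation \emph{uniformly} over $\pi_\theta\in\mathcal{F}$.

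For the uniform bound I would set $g_{\pi_\theta}(x,a,r)=\sum_{a'}\pi_\theta(a'\mid x)\hat f_{a'}(x)+\frac{\pi_\theta(a\mid x)}{\pi_0(a\mid x)}\big(r-\hat f_a(x)\big)$ and bound its per-sample magnitude using $\max_{a,x}|\hat f_a(x)|\le M_\alpha$, $|r|\le\bar r$, $\pi_0>q$ and $\pi_\theta\le 1$: the RM part contributes $M_\alpha$ and the weighted correction contributes $(M_\alpha+\bar r)/q$, producing the $\frac{q+1}{q}M_\alpha$-type constant in term II. A bounded-differences (McDiarmid) argument concentrates $\sup_{\pi_\theta}\big(\Ebb g_{\pi_\theta}-\frac1n\sum_i g_{\pi_\theta}\big)$ around its expectation at rate $\sqrt{\log(1/\delta)/n}$; symmetrization bounds that expectation by the Rademacher complexity of $\{g_{\pi_\theta}\}$; and a Lipschitz/vector-contraction step peels off the coefficients to leave $\mathcal{R}_n(\mathcal{F})$ scaled by $\max\{M_\alpha,\bar r/q\}$. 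Taking $\delta/3$ in each of the three invocations (the two concentration steps and the Hoeffding bound) and a union bound yield the $\log\frac{3}{\delta}$ and the constant factors of term II.

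The step I expect to be the main obstacle is the clean reduction to $\mathcal{R}_n(\mathcal{F})$: the DR integrand couples the policy through both a reward-weighted sum over \emph{all} actions and an importance ratio on the \emph{realized} action, so the contraction must be carried out for the vector-valued map $x\mapsto(\pi_\theta(a'\mid x))_{a'}$ rather than a scalar Lipschitz composition, and the two pieces must be combined without losing the correct $\max\{M_\alpha,\bar r/q\}$ scaling. A secondary subtlety is that $\hat f_a$ is fit from data: the inequality $\tilde V_{\mathrm{DR}}\le\hat V_{\mathrm{DR}}(\pi_\theta;\pi_0,\hat f)$ is deterministic, but the unbiasedness and concentration of the plug-in estimator are only valid after decoupling the nuisance estimation from the evaluation data, so I would treat the RM bounds as fixed functions satisfying the stated $M_\alpha$ bound (or assume sample splitting) before invoking the expectation identity.
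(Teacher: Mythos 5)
Your proposal is correct and takes essentially the same route as the paper's own proof: the paper likewise uses feasibility of the true propensity and the RM bounds in the constraint sets to replace the constrained minimum by the DR estimator at $\pi_0$, invokes unbiasedness of DR under the correct propensity, and combines a bounded-differences concentration for the uniform deviation, symmetrization plus the Ledoux--Talagrand contraction (yielding the $\max\{M_\alpha,\bar r/q\}\,\mathcal{R}_n(\mathcal{F})$ term), and a union bound over three events giving the $\log(3/\delta)$ factor. The remaining differences are cosmetic: the paper bounds $\sup_{\pi}|\hat{V}_{\text{DR}}(\pi)-V(\pi)|$ first and substitutes the constrained minimum only at the very end, and it concentrates the DR form of $\hat{V}(\pi_0)$ rather than the raw reward average, while your caveat about $\hat{f}_a$ being data-dependent is handled in the paper only implicitly (the surrogate functions are fit on a separate split and treated as fixed).
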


The proof is relegated to the appendix. The significance of Theorem \ref{thm:main} is to reveal the two critical components that control generalization: \emph{I}. the empirical policy improvement under the proposed minorize-maximization algorithm; \emph{II}. the composite terms of the policy complexity and the degree of uncertainty reflected via $M_{\alpha}$ (see Appendix A for detail). 

In particular, by enriching $\mathcal{F}$, we are more likely to make \emph{I} positive on training data, but we then suffer from a larger negativity of \emph{II}. This tradeoff is consistent with the standard generalization for supervised learning. More importantly, the magnitude of \emph{II} also increases with $M_{\alpha}$. Notice that $M_{\alpha}$ is often non-decreasing in $\alpha$ as the RM bounds get looser. As a consequence, having $\alpha>0$ further penalizes the model complexity and the slack term in \emph{II}, so increasing $\alpha$ will encourage the algorithm to select the policy that achieves less empirical improvement but has smaller complexity. It explains from the theoretical perspective how our approach can lead to a policy that performs better under runtime uncertainty. To summarize, the result in Theorem \ref{thm:main} shows rigorously how and why introducing $\pi_u$ with $e^{-\alpha} \leq \pi_0/\pi_{U} \leq e^{\alpha}$ can improve the robustness of the learned policy.

\begin{algorithm}[tbh]
\SetKwInOut{Input}{Input}
\SetKwInOut{Output}{Output}
\SetAlgoLined
\Input{The uncertainty level $\alpha$, history logging policy $\pi_0$, feedback data.}
 \textbf{Initialize} $\theta^{\text{new}}$, $p^*$, $r^*$; \\
 Compute the constraints sets $\mathcal{R}(\pi_0;\alpha)$ for RM by solving the ERM subproblem as in Lemma \ref{lemma:apv-eqv}; \\
 Compute the constraints sets $\Pi(\pi_0;\alpha)$; \\
 \While{$\min_{p,r} \hat{V}_{\text{DR}}(\pi_{\theta^{\text{new}}}) \geq \min_{p,r} \hat{V}_{\text{DR}}(\pi_{\theta^{\text{old}}})$}{
  Let $\theta^{\text{old}} = \theta^{\text{new}}$, compute $\theta^{\text{new}} = \arg \max_{\theta} \hat{V}_{\text{DR}}(\pi_{\theta};p^*, r^*)$ using suitable optimization method; \\
  Solve: $p^*, r^* = \arg \min_{p\in \Pi, r \in \mathcal{R}} \hat{V}_{\text{DR}}(\pi_{\theta^{\text{new}}};p, r)$. \\
 }
 \caption{Robust Off-policy Learning with DR}
 \vspace{-0.1cm}
 \label{algo:algo}
\end{algorithm}




\section{Experiment and Result} \label{sec:experiment}
\begin{figure*}[hbt]
    \centering
    \includegraphics[width=0.84\linewidth]{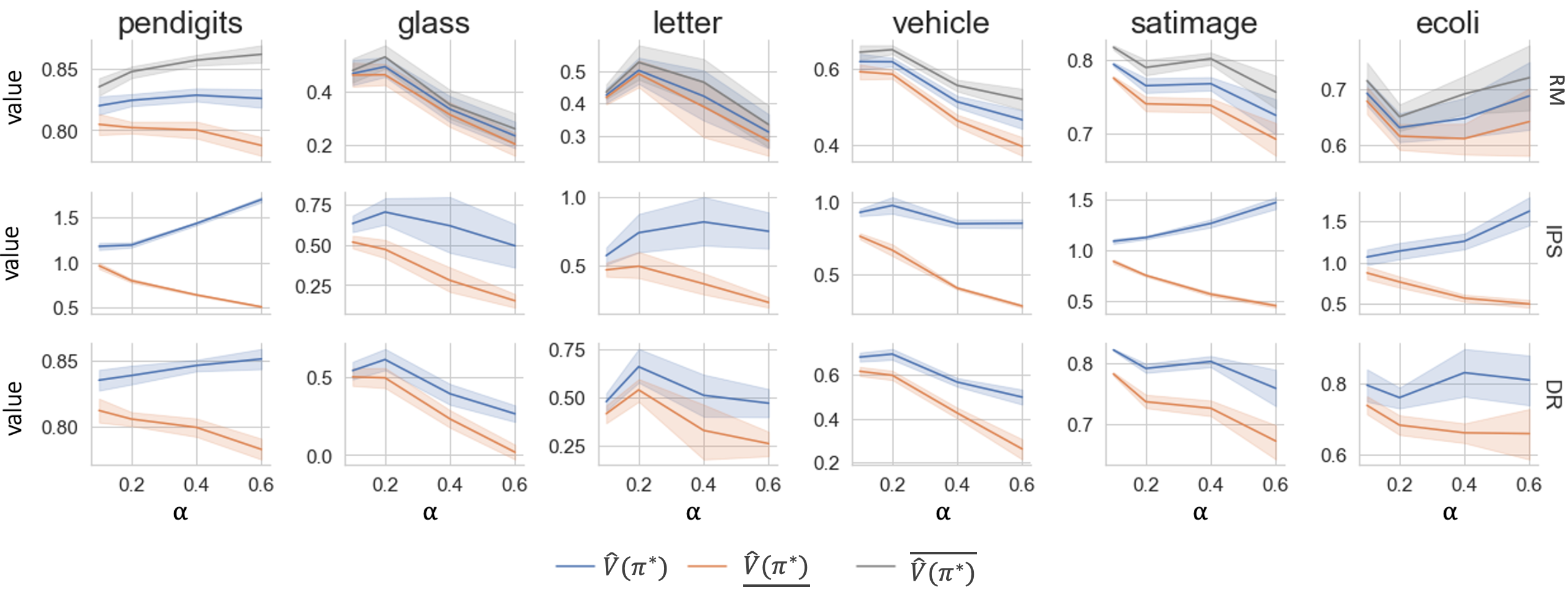}
    \caption{\footnotesize The bounds for the RM, IPS and DR estimators for the uncertainty-perturbed data, under different values of $\alpha$. The values for the original estimators are in {\color{blue}{blue}}, and the values of the bounding methods are given by {\color{orange}{orange}} and {\color{gray}{gray}}. We only show the lower bounds for IPS and DR for the sake of presentation as their tail gets loose under large $\alpha$, which is caused by the extreme (small) propensity weights.}
    \vspace{-0.3cm}
    \label{fig:bound}
\end{figure*}

\begin{figure*}[hbt]
    \centering
    \includegraphics[width=0.85\linewidth]{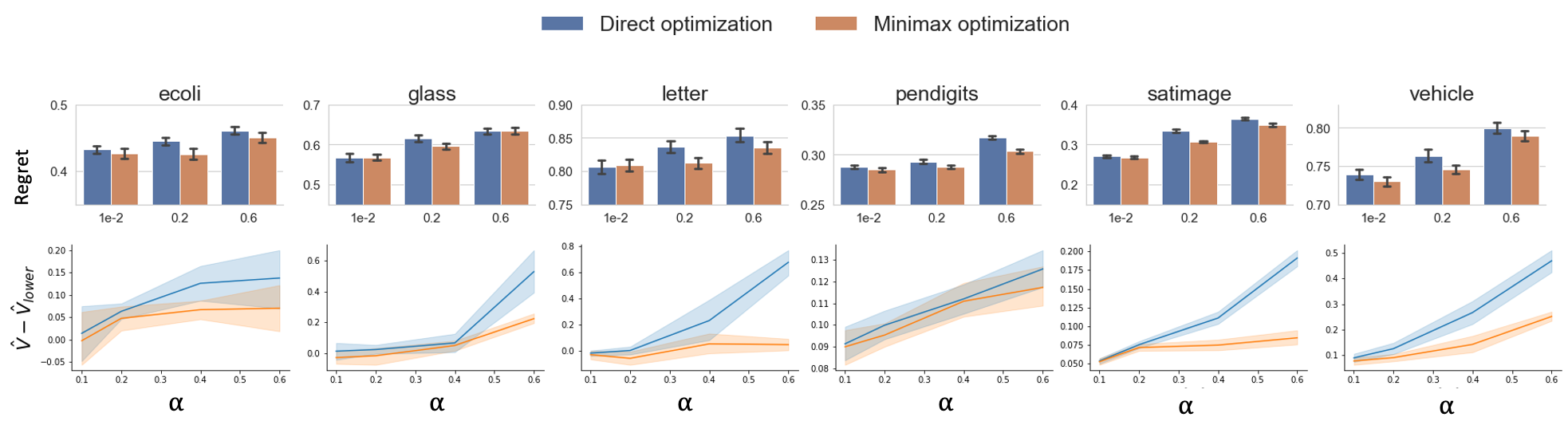}
    \caption{\footnotesize \textbf{Upper:} the regret (which equals $1-\text{reward}$ in our experiments) on the uncertainty-perturbed testing data under different $\alpha$; \textbf{Lower:} the robustness of the trained policies from standard off-policy learning ({\color{blue}{direct optimization}}) and our approach ({\color{orange}{minimax optimization}}), on the uncertainty-perturbed testing data, measured by how much they would fluctuate, i.e. $\hat{V}^*_{\text{DR}} - \underline{\hat{V}^*_{\text{DR}}}$, where $V^*$ denotes the trained policies.}
    \vspace{-0.5cm}
    \label{fig:fluctuation}
\end{figure*}


We first conduct simulation experiments to examine the following questions:

\textbf{Q1:} how does the lower-bounding methods for RM, IPS and DR described in Section \ref{sec:bound} perform under different $\alpha$? 

\textbf{Q2:} does the proposed Algorithm \ref{algo:algo} improve the robustness of the learned policy against runtime uncertainty?

We also show the real-world performance of our robust off-policy learning approach by deploying the trained policy to an online e-commerce platform, where runtime uncertainties are frequent, for personalized product recommendation.
Due to the space limitation, we only show the key outcome in this paper and leave the complete numerical results in the appendix. All the results are obtained from ten repetitions.


{\textbf{Simulation}}.
We adopt the classical setting that generates bandit feedback according to multiclass classification \citep{dudik2011doubly,vlassis2019design}. In particular, a $k$-class classification task is turned into the $k$-arm contextual bandit problem. In the classification problem, the data $\{(x,c)\}$ for the classification task are i.i.d observations where $x \in \mathcal{X}$ is the context (feature) vector and $c \in \{1,\ldots,k\}$ is the class label. Here, each data point $(x,c)$ is converted into a cost-sensitive classification sample $(x, r_1,\ldots,r_k)$, where $r_a = I[a = c]$ is the 0-1 reward for predicting with label $a$. 

We now describe the data-generating mechanism. The feedback data under a given policy $\pi$ is constructed as follows. For each classification instance, we sample the label $a$ with the probability $\pi(a|x)$, and reveal the corresponding reward $r_a$. We use the same benchmark datasets from the UCI repository as in \citep{dudik2011doubly,vlassis2019design}, with the descriptions provided below. We design the logging policy as: $\pi_{0}(a|x) \propto \theta^{\intercal}_a x$ for all $a = 1,\ldots,k$, where $\theta_a$ are sampled i.i.d from the standard multivariate normal distribution. We use $\pi_0$ for off-policy learning.

\begin{center}
\resizebox{0.8\columnwidth}{!}{
    \begin{tabular}{|c|c|c|c|c|c|c|}
    \hline
        Dataset & Ecoli & Glass & Letter & PenDigits & SatImage & Vehicle \\ \hline \hline
        \#samples & 336 & 214 & 20000 & 10992 & 6435 & 846 \\ \hline
        \#classes & 8 & 6 & 26 & 10 & 6 & 4 \\ \hline 
    \end{tabular}%
    }
\end{center}



\textbf{Adding Runtime uncertainty.} Since the real-world runtime uncertainty can depend on $a$ and $x$, given a policy $\pi$, we add noise to $\pi$ and obtain the uncertainty-injected policy from which the feedback data is actually generated:
\[
\tilde{\pi}(a|x):=\frac{\pi_{}(a|x) \cdot U_{a,x}(\alpha)}{\sum_{\tilde{a}}\pi_{}(\tilde{a}|x) \cdot U_{\tilde{a},x}(\alpha)},
\]
where $U_{a,x}(\alpha)$ is sampled from the truncated normal distribution with unit variance and mean $\gamma^{\intercal}_a x$, where $\gamma_a$ is also sampled from standard multivariate normal distributions. We set the truncation interval to be $\big[0, \exp(\alpha)\big]$. Then it is easy to check that $e^{-\alpha} \leq \tilde{\pi}_{}(a | x)\big/\pi_{}(a | x) \leq e^{\alpha}$ almost surely for all $a$ and $x$, which conforms to (\ref{eqn:perturb-def}).


\textbf{Estimators \& Experiment setting}. We experiment with the IPS, RM and DR estimators as described in Section \ref{sec:prelim}. The model family of the RM estimator (and the RM part of DR), as well as the bounding functions $\hat{f}_a$ and $\hat{g}_a$, are given by the standard \emph{Regression Tree}. The tuning and other implementation details are left in the appendix. We obtain the feedback data using the noise-injected $\tilde{\pi}_0$, do the train-validation-test split detailed in the appendix, and conduct off-policy estimation \& learning with $\pi_0$. To answer \textbf{Q1}, we first compute the RM estimator together with its bounding functions $\hat{f}_a$ and $\hat{g}_a$ using the training \& valuation data, and plot their values on the testing data (Figure \ref{fig:bound}). IPS does not require further training, so we directly report their values, as well as the corresponding (lower) bounds, on the testing data. We combine IPS and RM to obtain the results for DR. To answer \textbf{Q2}, we first conduct off-policy optimization using both the standard off-policy learning and the proposed minimax learning method, with DR as the estimator, to obtain the trained policies $\pi^*$. We then report their associated regrets on the testing data after adding the runtime uncertainty to $\pi^*$'s execution. We also study their robustness by measuring how much their values may fluctuate on the testing data, quantified by $\hat{V}_{\text{DR}}(\pi^*,\pi_0) - \underline{\hat{V}_{\text{DR}}}(\pi^*,\pi_0)$, computed on testing data.

\textbf{Simulation results and analysis.} We consider a wide range of uncertainty, i.e. $\alpha \in \{0.01, 0.2, 0.4, 0.6\}$. We first examine the proposed bounding methods for a given policy. Here, we use $\pi^*$ obtained from the standard off-policy learning. From the results in Figure \ref{fig:bound}, we first observe that the solutions to the ERM subproblem ($\hat{f}_a$ and $\hat{g}_a$) provide reasonable bounds for the RM method. The solutions to the proposed optimization problems in Section \ref{sec:bound} also reasonably provide lower bounds to the IPS and DR estimators. Other than the fact that IPS estimator suffers from variance issues, the performances of the bounding methods are generally consistent. It is also expected that a larger $\alpha$ leads to looser bounds for all the estimators. 

Next, we compare the testing performance and the robustness of $\pi^*$ optimized by standard off-policy learning and our approach. In particular, the testing regret is computed by first adding the runtime uncertainty to $\pi^*$.
Then we examine the robustness of $\pi^*$ by checking how much it might fluctuate via the gap of: $\hat{V}_{\text{DR}}(\pi^*,\pi_0) - \underline{\hat{V}_{\text{DR}}}(\pi^*,\pi_0)$. The gap provides a reasonable measurement since $\underline{\hat{V}_{\text{DR}}}(\pi^*,\pi_0)$ gives the worst-possible performance downgrade caused by runtime uncertainty.
The results are in Figure \ref{fig:fluctuation}. In the upper panel, we see that the proposed approach achieves better regrets for most cases, especially under larger $\alpha$. The simulation results justify the effectiveness of our learning approach, particularly as runtime uncertainty increases. 
In the lower panel of Figure {\ref{fig:fluctuation}}, we see the proposed algorithm significantly improves the robustness of the optimized policy $\pi^*$, and the degree of improvement increases with $\alpha$. In the appendix, we provide in-depth analysis of the impact of $\alpha$ on off-policy learning, as well as the comparisons with the \emph{oracle} regret. 
\indent \textbf{Real-world deployment and analysis}.
We conduct both offline and online experiments via the production platform of a major e-commerce website in the U.S. We use contextual bandit to personalize the daily homepage product recommendations, and we consider the click-through rate (CTR) as the reward. We leave the background detail, offline results and analysis in the appendix, and discuss the online testing below. 
We conduct online A/B testing to compare the proposed approach and the standard off-policy learning; both rely on the DR method with properly truncated propensity scores. The value of $\alpha$ is treated as a tuning parameter and selected via the validation data. For a typical e-commerce platform, runtime uncertainties can be induced by \emph{special offers}, \emph{stock availability}, \emph{upselling events}, and \emph{infrastructural malfunction} with the pipeline, caching, front-end computation and interleaving experiments. Trained under the same data and model family, during the 21-day testing period, the proposed approach consistently outperforms the standard off-policy learning, which often suffers from the previously identified runtime uncertainties. It indicates that the robustness of our solution leads to improved online performances.


\begin{figure}[htb]
    \centering
    \includegraphics[width=0.85\linewidth]{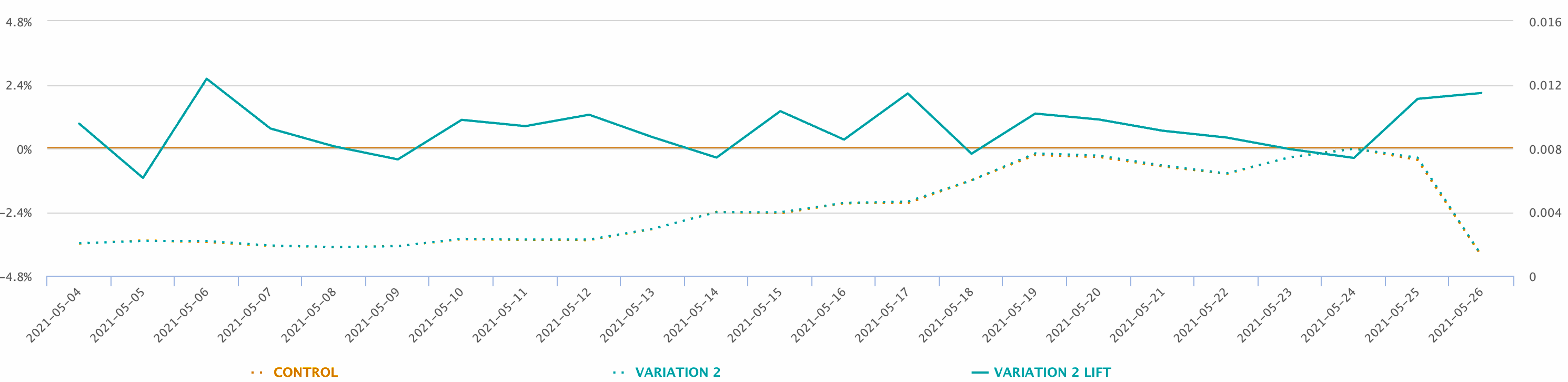}
    \caption{\footnotesize  Real-world online experiment results. {\color{orange}{CONTROL}} refers to the standard off-policy learning, and {\color{green}{VARIATION}} is the proposed robust off-policy learning. The right and left y-axis gives the daily CTR, and the percentage of lift against the control model. }
    \vspace{-0.7cm}
    \label{fig:ab-test}
\end{figure}

\section{Conclusion}\label{sec:conclusion}

We study the novel problem of robust off-policy learning for runtime uncertainty. We propose a principled solution with max-min learning and justify the theoretical implications and guarantees. Our solution is examined via simulation and real-world testings. In the presence of runtime uncertainty, our approach compares favorably to standard off-policy learning. We hope our work promotes future research on practical post-deployment robustness of AI solutions. For this particular topic, our framework can easily extend to diverse problem settings with suitable choices of uncertainty sets.



\bibliography{references}

\begin{thebibliography}{51}
\providecommand{\natexlab}[1]{#1}

\bibitem[{Angrist and Imbens(1995)}]{angrist1995two}
Angrist, J.~D.; and Imbens, G.~W. 1995.
\newblock Two-stage least squares estimation of average causal effects in
  models with variable treatment intensity.
\newblock \emph{Journal of the American statistical Association}, 90(430):
  431--442.

\bibitem[{Athey and Wager(2017)}]{athey2017efficient}
Athey, S.; and Wager, S. 2017.
\newblock Efficient policy learning.
\newblock \emph{arXiv preprint arXiv:1702.02896}.

\bibitem[{Bareinboim, Forney, and Pearl(2015)}]{bareinboim2015bandits}
Bareinboim, E.; Forney, A.; and Pearl, J. 2015.
\newblock Bandits with unobserved confounders: A causal approach.
\newblock In \emph{Advances in Neural Information Processing Systems},
  1342--1350.

\bibitem[{Ben-Tal, El~Ghaoui, and Nemirovski(2009)}]{ben2009robust}
Ben-Tal, A.; El~Ghaoui, L.; and Nemirovski, A. 2009.
\newblock \emph{Robust optimization}, volume~28.
\newblock Princeton University Press.

\bibitem[{Bertsimas, Brown, and Caramanis(2011)}]{bertsimas2011theory}
Bertsimas, D.; Brown, D.~B.; and Caramanis, C. 2011.
\newblock Theory and applications of robust optimization.
\newblock \emph{SIAM review}, 53(3): 464--501.

\bibitem[{Blackwell, Honaker, and King(2017)}]{blackwell2017unified}
Blackwell, M.; Honaker, J.; and King, G. 2017.
\newblock A unified approach to measurement error and missing data: overview
  and applications.
\newblock \emph{Sociological Methods \& Research}, 46(3): 303--341.

\bibitem[{Bottou et~al.(2013)Bottou, Peters, Qui{\~n}onero-Candela, Charles,
  Chickering, Portugaly, Ray, Simard, and Snelson}]{bottou2013counterfactual}
Bottou, L.; Peters, J.; Qui{\~n}onero-Candela, J.; Charles, D.~X.; Chickering,
  D.~M.; Portugaly, E.; Ray, D.; Simard, P.; and Snelson, E. 2013.
\newblock Counterfactual reasoning and learning systems: The example of
  computational advertising.
\newblock \emph{The Journal of Machine Learning Research}, 14(1): 3207--3260.

\bibitem[{Boyd, Boyd, and Vandenberghe(2004)}]{boyd2004convex}
Boyd, S.; Boyd, S.~P.; and Vandenberghe, L. 2004.
\newblock \emph{Convex optimization}.
\newblock Cambridge university press.

\bibitem[{Bubeck and Slivkins(2012)}]{bubeck2012best}
Bubeck, S.; and Slivkins, A. 2012.
\newblock The best of both worlds: Stochastic and adversarial bandits.
\newblock In \emph{Conference on Learning Theory}, 42--1.

\bibitem[{Carlini and Wagner(2017)}]{carlini2016towards}
Carlini, N.; and Wagner, D.~A. 2017.
\newblock Towards evaluating the robustness of neural networks. CoRR
  abs/1608.04644 (2016).
\newblock \emph{IEEE Symposium on Security and Privacy}, 39--57.

\bibitem[{Chen and Guestrin(2016)}]{chen2016xgboost}
Chen, T.; and Guestrin, C. 2016.
\newblock Xgboost: A scalable tree boosting system.
\newblock In \emph{Proceedings of the 22nd acm sigkdd international conference
  on knowledge discovery and data mining}, 785--794.

\bibitem[{Dempster, Laird, and Rubin(1977)}]{dempster1977maximum}
Dempster, A.~P.; Laird, N.~M.; and Rubin, D.~B. 1977.
\newblock Maximum likelihood from incomplete data via the EM algorithm.
\newblock \emph{Journal of the Royal Statistical Society: Series B
  (Methodological)}, 39(1): 1--22.

\bibitem[{Ding and VanderWeele(2016)}]{ding2016sensitivity}
Ding, P.; and VanderWeele, T.~J. 2016.
\newblock Sensitivity analysis without assumptions.
\newblock \emph{Epidemiology (Cambridge, Mass.)}, 27(3): 368.

\bibitem[{Dud{\'\i}k et~al.(2014)Dud{\'\i}k, Erhan, Langford, Li
  et~al.}]{dudik2014doubly}
Dud{\'\i}k, M.; Erhan, D.; Langford, J.; Li, L.; et~al. 2014.
\newblock Doubly robust policy evaluation and optimization.
\newblock \emph{Statistical Science}, 29(4): 485--511.

\bibitem[{Dud{\'\i}k, Langford, and Li(2011)}]{dudik2011doubly}
Dud{\'\i}k, M.; Langford, J.; and Li, L. 2011.
\newblock Doubly robust policy evaluation and learning.
\newblock \emph{arXiv preprint arXiv:1103.4601}.

\bibitem[{Durrett(2019)}]{durrett2019probability}
Durrett, R. 2019.
\newblock \emph{Probability: theory and examples}, volume~49.
\newblock Cambridge university press.

\bibitem[{Farajtabar, Chow, and Ghavamzadeh(2018)}]{farajtabar2018more}
Farajtabar, M.; Chow, Y.; and Ghavamzadeh, M. 2018.
\newblock More robust doubly robust off-policy evaluation.
\newblock \emph{arXiv preprint arXiv:1802.03493}.

\bibitem[{Ghosh, Kumar, and Sastry(2017)}]{ghosh2017robust}
Ghosh, A.; Kumar, H.; and Sastry, P. 2017.
\newblock Robust loss functions under label noise for deep neural networks.
\newblock In \emph{Proceedings of the AAAI Conference on Artificial
  Intelligence}, volume~31.

\bibitem[{Gilotte et~al.(2018)Gilotte, Calauz{\`e}nes, Nedelec, Abraham, and
  Doll{\'e}}]{gilotte2018offline}
Gilotte, A.; Calauz{\`e}nes, C.; Nedelec, T.; Abraham, A.; and Doll{\'e}, S.
  2018.
\newblock Offline a/b testing for recommender systems.
\newblock In \emph{Proceedings of the Eleventh ACM International Conference on
  Web Search and Data Mining}, 198--206.

\bibitem[{Goodfellow, Shlens, and Szegedy(2015)}]{goodfellow2014explaining}
Goodfellow, I.~J.; Shlens, J.; and Szegedy, C. 2015.
\newblock Explaining and harnessing adversarial examples.
\newblock \emph{Proceedings of the International Conference on Learning
  Representations}.

\bibitem[{Horvitz and Thompson(1952)}]{horvitz1952generalization}
Horvitz, D.~G.; and Thompson, D.~J. 1952.
\newblock A generalization of sampling without replacement from a finite
  universe.
\newblock \emph{Journal of the American statistical Association}, 47(260):
  663--685.

\bibitem[{Jagerman, Markov, and de~Rijke(2019)}]{jagerman2019people}
Jagerman, R.; Markov, I.; and de~Rijke, M. 2019.
\newblock When people change their mind: Off-policy evaluation in
  non-stationary recommendation environments.
\newblock In \emph{Proceedings of the Twelfth ACM International Conference on
  Web Search and Data Mining}, 447--455.

\bibitem[{Jin et~al.(2018)Jin, Allen-Zhu, Bubeck, and Jordan}]{jin2018q}
Jin, C.; Allen-Zhu, Z.; Bubeck, S.; and Jordan, M.~I. 2018.
\newblock Is Q-learning provably efficient?
\newblock In \emph{Advances in Neural Information Processing Systems},
  4863--4873.

\bibitem[{Kallus and Zhou(2018)}]{kallus2018confounding}
Kallus, N.; and Zhou, A. 2018.
\newblock Confounding-robust policy improvement.
\newblock In \emph{Advances in neural information processing systems},
  9269--9279.

\bibitem[{Kurakin, Goodfellow, and Bengio(2016)}]{kurakin2016adversarial}
Kurakin, A.; Goodfellow, I.; and Bengio, S. 2016.
\newblock Adversarial machine learning at scale.
\newblock \emph{arXiv preprint arXiv:1611.01236}.

\bibitem[{Lange(2016)}]{lange2016mm}
Lange, K. 2016.
\newblock \emph{MM optimization algorithms}.
\newblock SIAM.

\bibitem[{Ledoux and Talagrand(2013)}]{ledoux2013probability}
Ledoux, M.; and Talagrand, M. 2013.
\newblock \emph{Probability in Banach Spaces: isoperimetry and processes}.
\newblock Springer Science \& Business Media.

\bibitem[{Lee and Raginsky(2018)}]{lee2018minimax}
Lee, J.; and Raginsky, M. 2018.
\newblock Minimax statistical learning with wasserstein distances.
\newblock In \emph{Advances in Neural Information Processing Systems},
  2687--2696.

\bibitem[{Ma, Wang, and Narayanaswamy(2019)}]{ma2019imitation}
Ma, Y.; Wang, Y.-X.; and Narayanaswamy, B. 2019.
\newblock Imitation-regularized offline learning.
\newblock In \emph{The 22nd International Conference on Artificial Intelligence
  and Statistics}, 2956--2965. PMLR.

\bibitem[{Madry et~al.(2018)Madry, Makelov, Schmidt, Tsipras, and
  Vladu}]{madry2017towards}
Madry, A.; Makelov, A.; Schmidt, L.; Tsipras, D.; and Vladu, A. 2018.
\newblock Towards deep learning models resistant to adversarial attacks.
\newblock \emph{International Conference on Learning Representations}.

\bibitem[{Natarajan et~al.(2013)Natarajan, Dhillon, Ravikumar, and
  Tewari}]{natarajan2013learning}
Natarajan, N.; Dhillon, I.~S.; Ravikumar, P.~K.; and Tewari, A. 2013.
\newblock Learning with noisy labels.
\newblock \emph{Advances in neural information processing systems}, 26:
  1196--1204.

\bibitem[{Neumayer and Pl{\"u}mper(2017)}]{neumayer2017robustness}
Neumayer, E.; and Pl{\"u}mper, T. 2017.
\newblock \emph{Robustness tests for quantitative research}.
\newblock Cambridge University Press.

\bibitem[{Northcutt, Jiang, and Chuang(2021)}]{northcutt2021confident}
Northcutt, C.; Jiang, L.; and Chuang, I. 2021.
\newblock Confident learning: Estimating uncertainty in dataset labels.
\newblock \emph{Journal of Artificial Intelligence Research}, 70: 1373--1411.

\bibitem[{Pearl(2009)}]{pearl2009causality}
Pearl, J. 2009.
\newblock \emph{Causality}.
\newblock Cambridge university press.

\bibitem[{Pearl and Mackenzie(2018)}]{pearl2018book}
Pearl, J.; and Mackenzie, D. 2018.
\newblock \emph{The book of why: the new science of cause and effect}.
\newblock Basic books.

\bibitem[{Robins, Hernan, and Brumback(2000)}]{robins2000marginal}
Robins, J.~M.; Hernan, M.~A.; and Brumback, B. 2000.
\newblock Marginal structural models and causal inference in epidemiology.

\bibitem[{Robins, Rotnitzky, and Zhao(1994)}]{robins1994estimation}
Robins, J.~M.; Rotnitzky, A.; and Zhao, L.~P. 1994.
\newblock Estimation of regression coefficients when some regressors are not
  always observed.
\newblock \emph{Journal of the American statistical Association}, 89(427):
  846--866.

\bibitem[{Rockafellar and Wets(2009)}]{rockafellar2009variational}
Rockafellar, R.~T.; and Wets, R. J.-B. 2009.
\newblock \emph{Variational analysis}, volume 317.
\newblock Springer Science \& Business Media.

\bibitem[{Rosenbaum and Rubin(1983)}]{rosenbaum1983assessing}
Rosenbaum, P.~R.; and Rubin, D.~B. 1983.
\newblock Assessing sensitivity to an unobserved binary covariate in an
  observational study with binary outcome.
\newblock \emph{Journal of the Royal Statistical Society: Series B
  (Methodological)}, 45(2): 212--218.

\bibitem[{Rosenbaum and Silber(2009)}]{rosenbaum2009amplification}
Rosenbaum, P.~R.; and Silber, J.~H. 2009.
\newblock Amplification of sensitivity analysis in matched observational
  studies.
\newblock \emph{Journal of the American Statistical Association}, 104(488):
  1398--1405.

\bibitem[{Si et~al.(2020)Si, Zhang, Zhou, and Blanchet}]{si2020distributional}
Si, N.; Zhang, F.; Zhou, Z.; and Blanchet, J. 2020.
\newblock Distributional Robust Batch Contextual Bandits.
\newblock \emph{arXiv preprint arXiv:2006.05630}.

\bibitem[{Sion et~al.(1958)}]{sion1958general}
Sion, M.; et~al. 1958.
\newblock On general minimax theorems.
\newblock \emph{Pacific Journal of mathematics}, 8(1): 171--176.

\bibitem[{Sutton and Barto(2018)}]{sutton2018reinforcement}
Sutton, R.~S.; and Barto, A.~G. 2018.
\newblock \emph{Reinforcement learning: An introduction}.
\newblock MIT press.

\bibitem[{Swaminathan and
  Joachims(2015{\natexlab{a}})}]{swaminathan2015counterfactual}
Swaminathan, A.; and Joachims, T. 2015{\natexlab{a}}.
\newblock Counterfactual risk minimization: Learning from logged bandit
  feedback.
\newblock In \emph{International Conference on Machine Learning}, 814--823.

\bibitem[{Swaminathan and Joachims(2015{\natexlab{b}})}]{swaminathan2015self}
Swaminathan, A.; and Joachims, T. 2015{\natexlab{b}}.
\newblock The self-normalized estimator for counterfactual learning.
\newblock In \emph{advances in neural information processing systems},
  3231--3239.

\bibitem[{Thomas and Brunskill(2016)}]{thomas2016data}
Thomas, P.; and Brunskill, E. 2016.
\newblock Data-efficient off-policy policy evaluation for reinforcement
  learning.
\newblock In \emph{International Conference on Machine Learning}, 2139--2148.

\bibitem[{Vlassis et~al.(2019)Vlassis, Bibaut, Dimakopoulou, and
  Jebara}]{vlassis2019design}
Vlassis, N.; Bibaut, A.; Dimakopoulou, M.; and Jebara, T. 2019.
\newblock On the Design of Estimators for Bandit Off-Policy Evaluation.
\newblock In \emph{International Conference on Machine Learning}, 6468--6476.

\bibitem[{Xie, Ma, and Wang(2019)}]{xie2019towards}
Xie, T.; Ma, Y.; and Wang, Y.-X. 2019.
\newblock Towards optimal off-policy evaluation for reinforcement learning with
  marginalized importance sampling.
\newblock In \emph{Advances in Neural Information Processing Systems},
  9668--9678.

\bibitem[{Ye and Bickel(2021)}]{ye2021binomial}
Ye, Y.; and Bickel, P.~J. 2021.
\newblock Binomial Mixture Model With U-shape Constraint.
\newblock \emph{arXiv preprint arXiv:2107.13756}.

\bibitem[{Zhao, Small, and Bhattacharya(2019)}]{zhao2019sensitivity}
Zhao, Q.; Small, D.~S.; and Bhattacharya, B.~B. 2019.
\newblock Sensitivity analysis for inverse probability weighting estimators via
  the percentile bootstrap.
\newblock \emph{Journal of the Royal Statistical Society: Series B (Statistical
  Methodology)}, 81(4): 735--761.

\bibitem[{Zheng et~al.(2020)Zheng, Wu, Goswami, Goswami, Metaxas, and
  Chen}]{zheng2020error}
Zheng, S.; Wu, P.; Goswami, A.; Goswami, M.; Metaxas, D.; and Chen, C. 2020.
\newblock Error-bounded correction of noisy labels.
\newblock In \emph{International Conference on Machine Learning}, 11447--11457.
  PMLR.

\end{thebibliography}

\clearpage
\appendix
\onecolumn
\section*{Appendix}
In \ref{sec:variant-ips}, we discuss the robust off-policy learning with the truncated and normalized IPS. We then provide the proof for Lemma 1, Lemma 2 and Theorem 1 in A.2 and A.3. We provide comprehensive interpretation for our max-min optimization in A.4. In A.5, we present the detailed experiment setup and additional numerical results. The implementation of our simulation studies is provided as separate supplement material for the submission.

\section{Methods for the truncated and normalized IPS}
\label{sec:variant-ips}

The most common variants of the IPS estimator are the truncated IPS and normalized IPS \cite{swaminathan2015self}. Even though they are biased estimators, in many cases, they significantly reduces the variances and thus provide a improved bias-variance tradeoff. Since they are not the focus of our paper, we refer the readers to \cite{vlassis2019design} for reference on understanding the design of off-policy estimators. 

It is obvious that the proposed lower-bounding approach and minorize-maximization procedure are fully compatible with the \emph{truncated IPS} (and when it is used to also reduce the variance DR), by simply truncating the constraint sets at the given threshold. For the lower-bounding problem, the objective and constraints are still separable on each sample $i$, and there is no increase in the computation complexity:

\begin{equation}
\label{eqn:optimization-TIPS}
\begin{split}
    & \qquad \qquad \qquad \text{minimize} \quad \frac{1}{n} \sum_{i=1}^n  \frac{\pi(a_i|x_i)}{p(a_i | x_i)} r_i \\
    & \text{s.t.} \quad e^{-\alpha} \max\big\{     q, \pi_{0}(a_i | x_i) \big\} \leq p(a_i | x_i) \leq \min\big\{ e^{\alpha} \pi_{0}(a_i | x_i), 1 \big \},
\end{split}
\end{equation}
where $q$ is the selected cutoff to reflect the minimum propensity score allowed. 
Compared with Equation (13) in the paper, which gives the constraint optimization problem for lower-bounding the IPS, the second set of constraints is removed here as we no longer require:
\[\sum_{a \in \mathcal{A}} p(a|x_i) \leq 1, \text{ for } i=1,\ldots,n.
\]
Our considerations are two folds: 
\begin{itemize}
\item the above normalization constraint will inevitably lead to small propensity scores $p(a|x_i)$, which opposes the inital goal of avoiding large variance;
\item truncated IPS is known to be biased, so we are less concerned about $p(a|x)$ not being properly normalized as long as we are able to reduce the variance. 
\end{itemize}
We do not experiment with the truncated IPS in simulation, because the benchmark datasets are relatively small so we find out that the results can be overly sensitive to the cutoff, and are thus not representative of the general case when applying the truncated IPS. 

The normalized propensity score method (nIPS) is another common control covariate estimator that often gives an improved bias-variance tradeoff for the IPS estimator:
\begin{equation}
    \hat{V}_{\text{nIPS}}(\pi;\pi_0) = \frac{1}{C(\pi, \pi_0)} \sum_{i=1}^n \frac{\pi(a_i|x_i)}{\pi_0(a_i|x_i)} r_i,
\end{equation}
where $C(\pi, \pi_0) = \sum_{i=1}^n \frac{\pi(a_i|x_i)}{\pi_0(a_i|x_i)}$ is the control covariate. The nIPS can be helpful when $\pi$ is very different from $\pi_0$ or when $\pi_0(a|x)$ gets very small, because IPS can have large variance under such circumstances. Recall that our goal is to lower-bound $\hat{V}_{\text{nIPS}}(\pi;\pi_U)$, where $\pi_U$ is subject to the same constraints as in (\ref{eqn:optimization-TIPS}) or Equation (13) in the paper. 

Again, we consider using the change of variable: $w = 1 / p$, where $p(a_i | x_i)$ is the optimization policy that takes the place of $\pi_U$ in our optimization problem. The lower bound of $\hat{V}_{\text{nIPS}}(\pi)$ is then given by:
\begin{equation}
\label{eqn:optimization-nIPS}
\begin{split}
    & \qquad \qquad \qquad \text{minimize} \quad \frac{\sum_{i=1}^n w(a_i, x_i)\pi(a_i|x_i)}{\sum_{i=1}^n w(a_i, x_i)\pi_0(a_i|x_i)}r_i \\ 
    & \text{s.t.} \quad \frac{e^{-\alpha}}{\pi_0(a_i|x_i)} \leq w(a_i,x_i) \leq \frac{e^{\alpha}}{\pi_0(a_i|x_i)}, \quad  \sum_{a\in \mathcal{A}} \frac{1}{w(a_i,x_i)} \leq 1, \text{ for } i=1,\ldots,n.
\end{split}
\end{equation}

The objective function in (\ref{eqn:optimization-nIPS}) is quasiconvex, and the constraints are convex. Therefore, obtaining the minimum value is feasible in theory. However, the number of optimization parameters is $\mathcal{O}(nk)$, and unlike the cases for IPS and DR where the objective and constraints are separable with respect to each instance $i$, the objective function here is not separable. As a consequence, the exact computation for lower-bounding nIPS is impractical, especially on large datasets, due to the dependency on $n$. We also point out that the second set of constraints enforces the optimization policy to be a valid PDF. Unlike the truncated IPS, we keep it here for completeness.

To solve the computation issue, we come up with a possible two-step \emph{greedy} workaround method for lower-bounding the nIPS: 
\begin{itemize}
    \item first compute the lower bound for the IPS to obtain the optimal \emph{optimization policy} $p^*$ and the corresponding lower bound $\underline{\hat{V}_{\text{IPS}}}(\pi)$;
    \item then let $n\underline{\hat{V}_{\text{IPS}}}(\pi)/C(\pi,p^* )$ be the lower bound for $\hat{V}_{\text{nIPS}}(\pi)$.
\end{itemize}

The advantage of the proposed workaround is that the computation complexity is now almost the same as the IPS method. Importantly, we are able to parallelize the computation and get rid of the dependency on $n$. However, the gap between the workaround solution and the original solution is not controlled or bounded, which is clearly a disadvantage of the greedy approach. 

For illustration, we experiment with the workaround method for nIPS in the off-policy optimization problem. Similar to the setting for Figure 4(lower) in the paper, we examine the robustness of the learnt policy via the prediction fluctuation $\hat{V}(\pi_{\theta}) - \underline{\hat{V}}(\pi_{\theta})$ on testing data. The results on several datasets are provided in Table \ref{tab:appen-nIPS}, where we also list the results of DR for direct comparisons. We see that when compared with the DR, the robustness for the workaround method of nIPS is somewhat weaker, which is likely to be caused by the following two reasons:
\begin{itemize}
    \item the inaccurate approximations caused by the greedy workaround method when computing the lower bound of nIPS;
    \item the simulation setting does not bring out the advantages of the variance-reduction methods since in expectation, our design will not induce extreme propensity scores.
\end{itemize}

\begin{table*}[htb]
    \centering
    \resizebox{\columnwidth}{!}{
    \begin{tabular}{c|ccc|ccc|ccc}
    \hline
        Dataset & \multicolumn{3}{c}{Glass} & \multicolumn{3}{c}{Ecoli} & \multicolumn{3}{c}{SatImage}  \\ \hline
        $\alpha$ & 0.2 & 0.4 & 0.6 & 0.2 & 0.4 & 0.6 & 0.2 & 0.4 & 0.6 \\  \hline
        {\textbf{nIPS}} & 0.0333 & 0.1081 & 0.1896  & 0.0634  & 0.0883  & 0.1077 &  0.0762 & 0.1049 & 0.1215 \\ 
        & (.0197) & (.0157) & (.0232) & (.0177) & (.0231) & (.0286) & (.0153) & (.0280) & (.0333) \\ \hline
        {\textbf{DR}} & 0.0173 & 0.0667  &  0.157  & 0.0427  & 0.0635  & 0.0757 & 0.0681 & 0.0754 & 0.0823 \\ 
        & (.0109) & (.0133) & (.0162) & (.0129) & (.0153) & (.0168) & (.0126) & (.0141) & (.0164) \\ \hline
    \end{tabular}
    }
    \caption{The prediction fluctuation $\hat{V} - \underline{\hat{V}}$ on testing data, for the nIPS and DR method.}
    \label{tab:appen-nIPS}
\end{table*}

In the real-world experiments where extreme propensity scores do occur (Figure 5 in the paper), we observe much improved performances from the truncated and normalized IPS methods (Appendix \ref{sec:real-world}). The original IPS method, on the other hand, experiences variance so large that putting it in the same plot with the other methods is impractical.

\section{The derivation of lower bound of APV}\label{sec:lb_apv}

We aim at deriving bounds that are agnostic to the specific form of $U$, for which we consider the most adversarial case where the added noise $U(a,x)$ depends on both $a$ and $x$.

The key insight is to reveal how the potential value $R(a)$ interacts implicitly with the perturbation $U(a,x)$, if the data were generated under $\pi_U$. The condition $e^{-\alpha}\leq \pi_0 / \pi_U \leq e^{\alpha}$ provides a special characterization of $U(a,x)$, since by applying the Bayes rule, it holds for any two actions $a$ and $a'$ that:
\begin{equation}
\label{eqn:bound-U}
    \exp(-2\alpha) \leq \frac{p(U(a,x) = u | A = a', X = x)}{p(U(a,x) = u | A = a, X = x)} \leq \exp(2\alpha).
\end{equation}
We defer the derivations to the appendix. Notice that it is a critical result to have $U(A,X)$ implicit bounded as above, since $R(A,X) \perp A \, \big|\, X, U(A,X)$ by assuming the feedback data is generated under $\pi_U$. Therefore, for any given $x$, we may conjecture that $R(a,x)$ do not deviate much from $R(a',x)$ for $a\neq a'$.

Specifically, we need to characterize $\Ebb\big[R(a',x) \,\big|\, A = a, X = x]$ as suggested by the formulation of APV, since we only get to observe $R(a,x)$ when the chosen action is $a$. 

The trick we employ here is to express the unobservable quantities as a function of the estimable objects, using the Radon–Nikodym derivative as a change-of-measure tool. Let $\diff P_{R(a,x) |A, X}$ be the density for random variable $R(a,x)\big|A, X$ with respect to the underlying measure that we presume to rely on $\pi_U$. Using the Radon–Nikodym derivative \citep{durrett2019probability}, for all $a, a' \in \mathcal{A}$, it holds that:
\begin{equation}
\label{eqn:R-N}
\begin{split}
    & \Ebb[R(a,x) \big| A = a', X = x] = \Ebb \Big[R(a,x)\frac{\diff P_{R(a,x)|A=a', X=x}(R(a,x))}{\diff P_{R(a,x)|A=a, X=x}(R(a,x))} \,\Big|\, A = a, X = x \Big].
\end{split}
\end{equation}
Since $\Ebb[R(a,x) \big| A = a, X = x]$ is estimable, we just need to bound the derivative ratio term to obtain the overall bound.
We denote the ratio by the shorthand: $D(r, x, a, a') := \diff P_{R(a,x)|A=a', X=x}(r) / \diff P_{R(a,x)|A=a, X=x}(r)$. 

Note that when no perturbation is added, i.e. $\alpha = 0$, we have $D(r_1, x, a, a') = D(r_2, x, a, a')$ for any two $r_1,r_2$, so the problem reduces to the standard off-policy setting. In the following lemma, we first prove the existence of $D$ under $\pi_U$, and show that under the constraints $e^{-\alpha} \leq \pi_{0}/\pi_u \leq e^{\alpha}$, the ratio $D(r_1, x, a, a')$ is still close to $D(r_2, x, a, a')$ depending on $\alpha$. 

\begin{lemma}
\label{lemma:RN_bound}
When the underlying measure $P_{R(a,x)|A, X}$ is subject to $\pi_U$ that satisfy (\ref{eqn:perturb-def}), the Radon–Nikodym derivative $D(r, x, a, a')$ exists for all $a, a' \in \mathcal{A}$ and $x \in \mathcal{X}$.
Furthermore, suppose that $\pi_U$ is a continuous function of any $U=u$, then the following inequality holds for almost every $a, a'$ and $x$:
\begin{equation}
\label{eqn:RN-derivative-bound}
\frac{D(r_1, x, a, a')}{D(r_2, x, a, a')} \leq \exp(2\alpha), \, \forall r_1, r_2.
\end{equation}
\end{lemma}

The proof is relegated to Appendix A.2.1 as below. The above lemma converts the uncertainty induced by $U(a,x)$ to a bound on the ratio $D$, which is a significant step towards our goal because it reveals an explicit constraint for bounding the potential value according to (\ref{eqn:R-N}). Also, it is straightforward to verify that the derivative ratio $D$ carries two constraints by itself:
\begin{equation}
\label{eqn:RN-constraint}
\begin{split}
    & D(r, x, a, a') \geq 0, \\
    & \Ebb \big[D(R(a,x), x, a, a') \,\big|\, A = a, X = x] = 1.
\end{split}
\end{equation}
Therefore, lower-bounding RM is now converted to a constraint optimization problem where we minimize (\ref{eqn:R-N}) under (\ref{eqn:RN-derivative-bound}) and (\ref{eqn:RN-constraint}), where $D$ is now treated as optimization variable: $\min_{D \text{ s.t. }(\ref{eqn:RN-derivative-bound}),(\ref{eqn:RN-constraint})}\Ebb \Big[R(a,x) D\big(R(a,x), x, a, a'\big) | A=a, X=x \Big].$

Applying the standard strong duality arguments, we find that the minimizer of the above problem can be obtained as the solution of an \emph{auxiliary} ERM, which is essentially a subproblem of ERM, which we describe below.

\begin{lemma}
\label{lemma:apv-eqv+}
The optimizer of: 
\[
\min_{D: \,D \text{ s.t. }(\ref{eqn:RN-derivative-bound}),(\ref{eqn:RN-constraint})}\Ebb \Big[R(a,x) D\big(R(a,x), x, a, a'\big) | A=a, X=x \Big],
\]
is the solution to the following problem:
\begin{equation}
\label{eqn:ERM}
    \min_{f_{a, a'}(\cdot)} \Ebb \Big[\ell _{\alpha} \Big(R(a,x) \, , \, f_{a, a'}(x)\Big) \,\Big|\, A=a, X=x \Big],
\end{equation}
where the loss function $\ell_{\alpha}$ is specified by:
\begin{equation}
\label{eqn:auxiliary-loss}
\begin{split}
    & \ell _{\alpha} \Big(R(a,x) \, , \, f_{a, a'}(x))\Big) = \Big\{ R(a,x)  - f_{a, a'}(x) \Big\}^2_{+} + e^{2\alpha}  \Big\{ R(a,x) - f_{a, a'}(x) \Big\}^2_{-}.
\end{split}
\end{equation}
Here, $\{\cdot\}_{+}$ and $\{\cdot\}_{-}$ gives the positive and negative part. Since the potential value $R(a,x)$ are observed given $A=a$, the above setting describes a subproblem of the ERM.
\end{lemma}

\begin{figure}[bht]
    \centering
    \includegraphics[width=0.7\linewidth]{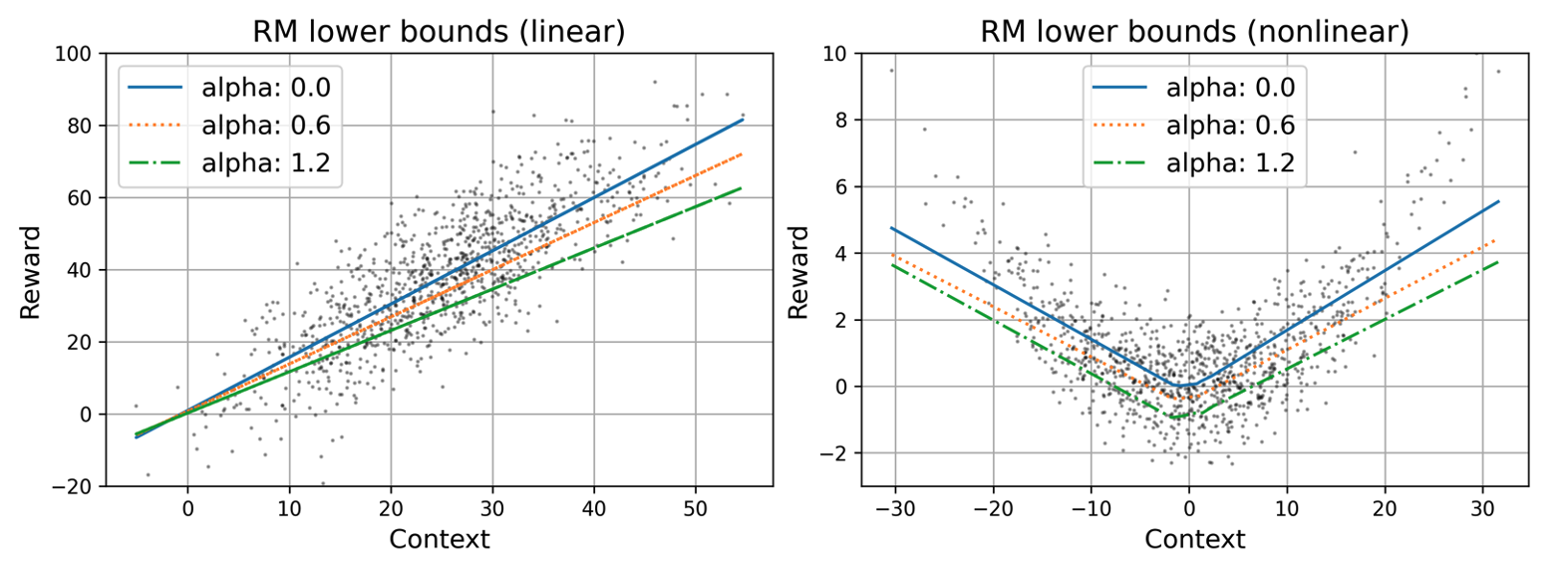}
    \caption{The solutions of the auxiliary (subproblem of) ERM under different $\alpha$, for both the linear and non-linear data-generating mechanisms.}
    \label{fig:lower-bounds}
\end{figure}

The proof is given below in Appendix A.2.2. When solving the auxiliary ERM, we may choose $f_{a, a'}(\cdot)$ from a parametric model family (e.g. gradient boosting tree) that fits the context space. As a sanity check, when $\alpha=0$, i.e. no uncertainty is asserted, $\ell_{\alpha}$ is simply the least square loss so we reduce to the standard ERM setting. When $\alpha > 0$, the loss function puts more weight on the negative part and the optimized response surfaces are therefore "shifted" downward and thus produce the lower bounds. See Figure \ref{fig:lower-bounds} for visual illustrations of both the linear and non-linear cases.

\subsection{Proof of Lemma \ref{lemma:RN_bound}}

\begin{proof}
For notation simplicity, we omit the explicit dependency of $U$ on $a$ and $x$.

First, we show the existence of the ratio 
\[ 
D(r, x, a, \tilde{a}) = \frac{\emph{d} P_{R(a,x)|A=\tilde{a}, X=x}(r)}{\emph{d} P_{R(a,x)|A=a, X=x}(r)}
\]
for all $a, \tilde{a} \in \mathcal{A}$, $x \in \mathcal{X}$ and $r \in \mathbb{R}$, when the measure $P_{R(a)|A=a, X=x}(r)$ is induced by $\pi_U$ with $\exp(-\alpha) \leq \pi_0(a|x) / \pi_U(a|x) \leq \exp(\alpha)$. 
Assume that there exists at least one setting $u^* \in \mathcal{U}$ for every $x\in \mathcal{X}$ such that $\pi_{u^*}(a|x) = \pi(a|x)$.

For any $u \in \mathcal{U}$ such that $p(U=u | A=a) > 0$, by applying the Bayes rule, we have:
\begin{eqnarray}
\label{eqn:appen-1}
  p(U =u | A=a, X=x) = \frac{p(A=a | U =u, X=x) p(U = u | X=x)}{p(A=a | X=x)}
\end{eqnarray}
and 
\begin{eqnarray}
\label{eqn:appen-2}
  p(U = u | A=\tilde{a}, X=x) = \frac{p(A=\tilde{a} | U = u, X=x) p(U = u | X=x)}{p(A=\tilde{a} | X=x)}.
\end{eqnarray}
Combining (\ref{eqn:appen-1}) and (\ref{eqn:appen-2}) we get:
\begin{eqnarray*}
  \frac{p(U = u | A = \tilde{a}, X = x)}{p(U = u | A = a, X = x)} = \frac{p(A = \tilde{a} | U = u, X = x) p (A = a | X=x)}{p (A = \tilde{a} | X=x)p(A = a | U = u, X = x)}.
\end{eqnarray*}
By the definition of $\alpha$-degree uncertainty in Equation (2), we have $p (A = a | X=x) / p(A = a | U = u, X = x) = p (A = a | U = u^*, X=x) / p(A = a | U = u, X = x) \in [\exp(-\alpha), \exp(\alpha)]$ (where $u^*$ is the setting that recovers the original policy). Therefore, we conclude that for almost all $u \in \mathcal{U}$:
\begin{equation}
\label{eqn:appen-3}
    \exp(-2\alpha) \leq \frac{p(U = u | A = \tilde{a}, X = x)}{p(U = u | A = a, X = x)} \leq \exp(2\alpha).
\end{equation}
Hence, $p(U | A=\tilde{a}, X=x)$ is mutually absolute continuous with respect to $p(U | A=a, X=x)$, which allows us to further define the likelihood ratio:
\begin{equation}
    L(u, x, a, \tilde{a}) \equiv \frac{p(U=u |A=\tilde{a}, X=x )}{p(U=u |A=a, X=x )}.
\end{equation}
Similarly, by applying the Bayes rule, it can be shown that for almost all $u_1,u_2 \in \mathcal{U}$, we have:
\begin{equation}
\label{eqn:appen-eqn-five}
    \exp(-2\alpha) \leq \frac{L(u_1, x, a, \tilde{a})}{L(u_2, x, a, \tilde{a})} \leq \exp(2\alpha).
\end{equation}
Recall that the \emph{potential value} $R(A,X)$ is conditionally independent of action $A$ given context $X$ and $U$. Hence, for any set $S \in \mathbb{R}$:
\[ 
\Ebb_{\pi_U} \big[ \Ibf (R(a,x) \in S) \,|\, U, A=\tilde{a}, X=x \big] = \Ebb_{\pi_U} \big[\Ibf(R(a,x) \in S) \,|\, U, X=x \big].
\]
Then, by applying the tower property of conditional expectation, we obtain:
\begin{equation}
\label{eqn:appen-expectation-expand}
\begin{split}
 & \Ebb_{\pi_U} \big[\Ibf(R(a,x) \in S) \,|\, A=\tilde{a}, X=x \big]\\ 
 & = \Ebb_{\pi_U} \Big[ \Ebb_U \big[ \Ibf(R(a,x) \in S) \,|\, U, A=\tilde{a}, X=x \big]  \,\big|\, A=\tilde{a}, X=x \Big] \\ 
 & = \Ebb_{\pi_U} \Big[ \Ebb_U \big[ \Ibf(R(a,x) \in S) \,|\, U, X=x \big]  \,\big|\, A=\tilde{a}, X=x \Big]\\ 
 & = \Ebb_{\pi_U} \, \Ebb_{U} \Big[ \frac{\text{d}P_{U|A=\tilde{a},X=x}}{\text{d}P_{U|A=a,X=x}} \Ebb_U \big[ \Ibf(R(a,x) \in S) \,|\, U, X=x \big] \,\big|\, A=a, X=x \Big] \\ 
 & = \Ebb_{\pi_U} \, \Ebb_{U} \Big[L(U, x, a, \tilde{a}) \Ebb_U \big[ \Ibf(R(a,x) \in S) \,|\, U, X=x \big]   \,\big|\, A=a, X=x \Big] \\
 & = \Ebb_{\pi_U} \, \Ebb_{U} \big[L(U, x, a, \tilde{a})  \Ibf(R(a,x) \in S) \,\big|\, A=a, X=x \big].\\ 
\end{split}
\end{equation}
Notice that $L(u, x, a, \tilde{a}) \in [e^{-2\alpha}, e^{2\alpha}]$ almost everywhere by (\ref{eqn:appen-3}), so we have: 
\begin{equation}
    \exp (-2\alpha) \leq \frac{\Ebb_{\pi_U} \big[\Ibf(R(a,x) \in S) \,|\, A=\tilde{a}, X=x \big]}{\Ebb_{\pi_U} \big[\Ibf(R(a,x) \in S) \,|\, A=a, X=x \big]} \leq \exp(2\alpha).
\end{equation}
By the definition, $P_{R(a,x)|A=\tilde{a}, X=x}(S) = \Ebb_{\pi_U} \big[\Ibf(R(a,x) \in S) \,|\, A=\tilde{a}, X=x \big]$, and $P_{R(a,x)|A=a, X=x}(S) = \Ebb_{\pi_U} \big[\Ibf(R(a,x) \in S) \,|\, A=a, X=x \big]$, 
plus the previously shown result that $P_{R(a,x)|A=\tilde{a}, X=x}$ and $P_{R(a,x)|A=a, X=x}$ are mutually absolute continuous, we conclude the existence of the ratio $D(r, x, a, \tilde{a})$ for all $a, \tilde{a} \in \mathcal{A}$, $x \in \mathcal{X}$ and $r \in \mathbb{R}$.

In the next step, notice that the last line of (\ref{eqn:appen-expectation-expand}) can be further expanded into:
\begin{equation}
\label{eqn:appen-8}
    \Ebb_{\pi_U} \Big[ \Ebb_{U} \big[ L(U, x, a, \tilde{a}) \,|\, R(a,x), A=a,X=x  \big] \Ibf(R(a,x) \in S)   \,\big|\, A=a, X=x \Big].
\end{equation}
Then according to the Radon-Nikodym Theorem \cite{durrett2019probability}, together with the conclusion from (\ref{eqn:appen-8}) and (\ref{eqn:appen-expectation-expand}), we reach the following equality
\begin{equation*}
\begin{split}
    & \Ebb_{\pi_U} \big[\Ibf(R(a,x) \in S) \,|\, A=\tilde{a}, X=x \big] \\ 
    & = \Ebb_{\pi_U} \Big[ \Ebb_{U} \big[ L(U, x, a, \tilde{a}) \,|\, R(a,x), A=a,X=x  \big] \Ibf(R(a,x) \in S)   \,\big|\, A=a, X=x \Big].
\end{split}
\end{equation*}
Given the fact that 
\[ 
\Ebb_{\pi_U} \big[\Ibf(R(a,x) \in S) \,|\, A=\tilde{a}, X=x \big]  = \Ebb_{\pi_U} \big[D(R(a,x), x, a, \tilde{a})\Ibf(R(a,x) \in S) \,|\, A=a, X=x \big],
\]
we finally obtain: 
\begin{equation}
\label{eqn:appen-eqn-nine}
    D(R(a,x), x, a, \tilde{a}) = \Ebb_{U} \big[ L(U, x, a, \tilde{a}) \,|\, R(a,x), A=a,X=x  \big], \, \text{almost surely.}
\end{equation}
Under the assumption from the Lemma statement that  $\pi_u(a|x)$ is continuous with respect to all $u\in \mathcal{U}$, for any $\delta > 0$ we can always find a path $u(\delta)$ such that $L\big(u(\delta), x, a, \tilde{a}\big) < \inf_{u} L(u, x, a, \tilde{a}) + \delta$ and $\lim_{\delta \to 0} u(\delta) = u$, for each $u \in \mathcal{U}$. We then show the boundedness of $D(r, x, a, \tilde{a})$ in the following two steps. 

Firstly, according to (\ref{eqn:appen-eqn-nine}) and (\ref{eqn:appen-eqn-five}), for almost all $r_1 \in \mathbb{R}$, we have:
\begin{equation*}
\begin{split}
    D(r_1, x, a, \tilde{a}) & = \Ebb_{U} \big[ L(U, x, a, \tilde{a}) \,|\, R(a,x)=r_1, A=a,x=x  \big] \\ 
    & = L\big(u(\delta),x,a,\tilde{a}\big) \Ebb \Big[\frac{L(U,x,a,\tilde{a})}{L\big(u(\delta),x,a,\tilde{a}\big)}   \,\big|\, R(a,x)=r_1, A=a, x=x\Big] \\ 
    & \leq \exp(2\alpha) L\big(u(\delta),x,a,\tilde{a}\big).
\end{split}
\end{equation*}

Secondly, by the above construction, for all almost all $r_2 \in \mathbb{R}$ we have:
\[ 
D(r_2, x, a, \tilde{a}) \geq \inf_{u} L(u,x,a,\tilde{a}) > L(u(\delta), x, a, \tilde{a}) - \delta.
\]

Together, we obtain: 
\[D(r_1, x, a, \tilde{a}) \leq \exp(2\alpha)L(u(\delta), x, a, \tilde{a}) < \exp(2\alpha) \big( D(r_2, x, a, \tilde{a}) + \delta \big).\]
By taking $\delta \to 0$, we reach the conclusion that for almost all $r_1, r_2 \in \mathbb{R}$,
\[ 
\frac{D(r_1, x, a, \tilde{a})}{D(r_2, x, a, \tilde{a})} \leq \exp(2\alpha),
\]
which completes the proof.
\end{proof}

\subsection{Proof of Lemma \ref{lemma:apv-eqv+}}
The proof consists mostly of standard convex optimization arguments.
\begin{proof}
Together with the two regularity constraints on the Radon-Nikodym derivative ratio term $D$, as well as the results from Lemma 1, finding the upper bound (the lower bound can be obtained analogously) for the RM can be given by a constraint optimization problem:
\begin{equation}
\label{eqn:potential-outcome-upper}
\begin{split}
  & \sup_{D \in \mathcal{D}} \Ebb \big[R(a,x)D(R(a,x), x, a, \tilde{a}) \big| A = a, X = x \big], \\ 
    & \text{s.t.}\quad \Ebb \big[D(R(a,x), x, a, \tilde{a}) | A = a, X = x\big] = 1, \\
    &\quad \quad 0 \leq D(r_1, x, a, \tilde{a}) \leq e^{2\alpha} D(r_2, x, a, \tilde{a}), \, \forall r_1,r_2,
\end{split}
\end{equation}
where the set $\mathcal{D}$ gives all the possible Radon-Nikodym derivative ratios under the standard measurable conditions. 

Applying the standard functional duality argument, the above optimization problem becomes:
\begin{equation}
\label{eqn:lagrangian}
\begin{split}
  & \inf_{\lambda} \sup_{D \in \mathcal{D}} \Ebb \big[(R(a,x)-\lambda)D(R(a,x), x, a, \tilde{a}) \big| A = a, X = x \big] + \lambda, \\ 
    & \text{s.t.}\quad 0 \leq D(r_1, x, a, \tilde{a}) \leq e^{2\alpha} D(r_2, x, a, \tilde{a}), \, \forall r_1,r_2.
\end{split}
\end{equation}
Notice that the-above result is led by the generalized Slater's condition, because the constraint set is obviously convex, and there exists a feasible $D \in \mathcal{D}$ that satisfies the constraint, i.e. $D(R(a,x), x, a, \tilde{a})\equiv 1$ is in $\mathcal{D}$ with:
\begin{equation*}
\begin{split}
    & \Ebb\big[D(R(a,x), x, a, \tilde{a})| A = a, X = x\big] = 1, \\
    & D(R(a,x), x, a, \tilde{a}) \geq 0, \\
    & D(r_1, x, a, \tilde{a}) \leq e^{2\alpha} D(r_2, x, a, \tilde{a}).
\end{split}
\end{equation*}
Therefore, the strong duality result holds \cite{boyd2004convex}. It is straightforward to verify that the supremum of the inner problem in (\ref{eqn:lagrangian}) is attained by 
\[
D^*(R(a,x), x, a, \tilde{a}) = C\cdot 1[R(a,x) - \lambda \geq 0 ] + C\cdot e^{2\alpha} 1[R(a,x) - \lambda 
\leq 0 ],
\]
where $1[\cdot]$ is the indicator function and $C\geq 0$ can be any parameter induced by the dual problem. Plugging this $D^*$ into the original problem, we obtain:
\begin{equation}
\begin{split}
\inf_{\lambda} \sup_{C\geq 0} \Ebb \big[C\cdot1[R(a,x) - \lambda \geq 0] + C\cdot e^{2\alpha} 1[R(a,x) - \lambda < 0] \,\big|\, A = a, X = x \big] + \lambda.
\end{split}
\end{equation}
The above objective can be directly converted to:
\begin{equation}
\begin{split}
    & \qquad \qquad \qquad \qquad \qquad \qquad \inf \lambda \\
    & \text{s.t} \quad \Ebb \big[1[R(a,x) - \lambda \geq 0] + e^{2\alpha} 1[R(a,x) - \lambda \leq 0] \,\big|\, A = a, X = x \big] \geq 0.
\end{split}
\end{equation}
Since the function $q(\lambda) := 1[R(a,x) - \lambda \geq 0] + e^{2\alpha} 1[R(a,x) - \lambda < 0] $ is a non-decreasing monotone function of $\lambda$, the optimal solution $\lambda^*$ should be the only root of the $\Ebb \big[1[R(a,x) - \lambda \geq 0] + e^{2\alpha} 1[R(a,x) - \lambda \leq 0] \,\big|\, A = a, X = x \big]$ for each $a$ and $x$. Therefore, solving the-above optimization problem can be treated as finding the root such that:
\[\Ebb \big[1[R(a,x) - \lambda \geq 0] + e^{2\alpha} 1[R(a,x) - \lambda < 0] \,\big|\, A = a, X = x \big]=0.\] 

Note that $-q(\lambda)$ is the derivative of 
$Q(\lambda) := \frac{1}{2}\big[(R(a,x) - \lambda)^2_{+} +  e^{2\alpha}(R(a,x) - \lambda)^2_{-}  \big],$
so the intuition is that finding the root of $q(\lambda)$ is equivalent to finding the minimizer of $Q(\lambda)$. 

To be rigorous, by invoking the \emph{integrand theory} \cite{rockafellar2009variational}, which enables switching the integral operator with infimum under mild regularity conditions (the details of which we omit here to avoid unnecessary complications), the original problem is solved by:
\begin{equation}
    \underset{f(\cdot)}{\text{minimize}}\, \Ebb \big[ \big(R(a,x) - f(x)\big)^2_{+} +  e^{2\alpha}\big(R(a,x) - f(x)\big)^2_{-} \,\big|\, A=a, X=x \big],
\end{equation}
which gives the result in Lemma 2.
\end{proof}

\section{Proof of Theorem 1}\label{sec:proof_thm_main}
We first present a technical lemma.
\begin{lemma}[Ledoux-Talagrand contraction \cite{ledoux2013probability}]
\label{lemma:appen}
Let $f: \mathbb{R}_{+} \to \mathbb{R}_{+}$ be convex and increasing. Let $\phi_i : \mathbb{R}\to \mathbb{R}$ satisfy $\phi_i(0)=0$ and be Lipschitz with constant $L$, then for independent Rademacher random variables $\epsilon_i$, the following inequality holds for any $T \subset \mathbb{R}^n$:
\begin{equation}
\label{eqn:ledoux}
    \Ebb f \Big(\frac{1}{2} \sum_{t \in T} \Big| \sum_{i=1}^n \epsilon_i \phi_i(t_i) \Big| \Big) \leq \Ebb f \Big(\L \sum_{t \in T} \Big| \sum_{i=1}^n \epsilon_i t_i \Big| \Big).
\end{equation}
\end{lemma}

\begin{proof}
For our proof, we let 
\begin{equation}
\label{eqn:appen-def-S}
    S(\pi) = \sup_{\pi \in \mathcal{F}} \Big|\frac{1}{n}\sum_{i=1}^n \big( \sum_{a \in \mathcal{A}} \big\{ \pi(a|x_i) \hat{r}(a, x_i) + \Ibf(a=a_i) (r_i - \hat{r}(a, x_i)) \frac{\pi(a|x_i)}{\pi_0(a|x_i)}   \big\} \big) - V(\pi) \Big|,
\end{equation}
where we use an alternative expression for the DR (which is easy to verify). Define $w_0(a,x_i) = 1 / \pi_0(a|x_i)$. Then it is straightforward to show that for each $i$ in (\ref{eqn:appen-def-S}), we have:
\begin{equation*}
\begin{split}
    & \sum_{a \in \mathcal{A}} \big\{ \pi(a|x_i) \hat{r}(a, x_i) + \Ibf(a=a_i) (r_i - \hat{r}(a, x_i)) \frac{\pi(a|x_i)}{\pi_0(a|x_i)}   \big\}  \\ 
    & = \sum_{a \in \mathcal{A}} \big( \pi(a|x_i) \hat{r}(a, x_i) \big) + (r_i - \hat{r}(a_i, x_i)) \pi(a_i|x_i) w_0(a_i,x_i) \\
    & \in \big[-\frac{(q+1)M_{\alpha}+\overline{r}}{q}, \frac{(q+1)M_{\alpha}+\overline{r}}{q} \big].
\end{split}
\end{equation*}
Therefore, (\ref{eqn:appen-def-S}) has bounded difference with constant $2\frac{(q+1)M_{\alpha}+\overline{r}}{q}$. By the Hoeffding's inequality for bounded random variables, 
\begin{equation}
\label{eqn:appen-concentration-1}
    p\Big(S - \Ebb[S] \geq \delta \Big) \leq \exp \Big( \frac{- \delta^2 n q^2}{8((q+1)M_{\alpha} + \overline{r})^2}  \Big), \, \forall \delta > 0.
\end{equation}

Then we define the shorthand notation
\[ 
Q_i(a, x_i) =  \pi(a|x_i) \hat{r}(a, x_i) + \Ibf(a=a_i) \big(r_i - \hat{r}(a, x_i)\big) \frac{\pi(a|x_i)}{\pi_0(a|x_i)} .
\]
Notice that under $\pi_0$, DR is unbiased even if the RM estimator is misspecified, so $\Ebb \big[ \frac{1}{n}\sum_{i=1}^n \sum_{a\in\mathcal{A}} Q_i(a,x_i) \big] = V(\pi)$. Hence, let $x_i^{'}$ be an i.i.d copy of $x_i$ for all $i=1,\ldots,n$ and $\epsilon_{i,a}$ be the i.i.d Rademacher random variables, and we have
\begin{equation}
\label{eqn:appen-symmetrization}
\begin{split}
    \Ebb [S] & = \Ebb \Big[ \sup_{\pi \in \mathcal{F}} \Big| \frac{1}{n}\sum_{i=1}^n \sum_{a\in\mathcal{A}} Q_i(a, x_i) - \frac{1}{n}\sum_{i=1}^n \sum_{a\in \mathcal{A}} Q_i(a, x_i^{'}) \Big|   \Big] \\ 
    & = \Ebb \Big[ \sum_{\pi \in \mathcal{F}} \Big| \frac{1}{n}\sum_{i=1}^n \sum_{a\in \mathcal{A}} \epsilon_{i,a} \big(Q_i(a,x_i) - Q_i(a, x_i^{'}) \big) \Big|\Big] \\ 
    & \leq 2 \Ebb \sup_{\pi_\mathcal{F}} \Big| \frac{1}{n}\sum_{i=1}^n \sum_{a\in \mathcal{A}} \epsilon_{i,a} Q_i(a, x_i)   \Big|.
\end{split}
\end{equation}
It is straightforward to show that 
\[ 
\min \big\{ -M_{\alpha}, -\frac{-\overline{r}}{q} \big\} \leq Q_i(a,x_i) \leq \max \big\{ M_{\alpha}, -\frac{\overline{r}}{q} \big\}, 
\]
so by the Lemma \ref{lemma:appen}, we have:
\begin{equation}
\label{eqn:appen-talagrand}
    \Ebb \sup_{\pi_\mathcal{F}} \Big| \frac{1}{n}\sum_{i=1}^n \sum_{a\in\mathcal{A}} \epsilon_{i,a} Q_i(a, x_i) \Big| \leq 2\max \big\{ M_{\alpha}, -\frac{\overline{r}}{q} \big\} \Ebb \underbrace{\sup_{\pi \in \mathcal{F}} \big| \frac{1}{n}\sum_{i=1}^n \sum_{a\in \mathcal{A}} \epsilon_{i,a}\pi(a|x_i)  \big|}_{R_n(\mathcal{F})}.
\end{equation}

Notice that for the Rademacher complexity $R_n(\mathcal{F})$, each $\epsilon_{i,a} \pi(a|x_i)$ term is bounded by $[-1,1]$, so again by the bounded difference inequality, we can first obtain: 
\begin{equation}
\label{eqn:appen-concentration-2}
    p\Big( \Ebb R_n(\mathcal{F}) - R_n(\mathcal{F}) \geq \delta  \Big) \leq \exp \big( \frac{-\delta^2 n}{2} \big), \, \forall \delta > 0.
\end{equation}
Then, when $\pi = \pi_0$, we have:
\[  
\hat{V}(\pi_0) = \frac{1}{n}\sum_{i=1}^n\Big( \sum_{a\in\mathcal{A}} \big\{ \pi_0(a | x_i) \hat{r}(a,x_i) + (r_i - \hat{r}(a,x_i)\Ibf(a=a_i)  \big\}  \Big),
\]
and it is easy to show that each term inside the parentheses is bounded by \\ 
$ \big[ -\frac{(q+1)M_{\alpha} + q\overline{r}}{q},  \frac{(q+1)M_{\alpha} + q\overline{r}}{q} \big]$, so 
\begin{equation}
\label{eqn:appen-concentration-3}
    p\Big( \big| \hat{V}_{\text{DR}}(\pi_0) - V(\pi_0)  \big| \geq \delta  \Big) \leq \exp \Big( \frac{-\delta^2 n q^2}{8((q+1)M_{\alpha} + q\overline{r})^2} \Big), \, \forall \delta > 0.
\end{equation}

Combining (\ref{eqn:appen-concentration-1}), (\ref{eqn:appen-symmetrization}), (\ref{eqn:appen-talagrand}), (\ref{eqn:appen-concentration-2}) and (\ref{eqn:appen-concentration-3}), we have that for $\forall \delta_1, \delta_2, \delta_3 > 0$, and for $\forall \pi \in \mathcal{F}$,
\begin{equation}
\label{eqn:appen-concentration-4}
\begin{split}
    & p\Big( V(\pi) - V(\pi_0) \geq \hat{V}_{\text{DR}}(\pi) - \hat{V}_{\text{DR}}(\pi_0) -\delta_3 - \delta_2 - \delta_1 - 2\max\big\{ M_{\alpha}, \frac{\overline{r}}{q} \big\} R_n(\mathcal{F}) \Big) \\ 
    & \geq 1 - \exp \Big(\frac{- \delta_1^2 n q^2}{8((q+1)M_{\alpha} + \overline{r})^2}\Big) - \exp \Big( \frac{-\delta_2^2 n}{8 \big( \max \{M_{\alpha}^2, \frac{\overline{r}^2}{q^2} \} \big)} \Big) \\
    & \qquad \qquad \qquad \qquad \qquad  - \exp \Big( \frac{-\delta_1^2 n q^2}{8((q+1)M_{\alpha} + q\overline{r})^2} \Big)
\end{split}
\end{equation}
Finally, it is straightforward to verify that when $0 < q < \frac{1}{2}$, by setting $\delta_1 = \delta_2 = \delta_3$, the RHS of (\ref{eqn:appen-concentration-4}) is bounded below by $1-3\exp \Big(\frac{- \delta_1^2 n q^2}{8((q+1)M_{\alpha} + q\overline{r})^2}\Big)$. 

It is straightforward to see that the relation in the LHS of (\ref{eqn:appen-concentration-4}) still hold if we replace $\hat{V}_{\text{DR}}(\pi)$ by its lower bound. Hence, for any $\epsilon > 0$, we let $\delta_1 = 2\big( \frac{q+1}{q}M_{\alpha} + \overline{r}\big) \sqrt{\frac{2\log (3/\epsilon)}{n}}$ and obtain the final result of Theorem 1.
\end{proof}

\section{Explanation of Minorize-maximization procedure for DR}
In the reward-maximization setting where the candidate policy is parameterized as $\pi_{\theta} \in \mathcal{F} := \{\pi_{\theta} \, |\, \theta \in \Theta\}$, the minimax training objective becomes:
\begin{equation*}
    \underset{\pi_{\theta} \in \mathcal{F}}{\text{maximize}} \min_{\pi_U: e^{-\alpha} \leq \pi_0/\pi_{U} \leq e^{\alpha}} \hat{V}(\pi_{\theta}; \pi_U),
\end{equation*}
Notice that the above objective function can be nonconcave-nonconvex in general, so the Minimax Theorem \citep{sion1958general} does not apply, and the order of $\min$ and $\max$ can not be switched. Therefore, the lower-bounding methods from the previous section, which holds for \emph{any} given $\pi_{\theta}$, play an important role in finding the approximate solution for the minimax problem.
We illustrate how to plug in the lower bounds using the DR estimator as an example, since it includes both the IPS and RM estimator as special cases. 
We use $\hat{V}_{DR}(\pi_{\theta};p,r)$ to denote the objective function for lower-bounding the DR estimator, where $p:=p(a|x)$ and $r:=r(a,x)$ are the optimization policy and reward model, and $\Pi(\pi_0;\alpha)$ and $\mathcal{R}(\pi_0;\alpha)$ give the corresponding constraint sets. It then holds that:
\begin{equation}
\label{eqn:minimax-DR}
\begin{split}
    & \max_{\pi_{\theta} \in \mathcal{F}}\, \min_{e^{-\alpha} \leq \pi_0/\pi_{U} \leq e^{\alpha}} \hat{V}_{\text{DR}}(\pi_{\theta}; \pi_U) \\
    & \geq \max_{\pi_{\theta} \in \mathcal{F}}\, \inf_{p \in \Pi(\pi_0;\alpha), r \in \mathcal{R}(\pi_0;\alpha)} \hat{V}_{DR}(\pi_{\theta};p,r) \\
    & \geq \max_{\pi_{\theta} \in \mathcal{F}} \underline{\hat{V}_{\text{DR}}(\pi_{\theta})}.
\end{split}
\end{equation}
Therefore, we can instead consider optimizing the lower bound of the minimax problem, which leads to a strict non-decreasing optimization path and can be implemented efficiently. In particular, we adopt the minorize-maximization approach described in Algorithm 1. We also provide a sketched visual illustration in Figure \ref{fig:optimization} to show the working mechanism. Notice that the constraint sets $\mathcal{R}(\pi_0;\alpha)$ and $\Pi(\pi_0;\alpha)$ do not depend on $\pi_{\theta}$, so we only need to compute them once at the beginning of the optimization. 

\begin{figure}[bht]
    \centering
    \includegraphics[width=0.6\linewidth]{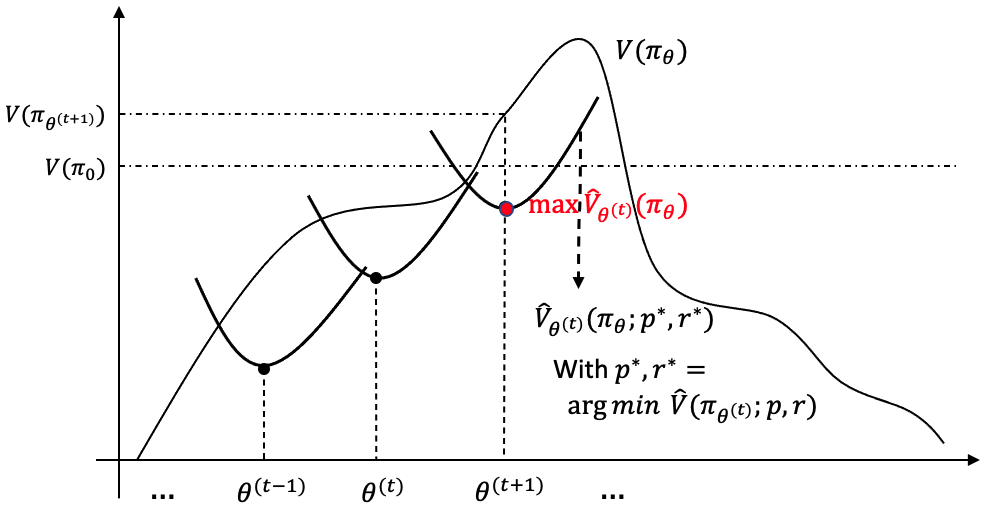}
    \caption{A sketched visual illustration for the minorize-maximization optimization procedure.}
    \label{fig:optimization}
\end{figure}

\section{Supplement Material for Experiments and Results}

We provide the detailed experiment descriptions and complete numerical results in this part of the paper.

\subsection{Simulation settings}

\textbf{Training, validation and testing.}

For all the benchmark datasets, we do the train-validation-test split by 56\%-24\%-20\%. We split the data this way because some of the datasets in the UCI repository\footnote{https://archive.ics.uci.edu/} already has a train-test split so we adopt the provided setting. 

We first point out that the purpose of our our train-validation-test split are two-folds:
\begin{itemize}
    \item a regular train-validation-test split for off-policy optimization;
    \item an extra train-validation split within the regular training data for computing the surrogate functions for RM (via the auxiliary ERM).
\end{itemize}

To obtain the upper and lower bound functional for the RM estimator, we conduct the auxiliary risk minimization procedure described in Section 3.1. We use the \emph{gradient boosting regression tree} \cite{chen2016xgboost} as the parametric prediction function. 
Now there is an extra via in training surrogate functions $\hat{f}$ and $\hat{g}$ using the auxiliary ERM, because their validation should not be carried out on the usual validation set to avoid information leak. To tune the hyper-parameters of the surrogate functions for the RM, we further split the training data into RM-specific training and validation set by 80\%-20\% (Figure \ref{fig:appen-split}). The original validation dataset is kept untouched in this step.

For training gradient boosting regression tree, we choose the \emph{learning rate} from $\{0.01, 0.1, 0.3, 0.5\}$ and the \emph{maximum depth} from $\{2,3,4\}$ according to the risk on the RM-specific validation set. The other parameters are kept as default in Xgboost.

\begin{figure}[hbt]
    \centering
    \includegraphics[width=0.5\linewidth]{./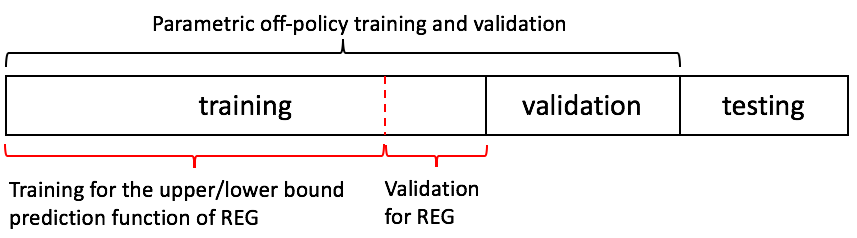}
    \caption{Training-validation-testing split}
    \label{fig:appen-split}
\end{figure}

The parametric off-policy optimization for $\pi_{\theta}$ is then conducted on the standard training and validation set. 


\textbf{{The multi-layer perceptron as parametric policy class}}

We consider the multi-layer perceptron (MLP) as our parametric policy class. Since the number of features is quite small for the benchmark datasets, we decide to use the \emph{two-layer} MLP with ReLU as activation function, which is capable of capturing non-linear feature interaction effects. 

We treat the number of hidden factors in the intermediate layer as the tuning parameter and select it from $\{3,5,7,9,11\}$. We do not consider dropout or regularization since the number of features is small in the simulations. We use the Adam optimizer with fixed a learning rate.

\textbf{{Computation}}

We leverage the open-source package \emph{Xgboost}\footnote{https://github.com/dmlc/xgboost} for training the {gradient boosting regression trees} to obtain the surrogate functions of RM. We choose \emph{Xgboost} for the specific reason that it allows us to customize the loss function, so we can directly work with the auxiliary ERM. 

The two-layer MLP for our off-policy optimization is implemented with the auto-differentiation framework \texttt{PyTorch}\footnote{https://pytorch.org/}. All the models are trained on a 16-Gb Linux-system machine with one Tesla V100 GPU.

As for computing the lower bounds for IPS and DR according to Equation (13) and (14), we apply off-the-shelf solver since the explicit expressions for the updates are explicit. 

All the datasets and executable implementation code are provided as a part of the supplementary material.

\subsection{Additional numerical results for simulation}

We discuss how the DR objective value, given by the $\hat{V}_{\text{DR}}(\pi_{\theta^{\text{new}}})$ in Algorithm 1, as well as the \emph{oracle} value of the candidate policy changes on the testing data as the training progresses.

\textbf{{The oracle value}}

Recall from the simulation setting that we have access to the following \emph{oracle} value that reveals the expected reward of the candidate policy:
\[\Ebb \Big [\sum_{a \in \mathcal{A}}r(a) \pi_{\theta}(a|x) \Big],\] 
where the expectation is taken with respect to the empirical data distribution.
This is because the feedback data is constructed from the cost-sensitive classification setting, so we have access to the full set of rewards that enables the computation of the oracle value.

\textbf{{The impact of $\alpha$ on training}}

For each benchmark dataset, we show the training progress of DR objective value ($\hat{V}_{\text{DR}}(\pi_{\theta})$ in Algorithm 1) evaluated on the testing data, in the first column of Figure \ref{fig:appen-complete}. 

In the second to the fourth column, we compare the DR objective values and the oracle values during the training process, also on the testing data, under different $\alpha$. 

\begin{figure}
    \centering
    \includegraphics[width=0.77\linewidth]{./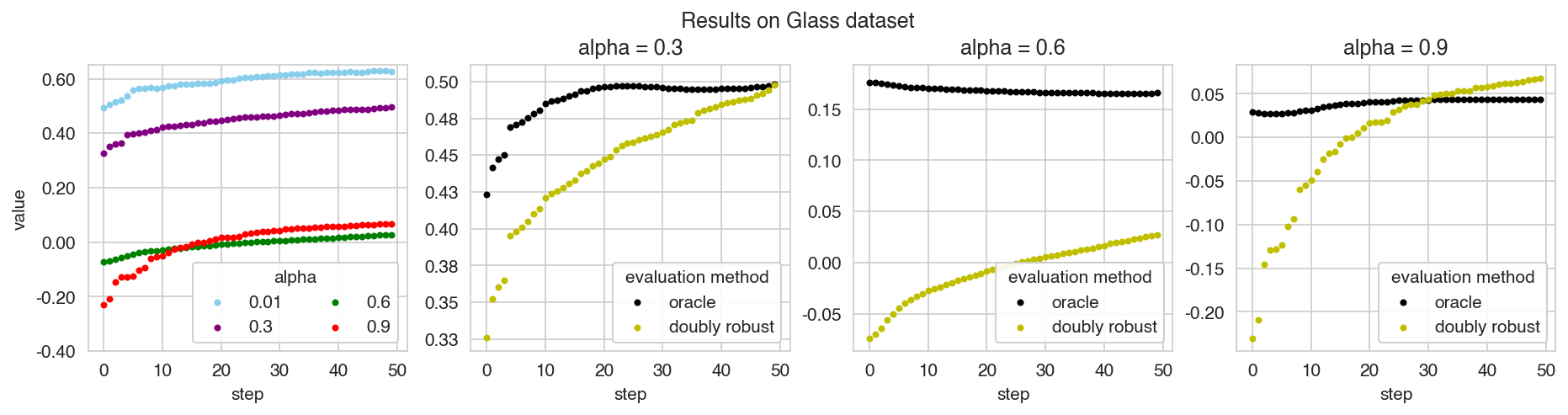}
    \includegraphics[width=0.77\linewidth]{./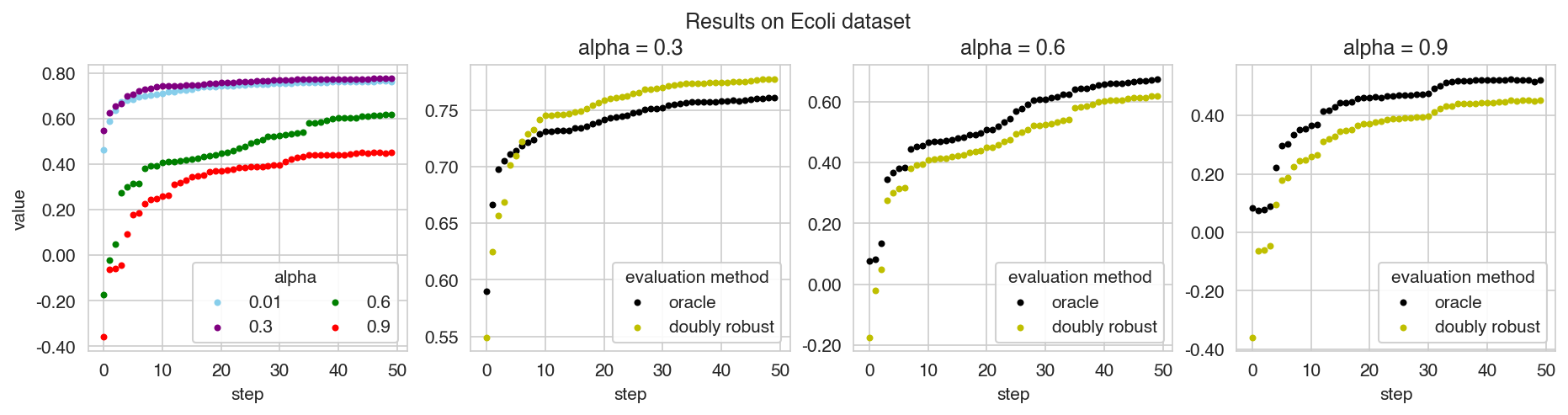}
    \includegraphics[width=0.77\linewidth]{./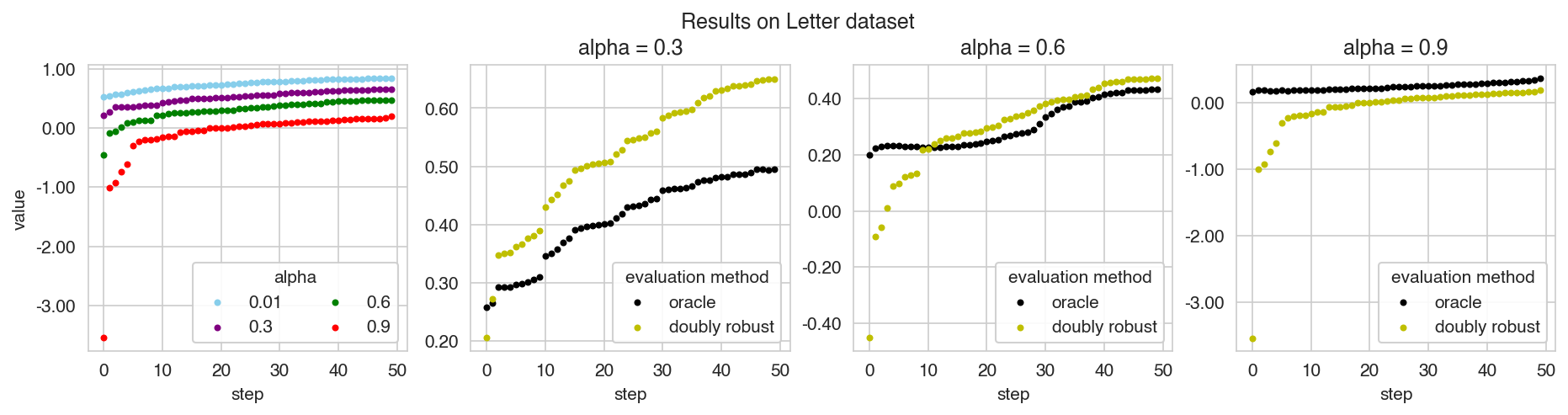}
    \includegraphics[width=0.77\linewidth]{./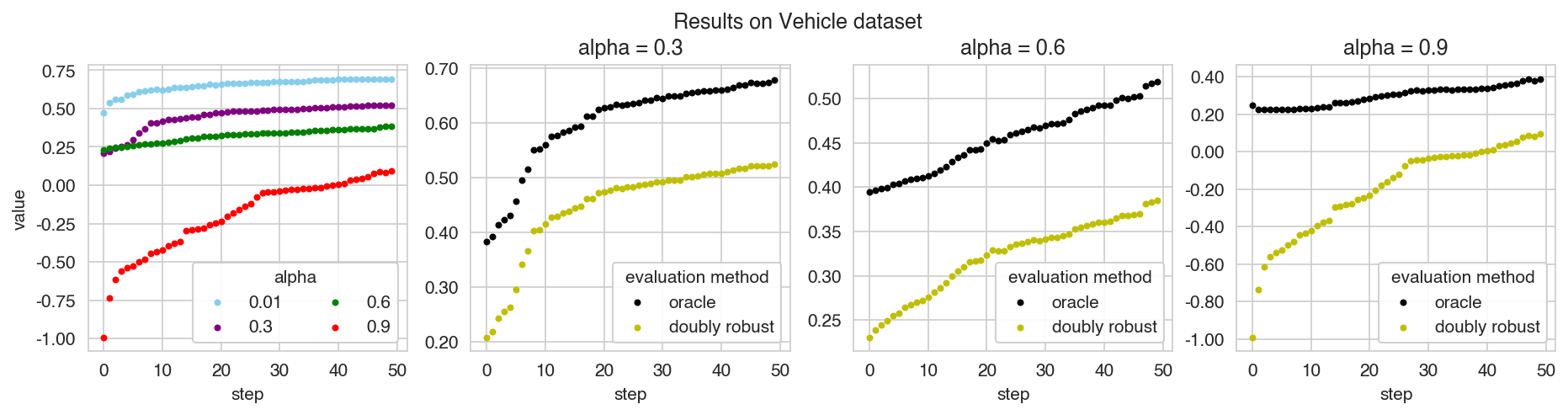}
    \includegraphics[width=0.77\linewidth]{./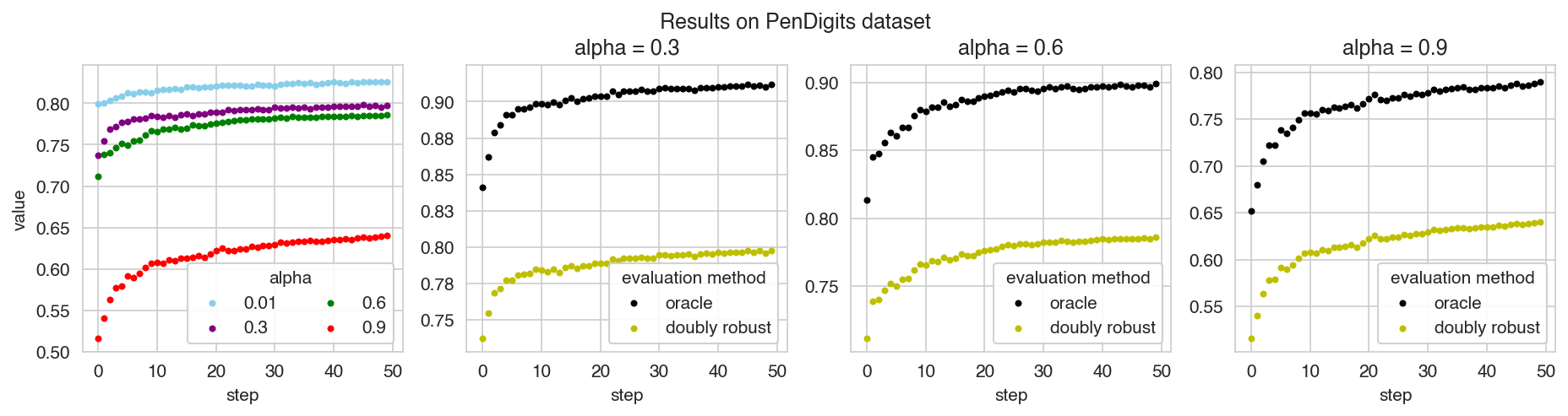}
    \includegraphics[width=0.77\linewidth]{./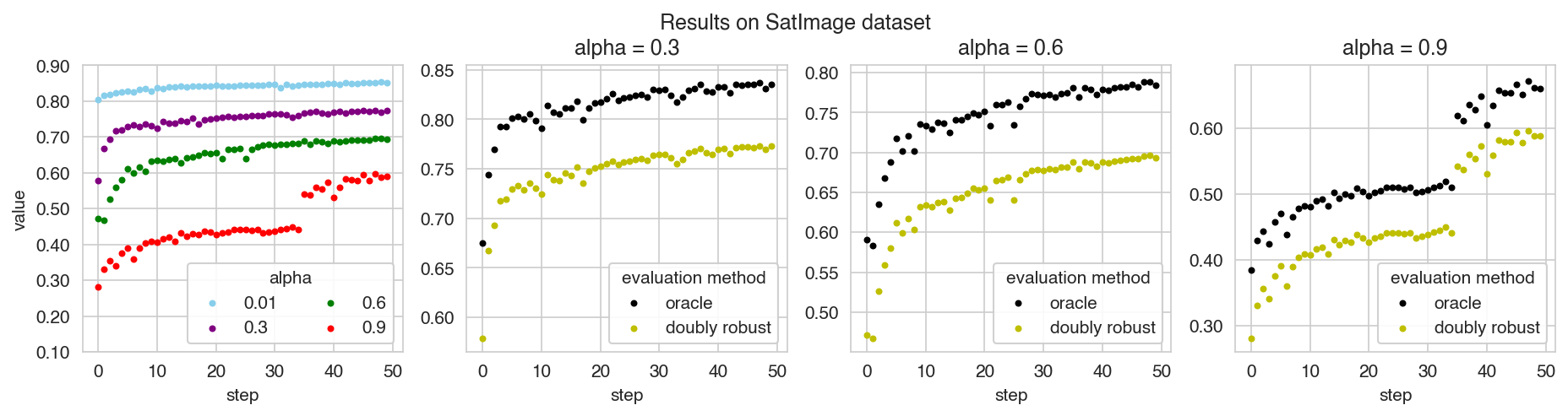}
    \caption{The training progress for the proposed off-policy optimization under different $\alpha$.}
    \label{fig:appen-complete}
\end{figure}

We make the following conclusions from the numerical results.
\begin{itemize}
    \item Firstly, the observation that a larger $\alpha$ disturbs the off-policy optimization is consistent across all datasets. When the training stabilizes, the DR objective value under larger $\alpha$ is uniformly smaller, which is obvious by looking at the first column of Figure \ref{fig:appen-complete}. 
    
    \item Secondly, when $\alpha$ is moderate, e.g. $\alpha=0.3$ (the second column of Figure \ref{fig:appen-complete}), the oracle value is improving consistently during the training process, which demonstrates the effectiveness of the proposed minimization-maximization optimization. 
    
    \item Thirdly, when $\alpha$ gets very large, making improvements in terms of the oracle value becomes impossible. Even though the DR objective value is still increasing on the testing data, the oracle value remains unchanged (in particular, the \texttt{Glass} dataset with $\alpha=0.6$ and $\alpha=0.9$, the \texttt{Letter} dataset with $\alpha=0.9$ in Figure \ref{fig:appen-complete}).
    
    \item Finally, the tradeoff between the degree of uncertainty and optimization accuracy (effectiveness) is different for each dataset. For instance, the tradeoff on the \texttt{Glass} dataset take place for $\alpha \leq 0.6$ (the first row in Figure \ref{fig:appen-complete}), but on the \texttt{Ecoli} and \texttt{PenDigits} datasets, we may expect $\alpha \geq 0.9$ (the second and the fifth row of Figure \ref{fig:appen-complete}).
\end{itemize}

In a nutshell, the proposed minimization-maximization algorithm effectively handles off-policy optimization under moderate $\alpha$. When $\alpha$ get too large, we bring too much fluctuation to the training, which may eventually cause an overlarge gap between the DR maximization objective (the purple, blue and green curves in Figure 2) and the true value (the red curve in Figure 2), making it impractical to achieve further improvements. 

The above numerical results further support our theoretical discussions following Theorem 1. Here, we show from a practical perspective on how the different choices of $\alpha$ could affect the training progress. Notice that the impact of $\alpha$ depends on the dataset, so it intrinsically reveals a data-specific property in terms of making robust prediction. Therefore, it is always recommended to apply the prior knowledge or carrying out pilot experiments to determine the suitable $\alpha$.

\subsection{Real-world experiments}
\label{sec:real-world}

The motivation of our work stems from the demand of making robust prediction during off-policy learning for real-world problems. For our experiments, we design off-policy learning for one of the largest e-commerce platforms in the U.S, using both the history data and the online production environment.

\textbf{Problem setting and experiment descriptions}

The daily personalized homepage recommendation of the platform relies on the contextual bandit, and the feedback data is generated according to the description in Section 3 of the paper. 

The (frequent) customers of the platform are tagged with \textit{personas} such as pet lover, sport enthusiast, busy family, etc. The actions $a$ corresponds to the candidate products, which is of the magnitude of several hundreds designated by the business team. Each candidate product has an \emph{affinity score} with the customer personas, e.g. $x_{\text{product, persona}}$. The context is given by concatenating all the persona-product affinity scores, i.e. $\vec{x} = \big(x_{1,1}, \ldots, x_{n_1, k} \big)$, where $n_1$, $k$ are the number of personas and number of products (actions). The data-collection workflow can be described as below:
\begin{itemize}
    \item the front-end sends an request when a customer lands on the homepage;
    \item the feature vector $\vec{x}$ is acquired from the megacache according to the personas of the user. Here, $x_{i,j}, j=1,\ldots,k$ is treated as missing value if the customer is not tagged with feature $k$;
    \item the top-ten recommendation products are sampled without replacement according to $\pi_{\text{log}}(a|\vec{x})$.
\end{itemize}
On our e-commerce platform, the personas are updated on a daily basis. This is equivalent to say that the policy value is unchanged for each customer throughout the day. So in practice, the policy values $\pi_{\text{log}}(a|\vec{x})$ can be stored in the cache after the first-time computation, and need not be re-computed until the personas are updated. Also, even though the top-ten recommendations are made mutually exclusive (the same item cannot appear twice), we still treat them as being sampled independently.

Here, the $\pi_{\text{log}}$ is finalized according to the following process:
\begin{itemize}
    \item the data science team proposes a design policy $\pi_{\text{design}}$ according to the off-policy learning, and push the model to the mega cache for online serving;
    \item the business team decides how to manage the traffic at peak time and make adjustments under real-time supply-demand availability, which leads to the logging policy $\pi_{\text{log}}$.
\end{itemize}
We provide a sketched diagram in Figure \ref{fig:diagram} to illustrate on the workflow. 

\begin{figure}[htb]
    \centering
    \includegraphics[width=0.6\linewidth]{./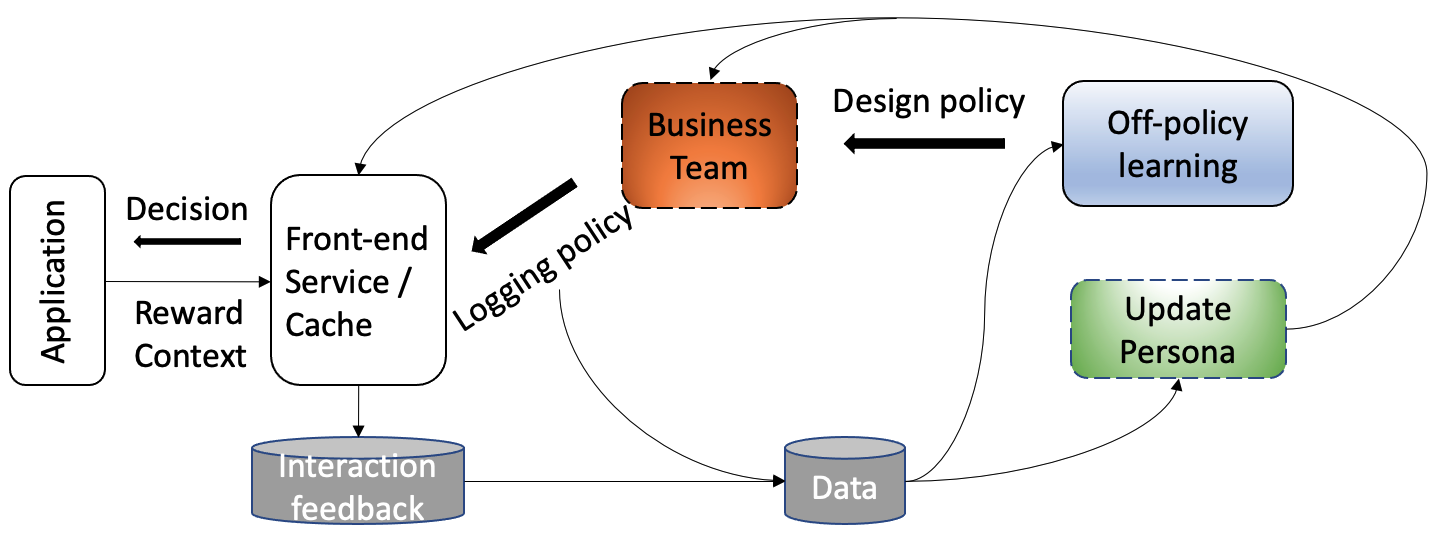}
    \caption{A sketched diagram of the workflow on the e-commerce platform. Here, we omitted the irrelevant parts and only show the components that are relevant to the off-policy learning.}
    \label{fig:diagram}
\end{figure}

For the offline experiments, we collect a feedback data that consists of $\sim$ 200 product (actions) and $\sim$200,000 samples. The history logging policy $\pi_0$ is revealed, where the extreme propensity score values do occur, e.g. $\pi_0(a|x) \leq 1e^{-4}$. By a preliminary experiment, we find using $q=0.01$ as the cutoff gives a reasonable bias-variance tradeoff. 

The full feedback data is split into training, validation and testing set by 70\%, 15\%, 15\%, according to the timestamp in a chronological order. We use the \emph{gradient boosting regression tree} as parametric policy, and apply the minimization-maximization procedure in Algorithm 1 for training. The turning parameters are selected according to the objective value $\hat{V}(\pi_{\theta})$ from Algorithm 1 on the validation data. In particular, we treat the learning rate, number of estimators and maximum tree depth as tuning parameters.

\textbf{Complete numerical results of real-world experiments}

We experiment with $\alpha \in \{0.01, 0.05, 0.1, 0.3\}$ for off-policy learning on the real-world dataset. The outcome is provided in Figure \ref{fig:real-dat-alpha}, where we compare the \textbf{standard} off-policy learning and the proposed \textbf{robust} off-policy learning approach. 

\begin{figure}[htb]
    \centering
    \includegraphics[width=\linewidth]{./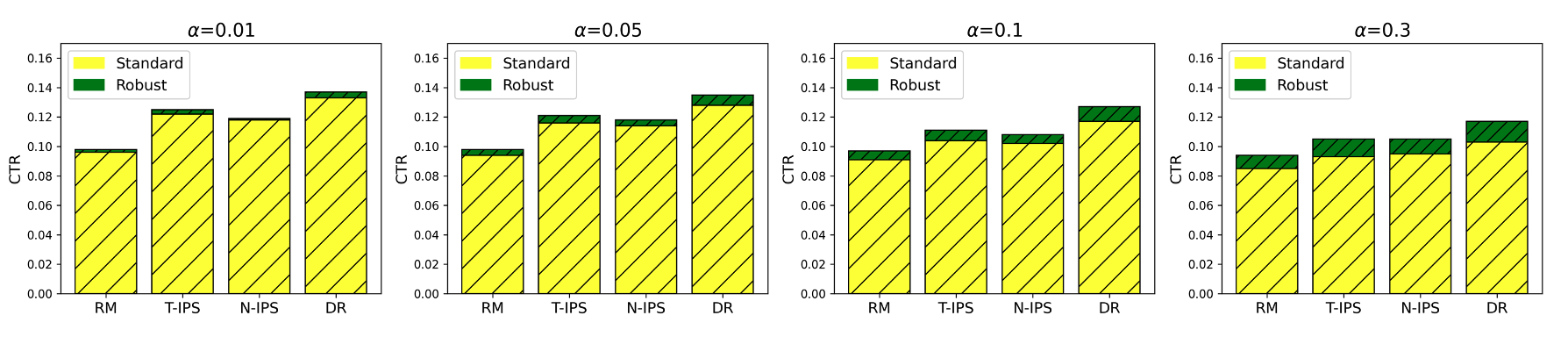}
    \caption{The real-world off-policy learning outcome (examined on the testing data) under different uncertainty degree $\alpha$.}
    \label{fig:real-dat-alpha}
\end{figure}

As we mentioned before, the IPS method suffers from stability issue, so we do not consider it here. For the truncated IPS method, we use $q=0.005$ as the cutoff on the propensity score; and for the normalized IPS method, we use the workaround proposed in Appendix \ref{sec:variant-ips} to conduct the proposed off-policy learning. For the DR method, we also use $q=0.01$ to truncate the propensity score to avoid large variances. The standard approach and the proposed approach are trained and validated on the same datasets using the same settings.

From the results in Figure \ref{fig:real-dat-alpha}, we observe a similar pattern as in the simulation where the proposed prediction-robust approach provide more significant improvements under larger $\alpha$. When $\alpha$ is relatively small, the performance is almost the same as the standard approach. It demonstrates the effectiveness of the proposed approach using the actual feedback data (rather than generated) when uncertainty exists during the testing time. Also, the DR method gives the best outcome uniformly, while the nIPS method is still outperformed by the tIPS, probably due to the inefficiencies of the workaround approach that we analyzed in Section \ref{sec:variant-ips}.





\end{document}


\maketitle

In \ref{sec:variant-ips}, we discuss the robust off-policy learning with the truncated and normalized IPS. We then provide the proof for Lemma 1, Lemma 2 and Theorem 1 in A.2 and A.3. We provide comprehensive interpretation for our max-min optimization in A.4. In A.5, we present the detailed experiment setup and additional numerical results. The implementation of our simulation studies is provided as separate supplement material for the submission.

\section{Methods for the truncated and normalized IPS}
\label{sec:variant-ips}

The most common variants of the IPS estimator are the truncated IPS and normalized IPS \cite{swaminathan2015self}. Even though they are biased estimators, in many cases, they significantly reduces the variances and thus provide a improved bias-variance tradeoff. Since they are not the focus of our paper, we refer the readers to \cite{vlassis2019design} for reference on understanding the design of off-policy estimators. 

It is obvious that the proposed lower-bounding approach and minorize-maximization procedure are fully compatible with the \emph{truncated IPS} (and when it is used to also reduce the variance DR), by simply truncating the constraint sets at the given threshold. For the lower-bounding problem, the objective and constraints are still separable on each sample $i$, and there is no increase in the computation complexity:

\begin{equation}
\label{eqn:optimization-TIPS}
\begin{split}
    & \qquad \qquad \qquad \text{minimize} \quad \frac{1}{n} \sum_{i=1}^n  \frac{\pi(a_i|x_i)}{p(a_i | x_i)} r_i \\
    & \text{s.t.} \quad e^{-\alpha} \max\big\{     q, \pi_{0}(a_i | x_i) \big\} \leq p(a_i | x_i) \leq \min\big\{ e^{\alpha} \pi_{0}(a_i | x_i), 1 \big \},
\end{split}
\end{equation}
where $q$ is the selected cutoff to reflect the minimum propensity score allowed. 
Compared with Equation (13) in the paper, which gives the constraint optimization problem for lower-bounding the IPS, the second set of constraints is removed here as we no longer require:
\[\sum_{a \in \mathcal{A}} p(a|x_i) \leq 1, \text{ for } i=1,\ldots,n.
\]
Our considerations are two folds: 
\begin{itemize}
\item the above normalization constraint will inevitably lead to small propensity scores $p(a|x_i)$, which opposes the inital goal of avoiding large variance;
\item truncated IPS is known to be biased, so we are less concerned about $p(a|x)$ not being properly normalized as long as we are able to reduce the variance. 
\end{itemize}
We do not experiment with the truncated IPS in simulation, because the benchmark datasets are relatively small so we find out that the results can be overly sensitive to the cutoff, and are thus not representative of the general case when applying the truncated IPS. 

The normalized propensity score method (nIPS) is another common control covariate estimator that often gives an improved bias-variance tradeoff for the IPS estimator:
\begin{equation}
    \hat{V}_{\text{nIPS}}(\pi;\pi_0) = \frac{1}{C(\pi, \pi_0)} \sum_{i=1}^n \frac{\pi(a_i|x_i)}{\pi_0(a_i|x_i)} r_i,
\end{equation}
where $C(\pi, \pi_0) = \sum_{i=1}^n \frac{\pi(a_i|x_i)}{\pi_0(a_i|x_i)}$ is the control covariate. The nIPS can be helpful when $\pi$ is very different from $\pi_0$ or when $\pi_0(a|x)$ gets very small, because IPS can have large variance under such circumstances. Recall that our goal is to lower-bound $\hat{V}_{\text{nIPS}}(\pi;\pi_U)$, where $\pi_U$ is subject to the same constraints as in (\ref{eqn:optimization-TIPS}) or Equation (13) in the paper. 

Again, we consider using the change of variable: $w = 1 / p$, where $p(a_i | x_i)$ is the optimization policy that takes the place of $\pi_U$ in our optimization problem. The lower bound of $\hat{V}_{\text{nIPS}}(\pi)$ is then given by:
\begin{equation}
\label{eqn:optimization-nIPS}
\begin{split}
    & \qquad \qquad \qquad \text{minimize} \quad \frac{\sum_{i=1}^n w(a_i, x_i)\pi(a_i|x_i)}{\sum_{i=1}^n w(a_i, x_i)\pi_0(a_i|x_i)}r_i \\ 
    & \text{s.t.} \quad \frac{e^{-\alpha}}{\pi_0(a_i|x_i)} \leq w(a_i,x_i) \leq \frac{e^{\alpha}}{\pi_0(a_i|x_i)}, \quad  \sum_{a\in \mathcal{A}} \frac{1}{w(a_i,x_i)} \leq 1, \text{ for } i=1,\ldots,n.
\end{split}
\end{equation}

The objective function in (\ref{eqn:optimization-nIPS}) is quasiconvex, and the constraints are convex. Therefore, obtaining the minimum value is feasible in theory. However, the number of optimization parameters is $\mathcal{O}(nk)$, and unlike the cases for IPS and DR where the objective and constraints are separable with respect to each instance $i$, the objective function here is not separable. As a consequence, the exact computation for lower-bounding nIPS is impractical, especially on large datasets, due to the dependency on $n$. We also point out that the second set of constraints enforces the optimization policy to be a valid PDF. Unlike the truncated IPS, we keep it here for completeness.

To solve the computation issue, we come up with a possible two-step \emph{greedy} workaround method for lower-bounding the nIPS: 
\begin{itemize}
    \item first compute the lower bound for the IPS to obtain the optimal \emph{optimization policy} $p^*$ and the corresponding lower bound $\underline{\hat{V}_{\text{IPS}}}(\pi)$;
    \item then let $n\underline{\hat{V}_{\text{IPS}}}(\pi)/C(\pi,p^* )$ be the lower bound for $\hat{V}_{\text{nIPS}}(\pi)$.
\end{itemize}

The advantage of the proposed workaround is that the computation complexity is now almost the same as the IPS method. Importantly, we are able to parallelize the computation and get rid of the dependency on $n$. However, the gap between the workaround solution and the original solution is not controlled or bounded, which is clearly a disadvantage of the greedy approach. 

For illustration, we experiment with the workaround method for nIPS in the off-policy optimization problem. Similar to the setting for Figure 4(lower) in the paper, we examine the robustness of the learnt policy via the prediction fluctuation $\hat{V}(\pi_{\theta}) - \underline{\hat{V}}(\pi_{\theta})$ on testing data. The results on several datasets are provided in Table \ref{tab:appen-nIPS}, where we also list the results of DR for direct comparisons. We see that when compared with the DR, the robustness for the workaround method of nIPS is somewhat weaker, which is likely to be caused by the following two reasons:
\begin{itemize}
    \item the inaccurate approximations caused by the greedy workaround method when computing the lower bound of nIPS;
    \item the simulation setting does not bring out the advantages of the variance-reduction methods since in expectation, our design will not induce extreme propensity scores.
\end{itemize}

\begin{table*}[htb]
    \centering
    \resizebox{\columnwidth}{!}{
    \begin{tabular}{c|ccc|ccc|ccc}
    \hline
        Dataset & \multicolumn{3}{c}{Glass} & \multicolumn{3}{c}{Ecoli} & \multicolumn{3}{c}{SatImage}  \\ \hline
        $\alpha$ & 0.2 & 0.4 & 0.6 & 0.2 & 0.4 & 0.6 & 0.2 & 0.4 & 0.6 \\  \hline
        {\textbf{nIPS}} & 0.0333 & 0.1081 & 0.1896  & 0.0634  & 0.0883  & 0.1077 &  0.0762 & 0.1049 & 0.1215 \\ 
        & (.0197) & (.0157) & (.0232) & (.0177) & (.0231) & (.0286) & (.0153) & (.0280) & (.0333) \\ \hline
        {\textbf{DR}} & 0.0173 & 0.0667  &  0.157  & 0.0427  & 0.0635  & 0.0757 & 0.0681 & 0.0754 & 0.0823 \\ 
        & (.0109) & (.0133) & (.0162) & (.0129) & (.0153) & (.0168) & (.0126) & (.0141) & (.0164) \\ \hline
    \end{tabular}
    }
    \caption{The prediction fluctuation $\hat{V} - \underline{\hat{V}}$ on testing data, for the nIPS and DR method.}
    \label{tab:appen-nIPS}
\end{table*}

In the real-world experiments where extreme propensity scores do occur (Figure 5 in the paper), we observe much improved performances from the truncated and normalized IPS methods (Appendix \ref{sec:real-world}). The original IPS method, on the other hand, experiences variance so large that putting it in the same plot with the other methods is impractical.

\section{The derivation of lower bound of APV}\label{sec:lb_apv}

We aim at deriving bounds that are agnostic to the specific form of $U$, for which we consider the most adversarial case where the added noise $U(a,x)$ depends on both $a$ and $x$.

The key insight is to reveal how the potential value $R(a)$ interacts implicitly with the perturbation $U(a,x)$, if the data were generated under $\pi_U$. The condition $e^{-\alpha}\leq \pi_0 / \pi_U \leq e^{\alpha}$ provides a special characterization of $U(a,x)$, since by applying the Bayes rule, it holds for any two actions $a$ and $a'$ that:
\begin{equation}
\label{eqn:bound-U}
    \exp(-2\alpha) \leq \frac{p(U(a,x) = u | A = a', X = x)}{p(U(a,x) = u | A = a, X = x)} \leq \exp(2\alpha).
\end{equation}
We defer the derivations to the appendix. Notice that it is a critical result to have $U(A,X)$ implicit bounded as above, since $R(A,X) \perp A \, \big|\, X, U(A,X)$ by assuming the feedback data is generated under $\pi_U$. Therefore, for any given $x$, we may conjecture that $R(a,x)$ do not deviate much from $R(a',x)$ for $a\neq a'$.

Specifically, we need to characterize $\Ebb\big[R(a',x) \,\big|\, A = a, X = x]$ as suggested by the formulation of APV, since we only get to observe $R(a,x)$ when the chosen action is $a$. 

The trick we employ here is to express the unobservable quantities as a function of the estimable objects, using the Radon–Nikodym derivative as a change-of-measure tool. Let $\diff P_{R(a,x) |A, X}$ be the density for random variable $R(a,x)\big|A, X$ with respect to the underlying measure that we presume to rely on $\pi_U$. Using the Radon–Nikodym derivative \citep{durrett2019probability}, for all $a, a' \in \mathcal{A}$, it holds that:
\begin{equation}
\label{eqn:R-N}
\begin{split}
    & \Ebb[R(a,x) \big| A = a', X = x] = \Ebb \Big[R(a,x)\frac{\diff P_{R(a,x)|A=a', X=x}(R(a,x))}{\diff P_{R(a,x)|A=a, X=x}(R(a,x))} \,\Big|\, A = a, X = x \Big].
\end{split}
\end{equation}
Since $\Ebb[R(a,x) \big| A = a, X = x]$ is estimable, we just need to bound the derivative ratio term to obtain the overall bound.
We denote the ratio by the shorthand: $D(r, x, a, a') := \diff P_{R(a,x)|A=a', X=x}(r) / \diff P_{R(a,x)|A=a, X=x}(r)$. 

Note that when no perturbation is added, i.e. $\alpha = 0$, we have $D(r_1, x, a, a') = D(r_2, x, a, a')$ for any two $r_1,r_2$, so the problem reduces to the standard off-policy setting. In the following lemma, we first prove the existence of $D$ under $\pi_U$, and show that under the constraints $e^{-\alpha} \leq \pi_{0}/\pi_u \leq e^{\alpha}$, the ratio $D(r_1, x, a, a')$ is still close to $D(r_2, x, a, a')$ depending on $\alpha$. 

\begin{lemma}
\label{lemma:RN_bound}
When the underlying measure $P_{R(a,x)|A, X}$ is subject to $\pi_U$ that satisfy (\ref{eqn:perturb-def}), the Radon–Nikodym derivative $D(r, x, a, a')$ exists for all $a, a' \in \mathcal{A}$ and $x \in \mathcal{X}$.
Furthermore, suppose that $\pi_U$ is a continuous function of any $U=u$, then the following inequality holds for almost every $a, a'$ and $x$:
\begin{equation}
\label{eqn:RN-derivative-bound}
\frac{D(r_1, x, a, a')}{D(r_2, x, a, a')} \leq \exp(2\alpha), \, \forall r_1, r_2.
\end{equation}
\end{lemma}

The proof is relegated to Appendix A.2.1 as below. The above lemma converts the uncertainty induced by $U(a,x)$ to a bound on the ratio $D$, which is a significant step towards our goal because it reveals an explicit constraint for bounding the potential value according to (\ref{eqn:R-N}). Also, it is straightforward to verify that the derivative ratio $D$ carries two constraints by itself:
\begin{equation}
\label{eqn:RN-constraint}
\begin{split}
    & D(r, x, a, a') \geq 0, \\
    & \Ebb \big[D(R(a,x), x, a, a') \,\big|\, A = a, X = x] = 1.
\end{split}
\end{equation}
Therefore, lower-bounding RM is now converted to a constraint optimization problem where we minimize (\ref{eqn:R-N}) under (\ref{eqn:RN-derivative-bound}) and (\ref{eqn:RN-constraint}), where $D$ is now treated as optimization variable: $\min_{D \text{ s.t. }(\ref{eqn:RN-derivative-bound}),(\ref{eqn:RN-constraint})}\Ebb \Big[R(a,x) D\big(R(a,x), x, a, a'\big) | A=a, X=x \Big].$

Applying the standard strong duality arguments, we find that the minimizer of the above problem can be obtained as the solution of an \emph{auxiliary} ERM, which is essentially a subproblem of ERM, which we describe below.

\begin{lemma}
\label{lemma:apv-eqv+}
The optimizer of: 
\[
\min_{D: \,D \text{ s.t. }(\ref{eqn:RN-derivative-bound}),(\ref{eqn:RN-constraint})}\Ebb \Big[R(a,x) D\big(R(a,x), x, a, a'\big) | A=a, X=x \Big],
\]
is the solution to the following problem:
\begin{equation}
\label{eqn:ERM}
    \min_{f_{a, a'}(\cdot)} \Ebb \Big[\ell _{\alpha} \Big(R(a,x) \, , \, f_{a, a'}(x)\Big) \,\Big|\, A=a, X=x \Big],
\end{equation}
where the loss function $\ell_{\alpha}$ is specified by:
\begin{equation}
\label{eqn:auxiliary-loss}
\begin{split}
    & \ell _{\alpha} \Big(R(a,x) \, , \, f_{a, a'}(x))\Big) = \Big\{ R(a,x)  - f_{a, a'}(x) \Big\}^2_{+} + e^{2\alpha}  \Big\{ R(a,x) - f_{a, a'}(x) \Big\}^2_{-}.
\end{split}
\end{equation}
Here, $\{\cdot\}_{+}$ and $\{\cdot\}_{-}$ gives the positive and negative part. Since the potential value $R(a,x)$ are observed given $A=a$, the above setting describes a subproblem of the ERM.
\end{lemma}

\begin{figure}[bht]
    \centering
    \includegraphics[width=0.7\linewidth]{images/lower-bound.png}
    \caption{The solutions of the auxiliary (subproblem of) ERM under different $\alpha$, for both the linear and non-linear data-generating mechanisms.}
    \label{fig:lower-bounds}
\end{figure}

The proof is given below in Appendix A.2.2. When solving the auxiliary ERM, we may choose $f_{a, a'}(\cdot)$ from a parametric model family (e.g. gradient boosting tree) that fits the context space. As a sanity check, when $\alpha=0$, i.e. no uncertainty is asserted, $\ell_{\alpha}$ is simply the least square loss so we reduce to the standard ERM setting. When $\alpha > 0$, the loss function puts more weight on the negative part and the optimized response surfaces are therefore "shifted" downward and thus produce the lower bounds. See Figure \ref{fig:lower-bounds} for visual illustrations of both the linear and non-linear cases.

\subsection{Proof of Lemma \ref{lemma:RN_bound}}

\begin{proof}
For notation simplicity, we omit the explicit dependency of $U$ on $a$ and $x$.

First, we show the existence of the ratio 
\[ 
D(r, x, a, \tilde{a}) = \frac{\emph{d} P_{R(a,x)|A=\tilde{a}, X=x}(r)}{\emph{d} P_{R(a,x)|A=a, X=x}(r)}
\]
for all $a, \tilde{a} \in \mathcal{A}$, $x \in \mathcal{X}$ and $r \in \mathbb{R}$, when the measure $P_{R(a)|A=a, X=x}(r)$ is induced by $\pi_U$ with $\exp(-\alpha) \leq \pi_0(a|x) / \pi_U(a|x) \leq \exp(\alpha)$. 
Assume that there exists at least one setting $u^* \in \mathcal{U}$ for every $x\in \mathcal{X}$ such that $\pi_{u^*}(a|x) = \pi(a|x)$.

For any $u \in \mathcal{U}$ such that $p(U=u | A=a) > 0$, by applying the Bayes rule, we have:
\begin{eqnarray}
\label{eqn:appen-1}
  p(U =u | A=a, X=x) = \frac{p(A=a | U =u, X=x) p(U = u | X=x)}{p(A=a | X=x)}
\end{eqnarray}
and 
\begin{eqnarray}
\label{eqn:appen-2}
  p(U = u | A=\tilde{a}, X=x) = \frac{p(A=\tilde{a} | U = u, X=x) p(U = u | X=x)}{p(A=\tilde{a} | X=x)}.
\end{eqnarray}
Combining (\ref{eqn:appen-1}) and (\ref{eqn:appen-2}) we get:
\begin{eqnarray*}
  \frac{p(U = u | A = \tilde{a}, X = x)}{p(U = u | A = a, X = x)} = \frac{p(A = \tilde{a} | U = u, X = x) p (A = a | X=x)}{p (A = \tilde{a} | X=x)p(A = a | U = u, X = x)}.
\end{eqnarray*}
By the definition of $\alpha$-degree uncertainty in Equation (2), we have $p (A = a | X=x) / p(A = a | U = u, X = x) = p (A = a | U = u^*, X=x) / p(A = a | U = u, X = x) \in [\exp(-\alpha), \exp(\alpha)]$ (where $u^*$ is the setting that recovers the original policy). Therefore, we conclude that for almost all $u \in \mathcal{U}$:
\begin{equation}
\label{eqn:appen-3}
    \exp(-2\alpha) \leq \frac{p(U = u | A = \tilde{a}, X = x)}{p(U = u | A = a, X = x)} \leq \exp(2\alpha).
\end{equation}
Hence, $p(U | A=\tilde{a}, X=x)$ is mutually absolute continuous with respect to $p(U | A=a, X=x)$, which allows us to further define the likelihood ratio:
\begin{equation}
    L(u, x, a, \tilde{a}) \equiv \frac{p(U=u |A=\tilde{a}, X=x )}{p(U=u |A=a, X=x )}.
\end{equation}
Similarly, by applying the Bayes rule, it can be shown that for almost all $u_1,u_2 \in \mathcal{U}$, we have:
\begin{equation}
\label{eqn:appen-eqn-five}
    \exp(-2\alpha) \leq \frac{L(u_1, x, a, \tilde{a})}{L(u_2, x, a, \tilde{a})} \leq \exp(2\alpha).
\end{equation}
Recall that the \emph{potential value} $R(A,X)$ is conditionally independent of action $A$ given context $X$ and $U$. Hence, for any set $S \in \mathbb{R}$:
\[ 
\Ebb_{\pi_U} \big[ \Ibf (R(a,x) \in S) \,|\, U, A=\tilde{a}, X=x \big] = \Ebb_{\pi_U} \big[\Ibf(R(a,x) \in S) \,|\, U, X=x \big].
\]
Then, by applying the tower property of conditional expectation, we obtain:
\begin{equation}
\label{eqn:appen-expectation-expand}
\begin{split}
 & \Ebb_{\pi_U} \big[\Ibf(R(a,x) \in S) \,|\, A=\tilde{a}, X=x \big]\\ 
 & = \Ebb_{\pi_U} \Big[ \Ebb_U \big[ \Ibf(R(a,x) \in S) \,|\, U, A=\tilde{a}, X=x \big]  \,\big|\, A=\tilde{a}, X=x \Big] \\ 
 & = \Ebb_{\pi_U} \Big[ \Ebb_U \big[ \Ibf(R(a,x) \in S) \,|\, U, X=x \big]  \,\big|\, A=\tilde{a}, X=x \Big]\\ 
 & = \Ebb_{\pi_U} \, \Ebb_{U} \Big[ \frac{\text{d}P_{U|A=\tilde{a},X=x}}{\text{d}P_{U|A=a,X=x}} \Ebb_U \big[ \Ibf(R(a,x) \in S) \,|\, U, X=x \big] \,\big|\, A=a, X=x \Big] \\ 
 & = \Ebb_{\pi_U} \, \Ebb_{U} \Big[L(U, x, a, \tilde{a}) \Ebb_U \big[ \Ibf(R(a,x) \in S) \,|\, U, X=x \big]   \,\big|\, A=a, X=x \Big] \\
 & = \Ebb_{\pi_U} \, \Ebb_{U} \big[L(U, x, a, \tilde{a})  \Ibf(R(a,x) \in S) \,\big|\, A=a, X=x \big].\\ 
\end{split}
\end{equation}
Notice that $L(u, x, a, \tilde{a}) \in [e^{-2\alpha}, e^{2\alpha}]$ almost everywhere by (\ref{eqn:appen-3}), so we have: 
\begin{equation}
    \exp (-2\alpha) \leq \frac{\Ebb_{\pi_U} \big[\Ibf(R(a,x) \in S) \,|\, A=\tilde{a}, X=x \big]}{\Ebb_{\pi_U} \big[\Ibf(R(a,x) \in S) \,|\, A=a, X=x \big]} \leq \exp(2\alpha).
\end{equation}
By the definition, $P_{R(a,x)|A=\tilde{a}, X=x}(S) = \Ebb_{\pi_U} \big[\Ibf(R(a,x) \in S) \,|\, A=\tilde{a}, X=x \big]$, and $P_{R(a,x)|A=a, X=x}(S) = \Ebb_{\pi_U} \big[\Ibf(R(a,x) \in S) \,|\, A=a, X=x \big]$, 
plus the previously shown result that $P_{R(a,x)|A=\tilde{a}, X=x}$ and $P_{R(a,x)|A=a, X=x}$ are mutually absolute continuous, we conclude the existence of the ratio $D(r, x, a, \tilde{a})$ for all $a, \tilde{a} \in \mathcal{A}$, $x \in \mathcal{X}$ and $r \in \mathbb{R}$.

In the next step, notice that the last line of (\ref{eqn:appen-expectation-expand}) can be further expanded into:
\begin{equation}
\label{eqn:appen-8}
    \Ebb_{\pi_U} \Big[ \Ebb_{U} \big[ L(U, x, a, \tilde{a}) \,|\, R(a,x), A=a,X=x  \big] \Ibf(R(a,x) \in S)   \,\big|\, A=a, X=x \Big].
\end{equation}
Then according to the Radon-Nikodym Theorem \cite{durrett2019probability}, together with the conclusion from (\ref{eqn:appen-8}) and (\ref{eqn:appen-expectation-expand}), we reach the following equality
\begin{equation*}
\begin{split}
    & \Ebb_{\pi_U} \big[\Ibf(R(a,x) \in S) \,|\, A=\tilde{a}, X=x \big] \\ 
    & = \Ebb_{\pi_U} \Big[ \Ebb_{U} \big[ L(U, x, a, \tilde{a}) \,|\, R(a,x), A=a,X=x  \big] \Ibf(R(a,x) \in S)   \,\big|\, A=a, X=x \Big].
\end{split}
\end{equation*}
Given the fact that 
\[ 
\Ebb_{\pi_U} \big[\Ibf(R(a,x) \in S) \,|\, A=\tilde{a}, X=x \big]  = \Ebb_{\pi_U} \big[D(R(a,x), x, a, \tilde{a})\Ibf(R(a,x) \in S) \,|\, A=a, X=x \big],
\]
we finally obtain: 
\begin{equation}
\label{eqn:appen-eqn-nine}
    D(R(a,x), x, a, \tilde{a}) = \Ebb_{U} \big[ L(U, x, a, \tilde{a}) \,|\, R(a,x), A=a,X=x  \big], \, \text{almost surely.}
\end{equation}
Under the assumption from the Lemma statement that  $\pi_u(a|x)$ is continuous with respect to all $u\in \mathcal{U}$, for any $\delta > 0$ we can always find a path $u(\delta)$ such that $L\big(u(\delta), x, a, \tilde{a}\big) < \inf_{u} L(u, x, a, \tilde{a}) + \delta$ and $\lim_{\delta \to 0} u(\delta) = u$, for each $u \in \mathcal{U}$. We then show the boundedness of $D(r, x, a, \tilde{a})$ in the following two steps. 

Firstly, according to (\ref{eqn:appen-eqn-nine}) and (\ref{eqn:appen-eqn-five}), for almost all $r_1 \in \mathbb{R}$, we have:
\begin{equation*}
\begin{split}
    D(r_1, x, a, \tilde{a}) & = \Ebb_{U} \big[ L(U, x, a, \tilde{a}) \,|\, R(a,x)=r_1, A=a,x=x  \big] \\ 
    & = L\big(u(\delta),x,a,\tilde{a}\big) \Ebb \Big[\frac{L(U,x,a,\tilde{a})}{L\big(u(\delta),x,a,\tilde{a}\big)}   \,\big|\, R(a,x)=r_1, A=a, x=x\Big] \\ 
    & \leq \exp(2\alpha) L\big(u(\delta),x,a,\tilde{a}\big).
\end{split}
\end{equation*}

Secondly, by the above construction, for all almost all $r_2 \in \mathbb{R}$ we have:
\[ 
D(r_2, x, a, \tilde{a}) \geq \inf_{u} L(u,x,a,\tilde{a}) > L(u(\delta), x, a, \tilde{a}) - \delta.
\]

Together, we obtain: 
\[D(r_1, x, a, \tilde{a}) \leq \exp(2\alpha)L(u(\delta), x, a, \tilde{a}) < \exp(2\alpha) \big( D(r_2, x, a, \tilde{a}) + \delta \big).\]
By taking $\delta \to 0$, we reach the conclusion that for almost all $r_1, r_2 \in \mathbb{R}$,
\[ 
\frac{D(r_1, x, a, \tilde{a})}{D(r_2, x, a, \tilde{a})} \leq \exp(2\alpha),
\]
which completes the proof.
\end{proof}

\subsection{Proof of Lemma \ref{lemma:apv-eqv+}}
The proof consists mostly of standard convex optimization arguments.
\begin{proof}
Together with the two regularity constraints on the Radon-Nikodym derivative ratio term $D$, as well as the results from Lemma 1, finding the upper bound (the lower bound can be obtained analogously) for the RM can be given by a constraint optimization problem:
\begin{equation}
\label{eqn:potential-outcome-upper}
\begin{split}
  & \sup_{D \in \mathcal{D}} \Ebb \big[R(a,x)D(R(a,x), x, a, \tilde{a}) \big| A = a, X = x \big], \\ 
    & \text{s.t.}\quad \Ebb \big[D(R(a,x), x, a, \tilde{a}) | A = a, X = x\big] = 1, \\
    &\quad \quad 0 \leq D(r_1, x, a, \tilde{a}) \leq e^{2\alpha} D(r_2, x, a, \tilde{a}), \, \forall r_1,r_2,
\end{split}
\end{equation}
where the set $\mathcal{D}$ gives all the possible Radon-Nikodym derivative ratios under the standard measurable conditions. 

Applying the standard functional duality argument, the above optimization problem becomes:
\begin{equation}
\label{eqn:lagrangian}
\begin{split}
  & \inf_{\lambda} \sup_{D \in \mathcal{D}} \Ebb \big[(R(a,x)-\lambda)D(R(a,x), x, a, \tilde{a}) \big| A = a, X = x \big] + \lambda, \\ 
    & \text{s.t.}\quad 0 \leq D(r_1, x, a, \tilde{a}) \leq e^{2\alpha} D(r_2, x, a, \tilde{a}), \, \forall r_1,r_2.
\end{split}
\end{equation}
Notice that the-above result is led by the generalized Slater's condition, because the constraint set is obviously convex, and there exists a feasible $D \in \mathcal{D}$ that satisfies the constraint, i.e. $D(R(a,x), x, a, \tilde{a})\equiv 1$ is in $\mathcal{D}$ with:
\begin{equation*}
\begin{split}
    & \Ebb\big[D(R(a,x), x, a, \tilde{a})| A = a, X = x\big] = 1, \\
    & D(R(a,x), x, a, \tilde{a}) \geq 0, \\
    & D(r_1, x, a, \tilde{a}) \leq e^{2\alpha} D(r_2, x, a, \tilde{a}).
\end{split}
\end{equation*}
Therefore, the strong duality result holds \cite{boyd2004convex}. It is straightforward to verify that the supremum of the inner problem in (\ref{eqn:lagrangian}) is attained by 
\[
D^*(R(a,x), x, a, \tilde{a}) = C\cdot 1[R(a,x) - \lambda \geq 0 ] + C\cdot e^{2\alpha} 1[R(a,x) - \lambda 
\leq 0 ],
\]
where $1[\cdot]$ is the indicator function and $C\geq 0$ can be any parameter induced by the dual problem. Plugging this $D^*$ into the original problem, we obtain:
\begin{equation}
\begin{split}
\inf_{\lambda} \sup_{C\geq 0} \Ebb \big[C\cdot1[R(a,x) - \lambda \geq 0] + C\cdot e^{2\alpha} 1[R(a,x) - \lambda < 0] \,\big|\, A = a, X = x \big] + \lambda.
\end{split}
\end{equation}
The above objective can be directly converted to:
\begin{equation}
\begin{split}
    & \qquad \qquad \qquad \qquad \qquad \qquad \inf \lambda \\
    & \text{s.t} \quad \Ebb \big[1[R(a,x) - \lambda \geq 0] + e^{2\alpha} 1[R(a,x) - \lambda \leq 0] \,\big|\, A = a, X = x \big] \geq 0.
\end{split}
\end{equation}
Since the function $q(\lambda) := 1[R(a,x) - \lambda \geq 0] + e^{2\alpha} 1[R(a,x) - \lambda < 0] $ is a non-decreasing monotone function of $\lambda$, the optimal solution $\lambda^*$ should be the only root of the $\Ebb \big[1[R(a,x) - \lambda \geq 0] + e^{2\alpha} 1[R(a,x) - \lambda \leq 0] \,\big|\, A = a, X = x \big]$ for each $a$ and $x$. Therefore, solving the-above optimization problem can be treated as finding the root such that:
\[\Ebb \big[1[R(a,x) - \lambda \geq 0] + e^{2\alpha} 1[R(a,x) - \lambda < 0] \,\big|\, A = a, X = x \big]=0.\] 

Note that $-q(\lambda)$ is the derivative of 
$Q(\lambda) := \frac{1}{2}\big[(R(a,x) - \lambda)^2_{+} +  e^{2\alpha}(R(a,x) - \lambda)^2_{-}  \big],$
so the intuition is that finding the root of $q(\lambda)$ is equivalent to finding the minimizer of $Q(\lambda)$. 

To be rigorous, by invoking the \emph{integrand theory} \cite{rockafellar2009variational}, which enables switching the integral operator with infimum under mild regularity conditions (the details of which we omit here to avoid unnecessary complications), the original problem is solved by:
\begin{equation}
    \underset{f(\cdot)}{\text{minimize}}\, \Ebb \big[ \big(R(a,x) - f(x)\big)^2_{+} +  e^{2\alpha}\big(R(a,x) - f(x)\big)^2_{-} \,\big|\, A=a, X=x \big],
\end{equation}
which gives the result in Lemma 2.
\end{proof}

\section{Proof of Theorem 1}\label{sec:proof_thm_main}
We first present a technical lemma.
\begin{lemma}[Ledoux-Talagrand contraction \cite{ledoux2013probability}]
\label{lemma:appen}
Let $f: \mathbb{R}_{+} \to \mathbb{R}_{+}$ be convex and increasing. Let $\phi_i : \mathbb{R}\to \mathbb{R}$ satisfy $\phi_i(0)=0$ and be Lipschitz with constant $L$, then for independent Rademacher random variables $\epsilon_i$, the following inequality holds for any $T \subset \mathbb{R}^n$:
\begin{equation}
\label{eqn:ledoux}
    \Ebb f \Big(\frac{1}{2} \sum_{t \in T} \Big| \sum_{i=1}^n \epsilon_i \phi_i(t_i) \Big| \Big) \leq \Ebb f \Big(\L \sum_{t \in T} \Big| \sum_{i=1}^n \epsilon_i t_i \Big| \Big).
\end{equation}
\end{lemma}

\begin{proof}
For our proof, we let 
\begin{equation}
\label{eqn:appen-def-S}
    S(\pi) = \sup_{\pi \in \mathcal{F}} \Big|\frac{1}{n}\sum_{i=1}^n \big( \sum_{a \in \mathcal{A}} \big\{ \pi(a|x_i) \hat{r}(a, x_i) + \Ibf(a=a_i) (r_i - \hat{r}(a, x_i)) \frac{\pi(a|x_i)}{\pi_0(a|x_i)}   \big\} \big) - V(\pi) \Big|,
\end{equation}
where we use an alternative expression for the DR (which is easy to verify). Define $w_0(a,x_i) = 1 / \pi_0(a|x_i)$. Then it is straightforward to show that for each $i$ in (\ref{eqn:appen-def-S}), we have:
\begin{equation*}
\begin{split}
    & \sum_{a \in \mathcal{A}} \big\{ \pi(a|x_i) \hat{r}(a, x_i) + \Ibf(a=a_i) (r_i - \hat{r}(a, x_i)) \frac{\pi(a|x_i)}{\pi_0(a|x_i)}   \big\}  \\ 
    & = \sum_{a \in \mathcal{A}} \big( \pi(a|x_i) \hat{r}(a, x_i) \big) + (r_i - \hat{r}(a_i, x_i)) \pi(a_i|x_i) w_0(a_i,x_i) \\
    & \in \big[-\frac{(q+1)M_{\alpha}+\overline{r}}{q}, \frac{(q+1)M_{\alpha}+\overline{r}}{q} \big].
\end{split}
\end{equation*}
Therefore, (\ref{eqn:appen-def-S}) has bounded difference with constant $2\frac{(q+1)M_{\alpha}+\overline{r}}{q}$. By the Hoeffding's inequality for bounded random variables, 
\begin{equation}
\label{eqn:appen-concentration-1}
    p\Big(S - \Ebb[S] \geq \delta \Big) \leq \exp \Big( \frac{- \delta^2 n q^2}{8((q+1)M_{\alpha} + \overline{r})^2}  \Big), \, \forall \delta > 0.
\end{equation}

Then we define the shorthand notation
\[ 
Q_i(a, x_i) =  \pi(a|x_i) \hat{r}(a, x_i) + \Ibf(a=a_i) \big(r_i - \hat{r}(a, x_i)\big) \frac{\pi(a|x_i)}{\pi_0(a|x_i)} .
\]
Notice that under $\pi_0$, DR is unbiased even if the RM estimator is misspecified, so $\Ebb \big[ \frac{1}{n}\sum_{i=1}^n \sum_{a\in\mathcal{A}} Q_i(a,x_i) \big] = V(\pi)$. Hence, let $x_i^{'}$ be an i.i.d copy of $x_i$ for all $i=1,\ldots,n$ and $\epsilon_{i,a}$ be the i.i.d Rademacher random variables, and we have
\begin{equation}
\label{eqn:appen-symmetrization}
\begin{split}
    \Ebb [S] & = \Ebb \Big[ \sup_{\pi \in \mathcal{F}} \Big| \frac{1}{n}\sum_{i=1}^n \sum_{a\in\mathcal{A}} Q_i(a, x_i) - \frac{1}{n}\sum_{i=1}^n \sum_{a\in \mathcal{A}} Q_i(a, x_i^{'}) \Big|   \Big] \\ 
    & = \Ebb \Big[ \sum_{\pi \in \mathcal{F}} \Big| \frac{1}{n}\sum_{i=1}^n \sum_{a\in \mathcal{A}} \epsilon_{i,a} \big(Q_i(a,x_i) - Q_i(a, x_i^{'}) \big) \Big|\Big] \\ 
    & \leq 2 \Ebb \sup_{\pi_\mathcal{F}} \Big| \frac{1}{n}\sum_{i=1}^n \sum_{a\in \mathcal{A}} \epsilon_{i,a} Q_i(a, x_i)   \Big|.
\end{split}
\end{equation}
It is straightforward to show that 
\[ 
\min \big\{ -M_{\alpha}, -\frac{-\overline{r}}{q} \big\} \leq Q_i(a,x_i) \leq \max \big\{ M_{\alpha}, -\frac{\overline{r}}{q} \big\}, 
\]
so by the Lemma \ref{lemma:appen}, we have:
\begin{equation}
\label{eqn:appen-talagrand}
    \Ebb \sup_{\pi_\mathcal{F}} \Big| \frac{1}{n}\sum_{i=1}^n \sum_{a\in\mathcal{A}} \epsilon_{i,a} Q_i(a, x_i) \Big| \leq 2\max \big\{ M_{\alpha}, -\frac{\overline{r}}{q} \big\} \Ebb \underbrace{\sup_{\pi \in \mathcal{F}} \big| \frac{1}{n}\sum_{i=1}^n \sum_{a\in \mathcal{A}} \epsilon_{i,a}\pi(a|x_i)  \big|}_{R_n(\mathcal{F})}.
\end{equation}

Notice that for the Rademacher complexity $R_n(\mathcal{F})$, each $\epsilon_{i,a} \pi(a|x_i)$ term is bounded by $[-1,1]$, so again by the bounded difference inequality, we can first obtain: 
\begin{equation}
\label{eqn:appen-concentration-2}
    p\Big( \Ebb R_n(\mathcal{F}) - R_n(\mathcal{F}) \geq \delta  \Big) \leq \exp \big( \frac{-\delta^2 n}{2} \big), \, \forall \delta > 0.
\end{equation}
Then, when $\pi = \pi_0$, we have:
\[  
\hat{V}(\pi_0) = \frac{1}{n}\sum_{i=1}^n\Big( \sum_{a\in\mathcal{A}} \big\{ \pi_0(a | x_i) \hat{r}(a,x_i) + (r_i - \hat{r}(a,x_i)\Ibf(a=a_i)  \big\}  \Big),
\]
and it is easy to show that each term inside the parentheses is bounded by \\ 
$ \big[ -\frac{(q+1)M_{\alpha} + q\overline{r}}{q},  \frac{(q+1)M_{\alpha} + q\overline{r}}{q} \big]$, so 
\begin{equation}
\label{eqn:appen-concentration-3}
    p\Big( \big| \hat{V}_{\text{DR}}(\pi_0) - V(\pi_0)  \big| \geq \delta  \Big) \leq \exp \Big( \frac{-\delta^2 n q^2}{8((q+1)M_{\alpha} + q\overline{r})^2} \Big), \, \forall \delta > 0.
\end{equation}

Combining (\ref{eqn:appen-concentration-1}), (\ref{eqn:appen-symmetrization}), (\ref{eqn:appen-talagrand}), (\ref{eqn:appen-concentration-2}) and (\ref{eqn:appen-concentration-3}), we have that for $\forall \delta_1, \delta_2, \delta_3 > 0$, and for $\forall \pi \in \mathcal{F}$,
\begin{equation}
\label{eqn:appen-concentration-4}
\begin{split}
    & p\Big( V(\pi) - V(\pi_0) \geq \hat{V}_{\text{DR}}(\pi) - \hat{V}_{\text{DR}}(\pi_0) -\delta_3 - \delta_2 - \delta_1 - 2\max\big\{ M_{\alpha}, \frac{\overline{r}}{q} \big\} R_n(\mathcal{F}) \Big) \\ 
    & \geq 1 - \exp \Big(\frac{- \delta_1^2 n q^2}{8((q+1)M_{\alpha} + \overline{r})^2}\Big) - \exp \Big( \frac{-\delta_2^2 n}{8 \big( \max \{M_{\alpha}^2, \frac{\overline{r}^2}{q^2} \} \big)} \Big) \\
    & \qquad \qquad \qquad \qquad \qquad  - \exp \Big( \frac{-\delta_1^2 n q^2}{8((q+1)M_{\alpha} + q\overline{r})^2} \Big)
\end{split}
\end{equation}
Finally, it is straightforward to verify that when $0 < q < \frac{1}{2}$, by setting $\delta_1 = \delta_2 = \delta_3$, the RHS of (\ref{eqn:appen-concentration-4}) is bounded below by $1-3\exp \Big(\frac{- \delta_1^2 n q^2}{8((q+1)M_{\alpha} + q\overline{r})^2}\Big)$. 

It is straightforward to see that the relation in the LHS of (\ref{eqn:appen-concentration-4}) still hold if we replace $\hat{V}_{\text{DR}}(\pi)$ by its lower bound. Hence, for any $\epsilon > 0$, we let $\delta_1 = 2\big( \frac{q+1}{q}M_{\alpha} + \overline{r}\big) \sqrt{\frac{2\log (3/\epsilon)}{n}}$ and obtain the final result of Theorem 1.
\end{proof}

\section{Explanation of Minorize-maximization procedure for DR}
In the reward-maximization setting where the candidate policy is parameterized as $\pi_{\theta} \in \mathcal{F} := \{\pi_{\theta} \, |\, \theta \in \Theta\}$, the minimax training objective becomes:
\begin{equation*}
    \underset{\pi_{\theta} \in \mathcal{F}}{\text{maximize}} \min_{\pi_U: e^{-\alpha} \leq \pi_0/\pi_{U} \leq e^{\alpha}} \hat{V}(\pi_{\theta}; \pi_U),
\end{equation*}
Notice that the above objective function can be nonconcave-nonconvex in general, so the Minimax Theorem \citep{sion1958general} does not apply, and the order of $\min$ and $\max$ can not be switched. Therefore, the lower-bounding methods from the previous section, which holds for \emph{any} given $\pi_{\theta}$, play an important role in finding the approximate solution for the minimax problem.
We illustrate how to plug in the lower bounds using the DR estimator as an example, since it includes both the IPS and RM estimator as special cases. 
We use $\hat{V}_{DR}(\pi_{\theta};p,r)$ to denote the objective function for lower-bounding the DR estimator, where $p:=p(a|x)$ and $r:=r(a,x)$ are the optimization policy and reward model, and $\Pi(\pi_0;\alpha)$ and $\mathcal{R}(\pi_0;\alpha)$ give the corresponding constraint sets. It then holds that:
\begin{equation}
\label{eqn:minimax-DR}
\begin{split}
    & \max_{\pi_{\theta} \in \mathcal{F}}\, \min_{e^{-\alpha} \leq \pi_0/\pi_{U} \leq e^{\alpha}} \hat{V}_{\text{DR}}(\pi_{\theta}; \pi_U) \\
    & \geq \max_{\pi_{\theta} \in \mathcal{F}}\, \inf_{p \in \Pi(\pi_0;\alpha), r \in \mathcal{R}(\pi_0;\alpha)} \hat{V}_{DR}(\pi_{\theta};p,r) \\
    & \geq \max_{\pi_{\theta} \in \mathcal{F}} \underline{\hat{V}_{\text{DR}}(\pi_{\theta})}.
\end{split}
\end{equation}
Therefore, we can instead consider optimizing the lower bound of the minimax problem, which leads to a strict non-decreasing optimization path and can be implemented efficiently. In particular, we adopt the minorize-maximization approach described in Algorithm 1. We also provide a sketched visual illustration in Figure \ref{fig:optimization} to show the working mechanism. Notice that the constraint sets $\mathcal{R}(\pi_0;\alpha)$ and $\Pi(\pi_0;\alpha)$ do not depend on $\pi_{\theta}$, so we only need to compute them once at the beginning of the optimization. 

\begin{figure}[bht]
    \centering
    \includegraphics[width=0.6\linewidth]{images/optimization.png}
    \caption{A sketched visual illustration for the minorize-maximization optimization procedure.}
    \label{fig:optimization}
\end{figure}













\section{Supplement Material for Experiments and Results}

We provide the detailed experiment descriptions and complete numerical results in this part of the paper.

\subsection{Simulation settings}

\textbf{Training, validation and testing.}

For all the benchmark datasets, we do the train-validation-test split by 56\%-24\%-20\%. We split the data this way because some of the datasets in the UCI repository\footnote{https://archive.ics.uci.edu/} already has a train-test split so we adopt the provided setting. 

We first point out that the purpose of our our train-validation-test split are two-folds:
\begin{itemize}
    \item a regular train-validation-test split for off-policy optimization;
    \item an extra train-validation split within the regular training data for computing the surrogate functions for RM (via the auxiliary ERM).
\end{itemize}

To obtain the upper and lower bound functional for the RM estimator, we conduct the auxiliary risk minimization procedure described in Section 3.1. We use the \emph{gradient boosting regression tree} \cite{chen2016xgboost} as the parametric prediction function. 
Now there is an extra via in training surrogate functions $\hat{f}$ and $\hat{g}$ using the auxiliary ERM, because their validation should not be carried out on the usual validation set to avoid information leak. To tune the hyper-parameters of the surrogate functions for the RM, we further split the training data into RM-specific training and validation set by 80\%-20\% (Figure \ref{fig:appen-split}). The original validation dataset is kept untouched in this step.

For training gradient boosting regression tree, we choose the \emph{learning rate} from $\{0.01, 0.1, 0.3, 0.5\}$ and the \emph{maximum depth} from $\{2,3,4\}$ according to the risk on the RM-specific validation set. The other parameters are kept as default in Xgboost.

\begin{figure}[hbt]
    \centering
    \includegraphics[width=0.5\linewidth]{./images/split.png}
    \caption{Training-validation-testing split}
    \label{fig:appen-split}
\end{figure}

The parametric off-policy optimization for $\pi_{\theta}$ is then conducted on the standard training and validation set. 


\textbf{{The multi-layer perceptron as parametric policy class}}

We consider the multi-layer perceptron (MLP) as our parametric policy class. Since the number of features is quite small for the benchmark datasets, we decide to use the \emph{two-layer} MLP with ReLU as activation function, which is capable of capturing non-linear feature interaction effects. 

We treat the number of hidden factors in the intermediate layer as the tuning parameter and select it from $\{3,5,7,9,11\}$. We do not consider dropout or regularization since the number of features is small in the simulations. We use the Adam optimizer with fixed a learning rate.

\textbf{{Computation}}

We leverage the open-source package \emph{Xgboost}\footnote{https://github.com/dmlc/xgboost} for training the {gradient boosting regression trees} to obtain the surrogate functions of RM. We choose \emph{Xgboost} for the specific reason that it allows us to customize the loss function, so we can directly work with the auxiliary ERM. 

The two-layer MLP for our off-policy optimization is implemented with the auto-differentiation framework \texttt{PyTorch}\footnote{https://pytorch.org/}. All the models are trained on a 16-Gb Linux-system machine with one Tesla V100 GPU.

As for computing the lower bounds for IPS and DR according to Equation (13) and (14), we apply off-the-shelf solver since the explicit expressions for the updates are explicit. 

All the datasets and executable implementation code are provided as a part of the supplementary material.

\subsection{Additional numerical results for simulation}

We discuss how the DR objective value, given by the $\hat{V}_{\text{DR}}(\pi_{\theta^{\text{new}}})$ in Algorithm 1, as well as the \emph{oracle} value of the candidate policy changes on the testing data as the training progresses.

\textbf{{The oracle value}}

Recall from the simulation setting that we have access to the following \emph{oracle} value that reveals the expected reward of the candidate policy:
\[\Ebb \Big [\sum_{a \in \mathcal{A}}r(a) \pi_{\theta}(a|x) \Big],\] 
where the expectation is taken with respect to the empirical data distribution.
This is because the feedback data is constructed from the cost-sensitive classification setting, so we have access to the full set of rewards that enables the computation of the oracle value.

\textbf{{The impact of $\alpha$ on training}}

For each benchmark dataset, we show the training progress of DR objective value ($\hat{V}_{\text{DR}}(\pi_{\theta})$ in Algorithm 1) evaluated on the testing data, in the first column of Figure \ref{fig:appen-complete}. 

In the second to the fourth column, we compare the DR objective values and the oracle values during the training process, also on the testing data, under different $\alpha$. 

\begin{figure}
    \centering
    \includegraphics[width=0.77\linewidth]{./images/full_glass.png}
    \includegraphics[width=0.77\linewidth]{./images/full_ecoli.png}
    \includegraphics[width=0.77\linewidth]{./images/full_letter.png}
    \includegraphics[width=0.77\linewidth]{./images/full_vehicle.png}
    \includegraphics[width=0.77\linewidth]{./images/full_pendigits.png}
    \includegraphics[width=0.77\linewidth]{./images/full_satimage.png}
    \caption{The training progress for the proposed off-policy optimization under different $\alpha$.}
    \label{fig:appen-complete}
\end{figure}

We make the following conclusions from the numerical results.
\begin{itemize}
    \item Firstly, the observation that a larger $\alpha$ disturbs the off-policy optimization is consistent across all datasets. When the training stabilizes, the DR objective value under larger $\alpha$ is uniformly smaller, which is obvious by looking at the first column of Figure \ref{fig:appen-complete}. 
    
    \item Secondly, when $\alpha$ is moderate, e.g. $\alpha=0.3$ (the second column of Figure \ref{fig:appen-complete}), the oracle value is improving consistently during the training process, which demonstrates the effectiveness of the proposed minimization-maximization optimization. 
    
    \item Thirdly, when $\alpha$ gets very large, making improvements in terms of the oracle value becomes impossible. Even though the DR objective value is still increasing on the testing data, the oracle value remains unchanged (in particular, the \texttt{Glass} dataset with $\alpha=0.6$ and $\alpha=0.9$, the \texttt{Letter} dataset with $\alpha=0.9$ in Figure \ref{fig:appen-complete}).
    
    \item Finally, the tradeoff between the degree of uncertainty and optimization accuracy (effectiveness) is different for each dataset. For instance, the tradeoff on the \texttt{Glass} dataset take place for $\alpha \leq 0.6$ (the first row in Figure \ref{fig:appen-complete}), but on the \texttt{Ecoli} and \texttt{PenDigits} datasets, we may expect $\alpha \geq 0.9$ (the second and the fifth row of Figure \ref{fig:appen-complete}).
\end{itemize}

In a nutshell, the proposed minimization-maximization algorithm effectively handles off-policy optimization under moderate $\alpha$. When $\alpha$ get too large, we bring too much fluctuation to the training, which may eventually cause an overlarge gap between the DR maximization objective (the purple, blue and green curves in Figure 2) and the true value (the red curve in Figure 2), making it impractical to achieve further improvements. 

The above numerical results further support our theoretical discussions following Theorem 1. Here, we show from a practical perspective on how the different choices of $\alpha$ could affect the training progress. Notice that the impact of $\alpha$ depends on the dataset, so it intrinsically reveals a data-specific property in terms of making robust prediction. Therefore, it is always recommended to apply the prior knowledge or carrying out pilot experiments to determine the suitable $\alpha$.

\subsection{Real-world experiments}
\label{sec:real-world}

The motivation of our work stems from the demand of making robust prediction during off-policy learning for real-world problems. For our experiments, we design off-policy learning for one of the largest e-commerce platforms in the U.S, using both the history data and the online production environment.

\textbf{Problem setting and experiment descriptions}

The daily personalized homepage recommendation of the platform relies on the contextual bandit, and the feedback data is generated according to the description in Section 3 of the paper. 

The (frequent) customers of the platform are tagged with \textit{personas} such as pet lover, sport enthusiast, busy family, etc. The actions $a$ corresponds to the candidate products, which is of the magnitude of several hundreds designated by the business team. Each candidate product has an \emph{affinity score} with the customer personas, e.g. $x_{\text{product, persona}}$. The context is given by concatenating all the persona-product affinity scores, i.e. $\vec{x} = \big(x_{1,1}, \ldots, x_{n_1, k} \big)$, where $n_1$, $k$ are the number of personas and number of products (actions). The data-collection workflow can be described as below:
\begin{itemize}
    \item the front-end sends an request when a customer lands on the homepage;
    \item the feature vector $\vec{x}$ is acquired from the megacache according to the personas of the user. Here, $x_{i,j}, j=1,\ldots,k$ is treated as missing value if the customer is not tagged with feature $k$;
    \item the top-ten recommendation products are sampled without replacement according to $\pi_{\text{log}}(a|\vec{x})$.
\end{itemize}
On our e-commerce platform, the personas are updated on a daily basis. This is equivalent to say that the policy value is unchanged for each customer throughout the day. So in practice, the policy values $\pi_{\text{log}}(a|\vec{x})$ can be stored in the cache after the first-time computation, and need not be re-computed until the personas are updated. Also, even though the top-ten recommendations are made mutually exclusive (the same item cannot appear twice), we still treat them as being sampled independently.

Here, the $\pi_{\text{log}}$ is finalized according to the following process:
\begin{itemize}
    \item the data science team proposes a design policy $\pi_{\text{design}}$ according to the off-policy learning, and push the model to the mega cache for online serving;
    \item the business team decides how to manage the traffic at peak time and make adjustments under real-time supply-demand availability, which leads to the logging policy $\pi_{\text{log}}$.
\end{itemize}
We provide a sketched diagram in Figure \ref{fig:diagram} to illustrate on the workflow. 

\begin{figure}[htb]
    \centering
    \includegraphics[width=0.6\linewidth]{./images/diagram.png}
    \caption{A sketched diagram of the workflow on the e-commerce platform. Here, we omitted the irrelevant parts and only show the components that are relevant to the off-policy learning.}
    \label{fig:diagram}
\end{figure}

For the offline experiments, we collect a feedback data that consists of $\sim$ 200 product (actions) and $\sim$200,000 samples. The history logging policy $\pi_0$ is revealed, where the extreme propensity score values do occur, e.g. $\pi_0(a|x) \leq 1e^{-4}$. By a preliminary experiment, we find using $q=0.01$ as the cutoff gives a reasonable bias-variance tradeoff. 

The full feedback data is split into training, validation and testing set by 70\%, 15\%, 15\%, according to the timestamp in a chronological order. We use the \emph{gradient boosting regression tree} as parametric policy, and apply the minimization-maximization procedure in Algorithm 1 for training. The turning parameters are selected according to the objective value $\hat{V}(\pi_{\theta})$ from Algorithm 1 on the validation data. In particular, we treat the learning rate, number of estimators and maximum tree depth as tuning parameters.

\textbf{Complete numerical results of real-world experiments}

We experiment with $\alpha \in \{0.01, 0.05, 0.1, 0.3\}$ for off-policy learning on the real-world dataset. The outcome is provided in Figure \ref{fig:real-dat-alpha}, where we compare the \textbf{standard} off-policy learning and the proposed \textbf{robust} off-policy learning approach. 

\begin{figure}[htb]
    \centering
    \includegraphics[width=\linewidth]{./images/real-dat-alpha.png}
    \caption{The real-world off-policy learning outcome (examined on the testing data) under different uncertainty degree $\alpha$.}
    \label{fig:real-dat-alpha}
\end{figure}

As we mentioned before, the IPS method suffers from stability issue, so we do not consider it here. For the truncated IPS method, we use $q=0.005$ as the cutoff on the propensity score; and for the normalized IPS method, we use the workaround proposed in Appendix \ref{sec:variant-ips} to conduct the proposed off-policy learning. For the DR method, we also use $q=0.01$ to truncate the propensity score to avoid large variances. The standard approach and the proposed approach are trained and validated on the same datasets using the same settings.

From the results in Figure \ref{fig:real-dat-alpha}, we observe a similar pattern as in the simulation where the proposed prediction-robust approach provide more significant improvements under larger $\alpha$. When $\alpha$ is relatively small, the performance is almost the same as the standard approach. It demonstrates the effectiveness of the proposed approach using the actual feedback data (rather than generated) when uncertainty exists during the testing time. Also, the DR method gives the best outcome uniformly, while the nIPS method is still outperformed by the tIPS, probably due to the inefficiencies of the workaround approach that we analyzed in Section \ref{sec:variant-ips}.



\bibliographystyle{apalike}
\bibliography{references}

\vfill